\documentclass{article}
\usepackage{amsthm,amssymb,amsmath,amsfonts}
\usepackage{lmodern}
\usepackage{adjustbox}

\usepackage{algorithm}
\usepackage{algorithmic}

\usepackage{tikz}


\allowdisplaybreaks

\let\originalleft\left
\let\originalright\right
\renewcommand{\left}{\mathopen{}\mathclose\bgroup\originalleft}
\renewcommand{\right}{\aftergroup\egroup\originalright}

\newtheorem{assumption}{Assumption}
 \newtheorem{lemma}{Lemma}
 \newtheorem{theorem}{Theorem}
 \newtheorem*{theorem*}{Theorem}

 \newtheorem{proposition}{Proposition}

\definecolor{pear}{HTML}{c60404}
\definecolor{lightblue}{HTML}{2980B9}

\newcommand{\set}[1]{\left\{#1\right\}}
\newcommand{\pa}[1]{\left(#1\right)}
\newcommand{\abs}[1]{\left|#1\right|}
\newcommand{\norm}[1]{\left\|#1\right\|}
\newcommand{\imp}{\Rightarrow}

\newcommand{\arms}{n}

\newcommand{\sign}{\mathrm{sign}}

\DeclareMathOperator*{\argmax}{arg\,max}

\newcommand{\ceil}[1]{\left\lceil#1\right\rceil}
\newcommand{\floor}[1]{\left\lfloor#1\right\rfloor}

\newcommand{\transpose}{^\mathsf{\scriptscriptstyle T}}


\newcommand{\bmu}{{\boldsymbol \mu}}
\newcommand{\btheta}{{\boldsymbol \theta}}
\newcommand{\bdelta}{{\boldsymbol \delta}}
\newcommand{\blambda}{{\boldsymbol \lambda}}

\newcommand{\bheta}{{\boldsymbol \eta}}
\newcommand{\bkappa}{{\boldsymbol \kappa}}

\newcommand{\bGamma}{{\boldsymbol \Gamma}}

\newcommand{\ba}{{\bf a}}

\newcommand{\bb}{{\bf b}}

\newcommand{\bC}{{\bf C}}
\newcommand{\bD}{{\bf D}}

\newcommand{\be}{{\bf e}}

\newcommand{\bk}{{\bf k}}

\newcommand{\bs}{{\bf s}}

\newcommand{\bu}{{\bf u}}
\newcommand{\bU}{{\bf U}}

\newcommand{\bV}{{\bf V}}

\newcommand{\bx}{{\bf x}}
\newcommand{\bX}{{\bf X}}
\newcommand{\bY}{{\bf Y}}

\newcommand{\R}{\mathbb{R}}
\newcommand{\N}{\mathbb{N}}

\newcommand{\cA}{\mathcal{A}}

\newcommand{\cF}{\mathcal{F}}

\newcommand{\cH}{\mathcal{H}}

\newcommand{\cK}{\mathcal{K}}

\newcommand{\cN}{\mathcal{N}}
\newcommand{\cO}{\mathcal{O}}
\newcommand{\cP}{\mathcal{P}}

\newcommand{\cU}{\mathcal{U}}

\newcommand{\EE}[1]{\mathbb{E}\left[#1\right]}

\newcommand{\EEc}[2]{\mathbb{E}\left[\left.#1\right|#2\right]}

\newcommand{\PP}[1]{\mathbb{P}\left[#1\right]}

\newcommand{\Prb}{\mathbb{P}}

\newcommand{\PPc}[2]{\mathbb{P}\left[\left.#1\right|#2\right]}

\newcommand{\II}[1]{\mathbb{I}{\left\{#1\right\}}}

\newcommand{\Bernoulli}{\mathrm{Bernoulli}}

\newcommand{\mean}[1]{\bar\mu_{#1}}

\newcommand{\fA}{\mathfrak{A}}
\newcommand{\fB}{\mathfrak{B}}
\newcommand{\fC}{\mathfrak{C}}
\newcommand{\fD}{\mathfrak{D}}

\newcommand{\fJ}{\mathfrak{J}}

\newcommand{\fM}{\mathfrak{M}}

\newcommand{\fR}{\mathfrak{R}}
\newcommand{\fS}{\mathfrak{S}}
\newcommand{\fT}{\mathfrak{T}}

\newcommand{\fZ}{\mathfrak{Z}}

\newcommand{\eps}{\varepsilon}
\renewcommand{\tilde}{\widetilde}
\renewcommand{\bar}{\overline}

\newcommand{\counter}[1]{N_{#1}}
\newcommand{\ora}{\mathrm{Oracle}}

\makeatletter
\newcommand{\numrel}[2]{
  \refstepcounter{equation}
  \ltx@label{#2}
  \overset{(\theequation)}{#1}
}
\newcommand{\numterm}[1]{\refstepcounter{equation} \ltx@label{#1} (\theequation)}
\makeatother

\newcounter{mylabelcounter}

\makeatletter
\newcommand{\labelText}[2]{%
#1\refstepcounter{mylabelcounter}%
\immediate\write\@auxout{%
  \string\newlabel{#2}{{1}{\thepage}{{\unexpanded{#1}}}{mylabelcounter.\number\value{mylabelcounter}}{}}%
}%
}
\makeatother
\usepackage[colorinlistoftodos, textwidth=22mm, shadow]{todonotes}



     \usepackage[final]{neurips_2020}


\bibliographystyle{abbrvnat}
\usepackage{hyperref}
\hypersetup{colorlinks,linkcolor={red},citecolor={blue},urlcolor={brown}} 

\usepackage[utf8]{inputenc} 
\usepackage[T1]{fontenc}    
\usepackage{hyperref}       
\usepackage{url}            
\usepackage{booktabs}       
\usepackage{amsfonts}       
\usepackage{nicefrac}       
\usepackage{microtype}      

\usepackage{changepage}     

\title{Statistical Efficiency of Thompson Sampling for Combinatorial Semi-Bandits}

%

\author{%
  Pierre Perrault \\
  Inria  Lille --- ENS Paris-Saclay\\
  \texttt{pierre.perrault@inria.fr}\\
   \And
  Etienne Boursier\\
  ENS Paris-Saclay \\
   \texttt{etienne.boursier1@gmail.com} \\
   \And
  Vianney Perchet \\
    ENSAE --- Criteo AI Lab \\
   \texttt{vianney.perchet@normalesup.org} 
      \And
  Michal Valko\\
   DeepMind Paris --- Inria  Lille\\
  \texttt{valkom@deepmind.com} 
}

\begin{document}

\maketitle
\begin{abstract}
We investigate stochastic combinatorial multi-armed bandit with semi-bandit feedback (CMAB). In CMAB, the question of the existence of an efficient policy with an optimal asymptotic regret (up to a factor poly-logarithmic with the action size) is still open for many families of distributions, including mutually independent outcomes, and more generally the multivariate \emph{sub-Gaussian} family.  
We propose to answer the above question for these two families by analyzing variants of the \emph{Combinatorial Thompson Sampling} policy (\textsc{cts}). For mutually independent outcomes in $[0,1]$, we propose a tight analysis of \textsc{cts} using Beta priors.
We then look at the more general setting of multivariate sub-Gaussian outcomes and propose a tight analysis of \textsc{cts} using Gaussian priors. This last result gives us an alternative to the \emph{Efficient Sampling for Combinatorial Bandit} policy (\textsc{escb}), which, although optimal, is not computationally efficient.
\end{abstract}
\section{Introduction}
Stochastic multi-armed bandits (MAB) \cite{robbins1952some,berry1985bandit,lai1985asymptotically} are decision-making frameworks in which a learning \emph{agent} acts sequentially in an uncertain environment. At every round $t\in \N^*$, the agent must select one arm from a pool of $n$ arms, denoted by $[n]\triangleq \set{1,\dots,n}$, using a learning \emph{policy} based on the feedback collected
from the previous rounds. Then it obtains as feedback a reward (also called \emph{outcome}) $X_{i,t}\in \R$ --- a random variable sampled from $\Prb_{X_i}$, independently from previous rounds --- where $i$ is the selected arm and  $\Prb_{X_i}$ is a probability distribution --- unknown to the agent --- of mean $\mu_i^*$. The  goal for the agent  is  to maximize the cumulative  reward  over  a  total  of $T$ rounds  ($T$ may be  unknown\footnote{We recall here the fact that in MAB, whether  the horizon $T$ is known or not is not really relevant as algorithms can   be easily adapted \citep{Anytime}.}). The  performance  metric of a policy  is  the  regret,  i.e.,  the expectation of the  difference over $T$ rounds of the cumulative reward between the policy that always picked the arm with the highest expected reward and the learning policy.  MAB models the classical dilemma between exploration and exploitation, i.e.,  whether to continue exploring arms to obtain more information (and thus strengthen the confidence in the estimates of the distributions $\Prb_{X_i}$), or to use the information gathered by playing the best arm according to the observations so far.

In this paper, we study stochastic combinatorial multi-armed bandit (CMAB) \citep{cesa-bianchi2012combinatorial}, which is an extension of MAB where the agent selects a \emph{super arm} (or \emph{action}) $A_t\in \cA\subset \cP([n])$ at each round $t$. The set $\cA$ is the \emph{action space}, defined as a collection of subsets of the (base) arms. 
The kind of reward and feedback varies depending on the problem at hand. We consider the \emph{semi-bandit} setting, where the feedback includes the outcomes of all base arms in the played super arm. Formally, 
the agent observes\footnote{Henceforth, we typeset vectors in bold and indicate components with indices, i.e., $\ba=(a_i)_{i\in [n]} \in \R^n$. We also let $\be_i$ be the $i^{th}$ canonical unit
vector of $\R^n$, and define the incidence vector of any subset $A\subset [n]$ as \(\be_A\triangleq \sum_{i\in A}\be_i.\) We denote by $\ba\odot\bb\triangleq (a_ib_i)$ the Hadamard product of two vectors $\ba$ and $\bb$.
} $\bX_t\odot \be_{A_t}\triangleq \pa{X_{i,t}\II{i\in A_t}}_{i\in [n]}$ and the reward, given the choice of $A_t$, is a function of $\bmu^*\odot \be_{A_t}$ (traditionally, the reward is linear and equal to $\be_{A_t}\transpose \bmu^* $, but our analysis goes beyond this setting).
In recent years, CMAB
has attracted a lot of interest (see e.g. \citet{gai2012combinatorial,chen13a,Chen2015combinatorial,kveton2015tight,wang2017improving,pmlr-v89-perrault19a,perraultbudgeted2020}), particularly due to its wide applications in network routing, online advertising, recommender system, influence marketing, etc.

In CMAB, the whole joint distribution of the vector of outcomes $\bX$ matters, contrary to standard MAB where only the  marginals are sufficient to characterize a problem instance. For example,
 the following two extreme problem instances are distinct within the CMAB framework:
 \begin{itemize}
  \item[\color{red}\labelText{$(i)$}{indep_setting}] Each $\Prb_{X_i}$ is sub-Gaussian and the arm distributions are mutually independent, i.e., $\Prb_{\bX}=\otimes_{i\in [n]}\Prb_{X_i}$.
     \item[\color{red}\labelText{$(ii)$}{arb_cor_setting}]  Each $\Prb_{X_i}$ is sub-Gaussian but the stochastic dependencies between the arm distributions are ``worst case'': the performance metric is the supremum of the regret over all possible dependencies between the marginals.
 \end{itemize}
 Those two settings are indeed different as two different lower bounds on the asymptotic (in $T$) regret can be derived. In particular, the regret scales as $\Omega\pa{   n\log(T)/\Delta}$ for the setting \nameref{indep_setting}, and as $\Omega\pa{ m  n\log(T)/\Delta}$ for \nameref{arb_cor_setting},
 where $\Delta$ is the minimum gap in the expected reward between an optimal super arm and any non-optimal super arm, and where
 $m\triangleq\max_{A\in \cA}\abs{A}$.

 Many CMAB policies are based on the \emph{Upper Confidence Bound} (UCB) approach, extending the classical \textsc{ucb} policy  \citep{auer2002finite} from MAB to CMAB. This type of approach uses an optimistic estimate $\bmu_t$ of $\bmu^*$ (i.e., for which the reward function is overestimated), lying in a well-chosen confidence region.
 For setting \nameref{arb_cor_setting}, there exist UCB-style policies that match the lower bound mentioned above. 
 An example of such policy is \emph{Combinatorial Upper Confidence Bound} (\textsc{cucb}) \citep{chen13a,kveton2015tight}, that uses a Cartesian product of the individual confidence intervals of each arm as a confidence region. For setting \nameref{indep_setting},
 \citet{combes2015combinatorial} provided the UCB-style policy  \emph{Efficient Sampling for Combinatorial Bandit} (\textsc{escb}), that uses the assumption of mutual independence between arm distributions in order to build a tighter ellipsoidal confidence region around the empirical mean, which helps to better restrict the exploration. \citet{Degenne2016} gave the following generalization of setting \nameref{indep_setting}:
 \begin{itemize}
     \item[\color{red}\labelText{$(iii)$}{subgau_setting}] The joint probability $\Prb_{\bX}$ is $\bC$-sub-Gaussian, for a positive semi-definite matrix $\bC\succeq 0$, i.e.,   $\EE{e^{\blambda\transpose \pa{\bX-\bmu^*}}}\leq e^{\blambda\transpose\bC\blambda/2}$, for all $\blambda\in \R^n$.
 \end{itemize}
 In this case,  they  provided the policy \textsc{ols-ucb}, leveraging this additional assumption and such that it essentially reduces to \textsc{escb} in the specific case of  diagonal matrix $\bC$ with a regret bound of $\cO\pa{ \log^2(m)  n\log(T)/\Delta}$) (so it matches the above lower bound up to a polylogarithmic factor in $m$). We refer the reader to Table~\ref{table:regretofallalgos} for an overview of the above regret (lower) bounds.
 
 \begin{table}
  \caption{Factor in front of $n\log(T)/\Delta$ in the regret bound ($\cO\pa{\cdot}$ for upper bounds), computationally inefficient policies are printed with a subscript $*$, setting \nameref{subgau_setting} is for $\bC$ diagonal, \textsc{clip cts-gaussian} is for linear reward functions, and with only $\blambda\in\R_+^n$ in \nameref{subgau_setting}. Our results are printed in bold, see Theorem~\ref{thm:tsbeta}, Theorem~\ref{thm:tsgauss}, Theorem~\ref{thm:tsclipgauss} related to \textsc{cts-beta}, \textsc{cts-gaussian}, \textsc{clip cts-gaussian} respectively.}
  \label{sample-table}\begin{adjustwidth}{-0.5cm}{-0.5cm}\centering
  \begin{tabular}{c|cccccc}
    \toprule
         & \textsc{cucb}     & $\textsc{escb}_*$ &\textsc{cts-beta}& \textsc{cts-gaussian} & \textsc{clip cts-gaussian} & Lower bound  \\
    \midrule
    \nameref{indep_setting}& ${  m}$  & ${\log^2(m)}$ &\mathversion{bold}${\log^2(m)}$ &\mathversion{bold}${\log^2(m)}$  &\mathversion{bold}${\log^2(m)}$  & $\Omega\pa{1  }$  \\
    \nameref{arb_cor_setting}     & ${  m}$& ${  m}$ & - & \mathversion{bold}$   {{\log^2(m)  m}}$& \mathversion{bold}$m$ & $\Omega\pa{  m}$   \\
   \nameref{subgau_setting}    & ${  m}$& ${\log^2(m)}$ & - & \mathversion{bold}$   {{\log^2(m)  }}$& \mathversion{bold}${{\log^2(m)  }}$ & $\Omega\pa{1}$   \\
    \bottomrule
  \end{tabular}\label{table:regretofallalgos}\end{adjustwidth}\vspace{-5pt}
\end{table}
%
%
 
 %
   In some CMAB problems, the action space $\cA$ and the reward function are simple enough for  the existence of an exact \emph{oracle} that  takes as input a vector $\bmu\in \R^n$ and  outputs the solution of the combinatorial problem (associated to the mean vector $\bmu$),
 with a polynomial time complexity $\cO\pa{\text{poly}(n)}$. 
 Under this assumption (referred to as Assumption~\ref{ass:efficient}), \textsc{cucb}, that plays the action $A_t=\ora\pa{\bmu_t}$ at round $t$, is efficient to implement, and has a $\cO\pa{\text{poly}(n)}$ time complexity per round. In that case, the setting \nameref{arb_cor_setting} is therefore essentially solved. 
On the other hand, this is not true for the settings \nameref{indep_setting} and \nameref{subgau_setting}, as
\textsc{escb} needs to solve a difficult combinatorial problem in each round (NP-Hard in general \citep{atamturk2017maximizing}).

The inefficiency of \textsc{escb} triggered some attempts to implement an efficient version:
 \citet{perrault2019exploiting} proposed an efficient approximation method for implementing \textsc{escb} in the case the action space has a \emph{matroid} structure: they prove a time complexity of $\cO\pa{\text{poly}(n)}$ while keeping the same regret rate. However,
this improvement is mitigated by the fact that \textsc{cucb} reaches the optimal regret rate $\cO\pa{   n\log(T)/\Delta}$ for the special case of matroid  semi-bandits  \citep{anantharam1987asymptotically,kveton2014matroid,Talebi2016}. Recently,
\citet{cuvelier2020statistically} provided another approach for approximating \textsc{escb} for a wide variety of action spaces, including the  matching  bandit setting \citep{gai2010learning} and the  online  shortest  path  problem \citep{liu2012adaptive}, where \textsc{cucb} is not known to be better than \textsc{escb}. However, their policies are still computationally expensive when $T$ is large, since the time complexity at round $t$ is of order $\cO\pa{t\cdot\text{poly}(n)}$. 

Another line of research is to find an efficient alternative to \textsc{escb}.
One of the most promising candidate is \emph{Thompson Sampling} (\textsc{ts}). Although introduced much earlier by \citet{thompson1933likelihood}, the theoretical analysis of \textsc{ts} for frequentist MAB is quite recent: \citet{kaufmann2012thompson,agrawal2012thompsonarxiv} gave
a regret bound matching the \textsc{ucb} policy theoretically.  Moreover, \textsc{ts} often performs better than \textsc{ucb} in practice, making \textsc{ts} an attractive policy for further investigations. For CMAB, \textsc{ts} extends to \emph{Combinatorial Thompson Sampling}
(\textsc{cts}). In \textsc{cts}, the unknown mean $\bmu^*$ is associated with a belief (a prior distribution)  updated to a posterior with the Bayes’rule, each time a feedback is received. In order to choose an action at round $t$, \textsc{cts} draws a sample $\btheta_t$ from the current belief, and plays the action given by $\ora\pa{\btheta_t}$.
\textsc{cts}
is attractive also because its time complexity is $\cO\pa{\text{poly}(n)}$ under Assumption~\ref{ass:efficient}. Recently, for the setting \nameref{indep_setting} with bounded outcomes, \citet{Wang2018} proposed an analysis of \textsc{cts-beta}, which is \textsc{cts} where the prior distribution is chosen to be a product of $n$ Beta distributions. They proved two regret upper bounds depending on the class of reward functions: 
\begin{align}\cO\pa{\frac{n \sqrt{m}\log(T)}{\Delta}} \text{ in the linear case and }\cO\pa{\frac{n {m}\log(T)}{\Delta}} \label{rel:ratewang}~\text{in the general case. }\end{align}
Although the aforementioned upper bound in the linear reward case outperforms the one of \textsc{cucb}, it doesn't match the one of \textsc{escb}. To summarize,  and despite many efforts, the existence of a policy that is both optimal (up to a polylogarithmic factor in $m$) and efficient in the setting \nameref{indep_setting} or \nameref{subgau_setting} is still an open problem, which we tackle in this paper.
 \vspace{-.1cm}
\paragraph{Further related work} We refer the reader to \citet{Wang2018} for further related work on \textsc{ts} for combinatorial bandits, and particularly for \citet{gopalan2013thompson}, that provided a frequentist high-probability regret bounds for \textsc{ts} with
 a general action space and a general feedback model ---  
\citet{Komiyama2015}, that investigated \textsc{ts} for the $m$-sets action space --- 
\citet{wen2015efficient}, that studied \textsc{ts} for
contextual CMAB problems, using the Bayesian regret metric (see also  \citet{russo2016information}).
 \vspace{-.15cm}
\subsection{Contributions}
 \vspace{-.1cm}
 We first improve the result of \citet{Wang2018} by providing the regret upper bound $\cO\pa{ \log^2(m)  n\log(T)/\Delta}$ for \textsc{cts-beta} in the setting \nameref{indep_setting} with bounded outcomes. This bound is valid even for non linear reward functions. 
Our main contribution is a regret bound for the setting~\nameref{subgau_setting}.
We propose an efficient policy called \textsc{cts-gaussian}, that is \textsc{cts}
where the prior distribution is chosen to be a
multivariate Gaussian. An analysis of \textsc{cts-gaussian} allows us to obtain a regret bound reducing to $\cO\pa{ \log^2(m)  n\log(T)/\Delta}$ for a diagonal sub-Gaussian matrix. When the reward function is linear,
we generalize the setting \nameref{subgau_setting} assuming only $\blambda\in \R_+^n$. This allows us to get rid of negative correlations between the outcomes (as in \cite{perrault2020covariance-adapting}), and focus on positive correlations. We propose in this setting the policy
 \textsc{clip cts-gaussian}, where the score is truncated from below with the empirical mean, and from above with the UCB. Truncations from above are not necessary, but can limit optimism, especially when positive correlations are significant. We obtain an improved regret bound for \textsc{clip cts-gaussian}, where negative correlations no longer appear
in the regret bound and where, in setting~\nameref{arb_cor_setting}, the extra $\log^2(m)$ factor present in the regret bound of \textsc{cts-gaussian} disappears. All these results are summarized and compared to other state-of-the-art policies in Table~\ref{table:regretofallalgos}.

\section{Model}\label{sec:model}
 CMAB is formally introduced as follows.
Consider a random process $\pa{\bX_t}\overset{iid}{\sim}\Prb_{\bX}$, where $\Prb_{\bX}$ is a distribution --- unknown to the agent --- of random vectors in $\R^n$, with unknown mean~$\bmu^*$. 
At each round $t\in [T]$, the agent chooses a super arm (or action) $A_t\in \cA\subset \cP([n])$ based on the history of observations $\cH_t\triangleq \sigma\pa{\bX_1\odot \be_{A_1},\dots,\bX_{t-1}\odot \be_{A_{t-1}}}$ and a possible extra source of randomness (we denote by $\cF_t$ the filtration containing $\cH_t$ and the extra randomness of round $t$ --- in particular, $A_t\in \cF_t$). The feedback received is then $\bX_t\odot \be_{A_t}$ and the associated expected reward of the agent at that stage is $r(A_t,\bmu^*)$, for some known function $r$. The objective of the agent is to
 minimize the regret, defined for a policy $\pi$ as 
\[\forall T\geq 1,\quad R_T\pa{\pi}\triangleq\EE{\sum_{t=1}^T \Delta_t},\]
where $\Delta_t\triangleq \Delta\pa{A_t} \triangleq r(A^*,\bmu^*)-r(A_t,\bmu^*)$ with  $A^*\in \argmax_{A'\in \cA}r(A',\bmu^*)$.
As stated in the introduction, we will assume the following:
\begin{assumption}\label{ass:efficient}
The agent has access to an oracle with a time complexity $\cO\pa{\mathrm{poly}\pa{n}}$ such that for any mean vector $\bmu$, $\ora\pa{\bmu}\in \argmax_{A\in \cA} r(A,\bmu)$.
\end{assumption}
Similar to \citet{Chen2015combinatorial}, we assume that the function $r$ satisfies the following smoothness property.
\begin{assumption}\label{ass:smooth}
There exists a constant $B$, such that for every super arm $A\in \cA$ and every pair of mean vectors $\bmu$ and $\bmu'$,
\(\abs{r(A,\bmu)-r(A,\bmu')}\leq B\norm{\be_{A}\odot\pa{\bmu-\bmu'}}_1.\)
\end{assumption}
For an arm $i\in [n]$, we define the number of time $i$ has been chosen at the beginning of round $t$ as 
 \(N_{i,t-1}\triangleq\sum_{t'\in[t-1]}\II{i\in A_{t'}}.\)
 We also define the following quantities, that will be useful in the expression of an upper bound on the regret:

\hspace{1cm}  $m^*\triangleq\min_{A\in \argmax_{A'\in \cA}\be_{A'}\transpose\bmu^*}\abs{A}$ is the minimum size of an optimal action,\vspace{-0.25cm}

\hspace{1cm}  \(\Delta_{i,\min}\triangleq \min_{A\in \cA,~\Delta\pa{A}>0,~i\in A}\Delta\pa{A},\) is the minimal gap of an action  containing $i \in [n]$,\vspace{-0.25cm}

\hspace{1cm}  \(\Delta_{\min}\triangleq \min_{i\in [n]}\Delta_{i,\min},\) is the minimal arm-gap and \vspace{-0.25cm}

\hspace{1cm}  \(\Delta_{\max}\triangleq \max_{A\in \cA}\Delta\pa{A}\) is the maximal gap.
  
 \section{Regret bound for \textsc{cts-beta} in setting \nameref{indep_setting}}\label{sec:beta}
  \begin{algorithm}[t]
\begin{algorithmic}
\STATE \textbf{Initialization}: 
 For each arm $i$, let $a_i=b_i=1$.
\STATE \textbf{For all} $t\geq 1$:
\STATE \quad Draw $\btheta_t\sim\otimes_{i\in [n]}\text{Beta}(a_i,b_i)$, and play $A_t=\ora\pa{\btheta_t}$.
\STATE \quad Get the observation $\bX_{t}\odot\be_{A_t}$, and draw $\bY_t\sim \otimes_{i\in A_t}\Bernoulli(X_{i,t})$.
\STATE \quad For all $i\in A_t$ update $a_i\leftarrow a_i+Y_{i,t}$ and $b_i\leftarrow b_i+1-Y_{i,t}$. 
\end{algorithmic}
\caption{\textsc{cts-beta}}\label{algo:tsbeta}
\end{algorithm}
 
 In this section, we consider the following assumption on top of the CMAB setting from section~\ref{sec:model}.
 \begin{assumption}
 The outcomes $X_i$ are bounded (in $[0,1]$, w.l.o.g.), and are mutually independent (we are thus in a special case of \nameref{indep_setting}).
 \end{assumption}
 For this problem, we consider \textsc{cts-beta} in Algorithm~\ref{algo:tsbeta}, which is described as follows. The prior is set to be a product of $n$ beta distributions (being thus uniform over $[0,1]$ initially). Notice, this prior is conjugate to a product of Bernoulli distributions. After the agent get an observation $X_{i,t}$, it first binarizes it by sampling $Y_{i,t}\sim\Bernoulli\pa{X_{i,t}}$ (the regret of the problem defined by the observations $Y_{i,t}$ is the same because  $\EE{Y_{i,t}}=\mu_i
^*$). Then the prior is updated using Bayes’ rule with each sample $Y_{i,t}$.
When choosing a super arm at round $t$, the agent draws $\btheta_{t}$ from the beta belief, and then plugged it into the oracle, which outputs the super arm $A_t$ to play.

The main result of this section is
Theorem~\ref{thm:tsbeta}, that  improves the regret bound of \citet{Wang2018} for \textsc{cts-beta}.
 \begin{theorem}\label{thm:tsbeta}The policy $\pi$ described in
 Algorithm~\ref{algo:tsbeta} has regret $R_T(\pi)$ of order \[\cO\pa{\sum_{i\in [n]} \frac{B^2\log^2(m)\log(T)}{\Delta_{i,\min}}}\cdot\]
 \end{theorem}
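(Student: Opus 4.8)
The plan is to prove a per-arm regret bound and then sum over $i\in[n]$. Since $\EE{Y_{i,t}}=X_{i,t}$ and $\EE{X_{i,t}}=\mu_i^*$, the binarization step preserves the mean vector, so I would work throughout with the Bernoulli outcomes $Y_{i,t}$ and exploit the exact Beta--Bernoulli conjugacy: conditionally on the counts $N_{i,t}$, the coordinates of $\btheta_t$ are mutually independent, each $\mathrm{Beta}$-distributed. I would start from the oracle optimality $r(A_t,\btheta_t)\ge r(A^*,\btheta_t)$ and Assumption~\ref{ass:smooth}. On an ``optimism'' event $\set{r(A^*,\btheta_t)\ge r(A^*,\bmu^*)-\eta}$ one gets
\[
\Delta_t \;\le\; r(A_t,\btheta_t)-r(A_t,\bmu^*)+\eta \;\le\; B\sum_{i\in A_t}\abs{\theta_{i,t}-\mu_i^*}+\eta ,
\]
which reduces the regret to controlling the $\ell_1$ deviation of $\btheta_t$ from $\bmu^*$ over the played action, plus the cost of optimism failing.

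Next I would introduce two families of clean events: an empirical-mean concentration event $\abs{\hat\mu_{i,t}-\mu_i^*}\lesssim\sqrt{\log(T)/N_{i,t}}$ (Chernoff over the collected samples) and a posterior upper-tail event $\theta_{i,t}\le\hat\mu_{i,t}+O(\sqrt{\log(T)/N_{i,t}})$ (Beta tail bound), on whose intersection $\abs{\theta_{i,t}-\mu_i^*}\lesssim\sqrt{\log(T)/N_{i,t}}$ and $\btheta_t$ behaves like a deterministic \textsc{escb}-type confidence vector; the complementary events have fast-decaying probability and contribute only $\cO(1)$ per arm. The genuinely Thompson-specific step is bounding the optimism-failure term: here I would use Beta \emph{anti}-concentration to show the posterior of each arm in $A^*$ places enough mass above $\mu_i^*$, and crucially reduce this to the \emph{least-sampled} arm of $A^*$ rather than paying a factor $\abs{A^*}$. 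Tightening this reduction is exactly what is needed to improve on the optimism analysis of \citet{Wang2018}.

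The core of the improvement is then a deterministic counting argument applied to $\sum_t\Delta_t\,\II{A_t\text{ suboptimal, clean events hold}}$. On the clean events the triggering of a suboptimal $A_t$ forces $\Delta_t\lesssim B\sqrt{\log T}\,\sum_{i\in A_t}\sqrt{1/N_{i,t}}$, and a naive Cauchy--Schwarz step would lose a factor $\sqrt{m}$. To avoid this, I would partition the arms of $A_t$ into $\cO(\log m)$ dyadic bands according to the magnitude of $N_{i,t}$, charge $\Delta_t$ across the bands, and within each band use that a suboptimal play increments the relevant counts so the harmonic-type sums telescope into $\log(T)$; one factor $\log(m)$ arises from the number of bands and a second from summing across scales, effectively recovering an ellipsoidal (rather than $\ell_1$) dependence and yielding $\cO\pa{B^2\log^2(m)\log(T)/\Delta_{i,\min}}$ per arm.

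I expect the main obstacle to be precisely this last point: obtaining $\log^2(m)$ rather than $\sqrt m$ for a \emph{general}, only $\ell_1$-smooth reward, where the smoothness bound does not directly give the variance-based super-arm deviation that makes \textsc{escb} tight. The multi-scale counting must recover an ellipsoidal-type regret trigger from the $\ell_1$ deviation while, simultaneously, the Thompson randomness is tamed through anti-concentration without reintroducing an $m$-factor in the optimism term. Making these two tight ingredients coexist, and getting the peeling constants and the least-sampled-arm reduction to both collapse to $\log^2(m)$, is the delicate part of the argument.
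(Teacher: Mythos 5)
Your outline breaks down at the step that actually produces the $\log^2(m)$ factor. Your ``clean events'' are coordinate-wise (empirical-mean concentration plus a per-arm posterior tail bound), which together yield the trigger $\Delta_t\lesssim B\sqrt{\log T}\,\sum_{i\in A_t}\sqrt{1/N_{i,t-1}}$. No deterministic counting argument, dyadic bands or otherwise, can extract $\log^2(m)$ from this trigger, because it is already too weak when all counters are equal (i.e., a single band): a fixed suboptimal action $A$ of size $m$ and gap $\Delta$ satisfies the trigger as long as $N\lesssim B^2m^2\log(T)/\Delta^2$, so nature can keep it played that many times, incurring regret of order $B^2m^2\log(T)/\Delta$, which exceeds the theorem's bound $\sum_{i\in A}B^2\log^2(m)\log(T)/\Delta_{i,\min}=mB^2\log^2(m)\log(T)/\Delta$ by a factor $m/\log^2(m)$. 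The paper's essential ingredient is instead a \emph{joint} concentration (Lemma~\ref{lem:nlem4}): conditionally on $\cH_t$ the coordinates of $\btheta_t$ are independent and each $\theta_{i,t}$ is $1/(4N_{i,t-1})$-sub-Gaussian, so after a union bound over $A\in\cA$ the whole deviation $\norm{\be_{A_t}\odot\pa{\btheta_t-\bar\bmu_{t-1}}}_1$ concentrates at the $\ell_2$-type scale $\sqrt{\tfrac{1}{2}\log\pa{\abs{\cA}2^mT}\sum_{i\in A_t}1/N_{i,t-1}}$, smaller than your trigger by up to $\sqrt{m}$ (Cauchy--Schwarz). Only with this stronger trigger does the peeling/reverse-amortisation argument (Theorem~\ref{thm:dege}, in the spirit of \citet{Degenne2016}) deliver the $\log_2^2(4\sqrt m)$ factor; your dyadic banding is essentially that peeling, but applied to the $\ell_1$-of-square-roots trigger it can only recover the factor $m$ of Theorem~\ref{thm:l1error}. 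So the missing idea is not a cleverer counting scheme but the vector-level concentration of the sample.

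Your treatment of the optimism-failure term is also off target. That term is $T$-independent both in \citet{Wang2018} and in the paper, so tightening it is not what improves on \eqref{rel:ratewang}; the improvement lives entirely in the leading term discussed above. Moreover, the reduction you propose, controlling optimism through the least-sampled arm of $A^*$ alone, is not sound for a general smooth reward: the paper's Step~4 (following Wang--Chen's Lemma~1) must decompose over nonempty subsets $Z\subset A^*$ and requires all arms of $Z$ to simultaneously have samples $\eps$-close to their means; by independence of the prior the relevant quantity $\EE{\sup_{\tau}\prod_{i\in Z}1/\PPc{\abs{\theta_{i,\tau}-\mu_i^*}\leq\eps}{\cH_\tau}}$ factorizes into a product over $i\in Z$, which is exactly why the constant term is exponential in $m^*$. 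Indeed, after correcting a conditioning error in Wang--Chen's Lemma~7 via a union bound over counters, the paper's constant is of order $\eps^{-4m^*-2}$, \emph{larger} than theirs, not smaller; a single-arm reduction would contradict this product structure.
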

 The proof of Theorem~\ref{thm:tsbeta}, as well as the complete non-asymptotic upper-bound is postponed to Appendix~\ref{app:thm:tsbeta}.
 Our analysis incorporates two novelties that we detail in the two following paragraphs.

 \paragraph{An improved leading term}(cf.\,Step 3 of the proof of Theorem~\ref{thm:tsbeta} in Appendix~\ref{app:thm:tsbeta})
 We define the \emph{empirical average} of each arm $i\in [n]$ at the beginning of round $t$ as
\(\mean{i,t-1}\triangleq\sum_{t'\in[t-1]}\frac{\II{i\in A_{t'}}Y_{i,t'}}{\counter{i,t-1}}.\)
Notice that this empirical average definition differs from the one that is classically used in CMAB, since samples $Y_{i,t'}$ are used rather than $X_{i,t'}$. The improved dependence in $m$ in the leading term of Theorem~\ref{thm:tsbeta} (compared to \eqref{rel:ratewang}) is a consequence of two ingredients. The first is the following concentration inequality (see Appendix~\ref{app:thm:tsbeta}, Lemma~\ref{lem:nlem4}), which improves that of \citet{Wang2018} by extending it to the case of non-linear reward. Indeed, we rather control the $\ell_1$ norm in this case, instead of the $\ell_\infty$-norm, which leads to a tighter bound.
 \begin{align} \PPc{\norm{\be_{A_t}\odot\pa{\btheta_{t}-\bar\bmu_{t-1}} }_1\geq\sqrt{\frac{1}{2}{\log\pa{\abs{\cA}2^mT}}\sum_{i\in {A_t}}\frac{1}{N_{i,t-1}}}}{\cH_t}\leq 1/T.\label{rel:concentration_l1_ts}\end{align}
The second ingredient is a more careful handling of the square-root term in the above probability, based on a method similar to the one in
 \citet{Degenne2016}.
 
\paragraph{$T$-independent term} (cf.\,Step 4 of the proof of Theorem~\ref{thm:tsbeta} in Appendix~\ref{app:thm:tsbeta})
 Similarly to \citet{Wang2018}, our regret bound also contains an exponential term that is constant in $T$. Note, however that the term of \citet{Wang2018} is of order $\cO(\eps^{-2m^*-2})$, whereas ours is of order $\cO\pa{\eps^{-4m^*-2}}$, where $\eps\in (0,1)$ is of order $\Delta_{\min}/{(m^*)}^2$.  This discrepancy is due to the correction of a minor negligence inaccuracy in their Lemma~7, where they assume, at the end of the proof, that one could decorrelate the counters from the outcomes received. We manage to circumvent this issue by doing a careful union bound over the counters. It is this union bound that brings a larger dependence in this constant term. An additional discussion is deferred to the end of Appendix~\ref{app:thm:tsbeta}.

 \section{Regret bound for \textsc{cts-gausian} in setting \nameref{subgau_setting}}\label{sec:gauss}
 \begin{algorithm}[t]
\begin{algorithmic}
\STATE \textbf{Input}: The vector $\bD$, and a parameter $\beta>1$.
\STATE \textbf{Initialization}: Play each arm once (if the agent knows that $\bmu^*\in [a,b]^n$, this might be skipped)
\STATE \textbf{For every subsequent round} $t$:
\STATE \quad Draw $\btheta_t\sim\otimes_{i\in [n]}\cN\pa{\bar\mu_{i,t-1},{ N^{-1}_{i,t-1}} \beta D_i}$ ($\theta_{i,t}\sim\cU[a,b]$ if $N_{i,t-1}=0$). \STATE \quad Play $A_t=\ora\pa{\btheta_t}$.
\STATE\quad Get the observation $\bX_{t}\odot\be_{A_t}$, and  update $\bar\bmu_{t-1}$ and counters accordingly. 
\end{algorithmic}
\caption{\textsc{cts-gaussian}}\label{algo:tsgauss}
\end{algorithm}
 
 In this section, we consider the setting from section~\ref{sec:model}, with a more general sub-Gaussian family for  $\bX\in \R^n$.  More precisely, we make the following similar  assumption as in
 \citet{Degenne2016}. Proposition~\ref{prop:subgau} gives two examples 
 included in this assumption (see  Appendix~\ref{app:subgau} for a proof).
 \begin{assumption}\label{ass:subgau}
 There exists a vector $\bD\triangleq(D_1,\dots,D_n) \in \R_+^n$ known to the agent such that
\begin{align*}\forall A\in\cA,~\forall \blambda\in \R^\arms~s.t.~\blambda=\blambda\odot\be_A,\quad\EE{e^{\blambda\transpose\pa{\bX-\bmu^*}}}\leq e^{\blambda\transpose\bD\odot\blambda/2}.\end{align*}\end{assumption}

\paragraph{Motivation for sub-Gaussian outcomes}
In the same way as boundedness generalizes to sub-Gaussianity in $1$d, we have that if $\bX$ is a.s. in a compact $\cK$, it is $\bC$-sub-Gaussian, with $\bC$ built from the John's ellipsoid of $\cK$.
In this case, $D_i$ is computed with a linear maximization over $\cA$. In particular, $\cK=B_{\ell_\infty}(0,1)$ gives $D_i=m$, and $\cK=B_{\ell_2}(0,1)$ gives $D_i=1$. We can also use other structures on the outcomes to have $D_i$, such as negative dependence (as we will see in our shortest path experiments, in section~\ref{sec:exp}).

\begin{proposition}\label{prop:subgau}
Assumption~\ref{ass:subgau} encompasses the  $\kappa_i^2$-sub Gaussian outcomes with worst case dependencies between the arm distributions, taking
  $D_i = \kappa_i^2 m$. It also captures $\bC$-sub-Gaussian outcomes with a known sub-Gaussian matrix $\bC$ (setting \nameref{subgau_setting}), taking  $D_i=\max_{A\in \cA,~i\in A}\sum_{j\in A}\abs{C_{ij}}$.
\end{proposition}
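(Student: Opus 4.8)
The plan is to verify, in each of the two cases, the defining inequality of Assumption~\ref{ass:subgau}: for every $A\in\cA$ and every $\blambda$ supported on $A$ (i.e.\ $\blambda=\blambda\odot\be_A$), one must have $\EE{e^{\blambda\transpose(\bX-\bmu^*)}}\leq e^{\blambda\transpose\bD\odot\blambda/2}$. Writing out the right-hand side, $\blambda\transpose\bD\odot\blambda=\sum_{i\in A}D_i\lambda_i^2$, so in each case it suffices to exhibit the claimed $D_i$ and control the log-MGF by $\tfrac12\sum_{i\in A}D_i\lambda_i^2$. The two cases call for genuinely different arguments: the first must cope with \emph{arbitrary} dependence between the marginals, while the second must turn the known quadratic form $\blambda\transpose\bC\blambda$ into a diagonal one.

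For the worst-case dependency case, each marginal satisfies $\EE{e^{\lambda_i(X_i-\mu_i^*)}}\leq e^{\kappa_i^2\lambda_i^2/2}$, but nothing is known about the joint law, so one cannot factorize the MGF. Instead I would apply the generalized H\"older inequality with the uniform exponents $p_i=\abs{A}$ (which satisfy $\sum_{i\in A}1/p_i=1$), giving $\EE{\prod_{i\in A}e^{\lambda_i(X_i-\mu_i^*)}}\leq\prod_{i\in A}\EE{e^{\abs{A}\lambda_i(X_i-\mu_i^*)}}^{1/\abs{A}}$. Bounding each factor by the marginal sub-Gaussian inequality yields $\exp\pa{\tfrac12\sum_{i\in A}\abs{A}\kappa_i^2\lambda_i^2}$, and since $\abs{A}\leq m$ this is at most $\exp\pa{\tfrac12\sum_{i\in A}m\kappa_i^2\lambda_i^2}$, which is exactly the required bound with $D_i=\kappa_i^2 m$.

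For the $\bC$-sub-Gaussian case I start from $\EE{e^{\blambda\transpose(\bX-\bmu^*)}}\leq e^{\blambda\transpose\bC\blambda/2}$ and only need the deterministic inequality $\blambda\transpose\bC\blambda\leq\sum_{i\in A}D_i\lambda_i^2$ for $\blambda$ supported on $A$. I would bound $\blambda\transpose\bC\blambda=\sum_{i,j\in A}C_{ij}\lambda_i\lambda_j\leq\sum_{i,j\in A}\abs{C_{ij}}\abs{\lambda_i}\abs{\lambda_j}$, then apply $\abs{\lambda_i}\abs{\lambda_j}\leq\tfrac12(\lambda_i^2+\lambda_j^2)$ and use the symmetry of $\bC$ to collapse the double sum into $\sum_{i\in A}\lambda_i^2\sum_{j\in A}\abs{C_{ij}}$. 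Finally $\sum_{j\in A}\abs{C_{ij}}\leq\max_{A'\in\cA,\,i\in A'}\sum_{j\in A'}\abs{C_{ij}}=D_i$ gives the claim.

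The two algebraic manipulations are routine; the only genuine choice — and what I expect to be the crux — is the selection of the H\"older exponents in the first case. The uniform exponents $p_i=\abs{A}$ are precisely what produce the factor $m$ in $D_i$, matching the lower bound for setting~\nameref{arb_cor_setting}, and it is worth checking that no sharper choice is available without further structural assumptions on the dependence between the marginals.
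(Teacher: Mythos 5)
Your proof is correct. The second half (the $\bC$-sub-Gaussian case) is exactly the paper's argument: the bound $\lambda_i\lambda_j C_{ij}\leq \tfrac{1}{2}(\lambda_i^2+\lambda_j^2)\abs{C_{ij}}$, symmetry of $\bC$, and the maximization over actions containing $i$. The first half differs mildly from the paper: where you apply generalized H\"older with uniform exponents $p_i=\abs{A}$ to get $\EE{e^{\blambda\transpose(\bX-\bmu^*)}}\leq e^{\abs{A}\norm{\bkappa\odot\blambda}_2^2/2}$ directly, the paper instead writes the exponent as a convex combination with weights $\abs{\kappa_i\lambda_i}/\norm{\bkappa\odot\blambda}_1$, applies Jensen's inequality for $e^x$, and obtains the sharper intermediate bound $e^{\norm{\bkappa\odot\blambda}_1^2/2}$ before relaxing by Cauchy--Schwarz to $e^{m\norm{\bkappa\odot\blambda}_2^2/2}$. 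The two are closely related: the paper's Jensen step is precisely generalized H\"older with the \emph{optimal} exponents $p_i=\norm{\bkappa\odot\blambda}_1/\abs{\kappa_i\lambda_i}$, which answers the question you raise at the end — a sharper exponent choice does exist and yields $\norm{\bkappa\odot\blambda}_1^2$ in place of $\abs{A}\norm{\bkappa\odot\blambda}_2^2$. However, this sharper intermediate bound cannot improve the constant in the diagonal form required by Assumption~\ref{ass:subgau}: taking $\blambda$ constant on an action of size $m$ with equal $\kappa_i$ gives $\norm{\bkappa\odot\blambda}_1^2=m\sum_{i\in A}\kappa_i^2\lambda_i^2$, so $D_i=\kappa_i^2 m$ is unavoidable without further structural assumptions, and your uniform-exponent choice loses nothing for the proposition.
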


 For the above setting, we provide \textsc{cts-gaussian} in Algorithm~\ref{algo:tsgauss},
where we define the empirical mean of arm $i$ at round $t\geq 1$ as
\(\mean{i,t-1}\triangleq\sum_{t'\in[t-1]}\frac{\II{i\in A_{t'}}X_{i,t'}}{\counter{i,t-1}}.\)
This algorithm is comparable to Algorithm~\ref{algo:tsbeta} but considers a Gaussian prior for each arm. Notice, the Gaussian family is \textit{self-conjugate}, so except in the Gaussian-outcomes case, we do not rely on exact conjugated prior here. Although this is not surprising --- since it is known that \textsc{ts} can work without exact conjugate prior with respect to the outcomes --- obtaining an upper bound on the regret of the policy \textsc{cts-gaussian} is non-trivial and constitutes our main contribution. We state our main result in Theorem~\ref{thm:tsgauss}.
\begin{theorem}\label{thm:tsgauss}
  The policy $\pi$ described in
 Algorithm~\ref{algo:tsgauss} has regret $R_T(\pi)$ of order
 \[\cO\pa{\sum_{i\in [n]}\frac{B^2D_i\log^2(m)\log(T)}{\Delta_{i,\min}} }\cdot\]
 \end{theorem}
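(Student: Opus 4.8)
The plan is to mirror the four-step structure of the proof of Theorem~\ref{thm:tsbeta}, replacing the Beta-specific concentration of \citet{Wang2018} by a Gaussian concentration/anti-concentration pair and importing the tighter sub-Gaussian control of \citet{Degenne2016} to reach the $\log^2(m)$ factor. \emph{Decomposition.} Starting from $R_T(\pi)=\EE{\sum_{t}\Delta_t}$, I would treat a round with $\Delta_t>0$ by adding and subtracting $r(A_t,\btheta_t)$ and using that the oracle plays $A_t=\ora\pa{\btheta_t}\in\argmax_{A}r(A,\btheta_t)$, so that $r(A_t,\btheta_t)\ge r(A^*,\btheta_t)$. Together with the smoothness Assumption~\ref{ass:smooth}, this gives
\[\Delta_t\le B\norm{\be_{A_t}\odot\pa{\btheta_t-\bmu^*}}_1+\pa{r(A^*,\bmu^*)-r(A^*,\btheta_t)},\]
splitting the gap into a \emph{concentration} term on $A_t$ and an \emph{optimism} term on $A^*$. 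Writing $\btheta_t-\bmu^*=\pa{\btheta_t-\bar\bmu_{t-1}}+\pa{\bar\bmu_{t-1}-\bmu^*}$ reduces the first term to the $\ell_1$ deviation of the empirical mean from the truth and of the posterior sample from the empirical mean.

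\emph{Concentration (Steps 1--2).} I would establish two $\ell_1$ deviation bounds restricted to a played action $A$. For the posterior, since $\theta_{i,t}-\bar\mu_{i,t-1}\sim\cN\pa{0,\beta D_i/N_{i,t-1}}$ independently across $i$, a Gaussian sub-additivity argument yields the exact analogue of \eqref{rel:concentration_l1_ts} with variances $\beta D_i/N_{i,t-1}$: with probability at least $1-1/T$, $\norm{\be_{A_t}\odot\pa{\btheta_t-\bar\bmu_{t-1}}}_1$ is at most a width of order $\sqrt{\log\pa{\abs{\cA}2^mT}\sum_{i\in A_t}\beta D_i/N_{i,t-1}}$. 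For the empirical mean, Assumption~\ref{ass:subgau} makes each marginal $D_i$-sub-Gaussian and controls every linear combination supported on $A$, yielding a deviation bound of the same form with $D_i$ in place of $\beta D_i$.

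\emph{Exploration (the role of $\beta$).} The optimism term $r(A^*,\bmu^*)-r(A^*,\btheta_t)$ is not automatically nonpositive for \textsc{ts}, so I would lower-bound the probability that the posterior is optimistic on $A^*$, i.e. that $r(A^*,\btheta_t)\ge r(A^*,\bmu^*)-\eps$, once the arms of $A^*$ are accurately estimated. This is a Gaussian anti-concentration statement: the inflation $\beta>1$ guarantees that the posterior standard deviation $\sqrt{\beta D_i/N_{i,t-1}}$ dominates the sub-Gaussian fluctuation scale of $\bar\mu_{i,t-1}$, so the sample overshoots the truth with probability bounded away from $0$. Following the ratio argument of \citet{Wang2018} (itself adapted from Agrawal--Goyal), this converts the optimism contribution into a term that is either absorbed into the leading term or constant in $T$.

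\emph{Assembling and the main obstacle (Steps 3--4).} On the good events a suboptimal play forces the combined width to exceed $\sim\Delta_{i,\min}/B$ for the arms $i\in A_t$. Since this width carries an effective $\sqrt{\log\pa{\abs{\cA}2^mT}}$, i.e. roughly $\sqrt{m}$, the improvement from this naive $m$ to $\log^2(m)$ is obtained, exactly as in Step~3 of Theorem~\ref{thm:tsbeta}, by the more careful handling of the square-root term following \citet{Degenne2016} — a dyadic peeling over the counts that charges each round to an under-sampled bottleneck arm with threshold $N_{i,t-1}\lesssim B^2D_i\log^2(m)\log(T)/\Delta_{i,\min}^2$ — giving the leading term $\sum_{i}B^2D_i\log^2(m)\log(T)/\Delta_{i,\min}$. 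The complementary events contribute a $T$-independent term obtained, as in Step~4 of Theorem~\ref{thm:tsbeta}, through a careful union bound over the values of the counters $N_{i,t-1}$ (the source of the exponential-in-$m^*$ constant). I expect the main difficulty to be reconciling the two tight requirements at once: keeping the Degenne-style passage to the $\ell_1$-norm at the $\log^2(m)$ level while guaranteeing, uniformly over the exponentially large action space $\cA$, that the $\beta$-inflated posterior explores $A^*$ enough — that is, ensuring the anti-concentration lower bound and the concentration upper bound hold for compatible widths, and that the union bound over counters needed to decorrelate counts from outcomes stays summable in $T$.
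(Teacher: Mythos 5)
Your decomposition, your two concentration lemmas, and the Degenne-style peeling for the leading $\log^2(m)$ term all match Steps 1--3 of the paper's proof, and you correctly sense that $\beta>1$ enters through Gaussian anti-concentration. The genuine gap is in your Step 4. You propose to handle the optimism term ``following the ratio argument of \citet{Wang2018}'', i.e., to reduce to $\EE{\prod_{i\in Z'}\pa{1/\PPc{\abs{\theta_{i,t}-\mu_i^*}\leq\eps}{\cH_t}-1}}$ and bound it arm by arm. That argument requires the expectation of the product to factorize across arms, which in \citet{Wang2018} follows from the \emph{mutual independence of the outcomes}. In Theorem~\ref{thm:tsgauss} the outcomes are only jointly sub-Gaussian (setting \nameref{subgau_setting}): the empirical means $\bar\mu_{i,t-1}$ of different arms are correlated, so the factorization fails. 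Your union bound over counters does not repair this --- it decorrelates the \emph{counters} from the outcomes (the Theorem~\ref{thm:tsbeta} issue), not the outcomes of different arms from one another, which is the new obstacle here. A naive patch via H\"older is also doomed: it forces moments of order $\abs{Z'}$ of the inverse posterior probability, which are controlled only when $\beta\gtrsim m$, destroying the claimed per-arm $D_i$ dependence.

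The missing idea --- and the paper's main technical contribution --- is a \emph{weak stochastic dominance} argument: one proves that $\pa{\abs{\bar\mu_{i,t-1}-\mu_i^*}}_i$ is weakly stochastically dominated by $\pa{\sqrt{\beta D_i/N_{i,t-1}}\,\abs{\eta_i}}_i$ with $\bheta\sim\otimes_i\cN(0,1)$ independent, so that the product expectation can be replaced by one over independent Gaussians and \emph{only then} factorized. Three ingredients, all absent from your sketch, make this work. First, a monotonicity observation: because the prior is factorized, symmetric, and depends on the history only through $N_{i,t-1}$, the probability $\PPc{\abs{\theta_{i,t}-\mu_i^*}\leq\eps}{\cH_t}$ is an increasing function $g_i$ of $\abs{\bar\mu_{i,t-1}-\mu_i^*}$, which lets the dominance be pushed through the product. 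Second, after the union bound on counters and a layer-cake representation, the paper introduces the stopping time $\tau^*$ at which the joint deviation event first holds, builds from Assumption~\ref{ass:subgau} the non-negative supermartingale $M_\tau=\exp\pa{\sum_{t<\tau}\sum_{i\in Z'\cap A_t}\pa{\lambda_i\pa{X_{i,t}-\mu_i^*}-\lambda_i^2D_i/2}}$, and invokes Doob's optional sampling theorem to get $\EE{M_{\tau^*}}\leq 1$, i.e., an exponential bound on the joint (correlated) deviation probability; this replaces the per-arm posterior analysis of Wang--Chen, which has no analogue here since there is no conjugacy between the Gaussian prior and the unknown outcome law. Third, the anti-concentration bound $\sqrt{2e/\pi}\,\frac{\sqrt{\beta-1}}{\beta}e^{-\beta x^2/2}\leq\PP{\abs{\eta}\geq x}$ converts that exponential back into a product of \emph{independent} Gaussian tails at the inflated variance $\beta D_i$ --- this is precisely where $\beta>1$ is consumed, with the cost $\pa{2\beta/\sqrt{\beta-1}}^{\abs{Z'}}$, not in a generic ``overshoot with constant probability'' statement. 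Without these three steps your Step 4 does not go through, and the constant ($T$-independent) term of the theorem cannot be obtained.
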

  The proof of Theorem~\ref{thm:tsgauss}, as well as the complete non-asymptotic upper-bound is postponed to Appendix~\ref{app:tsgauss}. Nonetheless, in the following paragraphs, we provide some insights and highlight the novelty of our analysis.
\paragraph{Main proof challenges}
In the setting of the previous section, the outcomes are independent in $[0,1]$ and an important step in Algorithm~\ref{algo:tsbeta} was to transform the outcomes into binary variables in order to be consistent with the posterior. Here, outcomes are no longer independent. In addition to that, we cannot transform the outcomes into Gaussian variables in the same way as in Algorithm~\ref{algo:tsbeta}. These two points are the main technical challenges to address in our analysis.

\paragraph{Stochastic dominance}
Before providing details on how we deal with the above challenges, first recall that the standard analysis (in the case of a factorized prior, that we have here\footnote{\label{note3}In practice, for $\bC$-sub Gaussian outcomes, the choice $\cN\pa{\bar\bmu_{t-1},\pa{C_{ij}N_{ij,t-1}N^{-1}_{i,t-1}N^{-1}_{j,t-1}}_{ij}}$ for the prior  where 
\(N_{ij,t-1}\triangleq\sum_{t'\in[t-1]}\II{i\in A_{t'}}\II{j\in A_{t'}}\) may be preferred.}) consists in bounding the expected number of rounds needed for the sample $\btheta_t$ to be close to the true mean $\bmu^*$ on a certain set $Z\subset A^*$, i.e., for the event $ \set{\norm{\pa{\bmu^*-\btheta_t}\odot\be_{Z}}_\infty> \eps}$ to happen. We let $\fT_t\pa{Z}$ denote the complementary event. 
As for the proof of Theorem~\ref{thm:tsbeta}, we can condition on the history to rewrite this expected number of rounds and then upper bound it as
\begin{align*}&\EE{\sum_{t\geq 1} (t-1) \PPc{\neg\fT_{t}\pa{Z}}{\cH_{t}} \prod_{j=1}^{t-1}\PPc{\fT_{{j}}\pa{Z}}{\cH_{{j}}}}\\&\leq \EE{\sup_{t\geq 1} \frac{1}{\PPc{\neg\fT_{t}\pa{Z}}{\cH_{t}}}}-1\leq \sum_{Z'\subset Z,~Z'\neq \emptyset} \EE{\sup_{t\geq 1}{\prod_{i\in Z'}\pa{\frac{1}{\PPc{\abs{\theta_{i,t}-\mu_i^*}\leq \eps}{\cH_t}}-1}}}.\end{align*}
Now, using the fact that the conditional distribution of $\theta_{i,t}-\bar\mu_{i,t-1}$ is symmetric and depends only on the counter $N_{i,t-1}$, we obtain that the probability $\PPc{\abs{\theta_{i,t}-\mu_i^*}\leq \eps}{\cH_t}$ is a monotonic function of the deviation $\abs{\bar\mu_{i,t-1}-\mu^*_i}$. Let us emphasize that this property of the Gaussian prior used is crucial and that it is not obvious to transfer the same technique to a beta prior.
 To sum up, we have to control a term of the form $\EE{\sup_{t\geq 1}\prod_{i\in Z'} g_i\pa{\abs{\bar\mu_{i,t-1}-\mu^*_i}}}$, where $g_i$ are non-negative increasing functions.  Our approach is to prove that $\pa{\abs{\bar\mu_{i,t-1}-\mu_i^*}}_i$ is \emph{weakly stochastically dominated} by
 $\pa{\sqrt{\frac{\beta D_i}{N_{i,t-1}}}\abs{\eta_i}}_i$, where $\bheta\sim\otimes_i\cN(0,1)$, which is
 the same vector but where the empirical mean is built with independent Gaussian outcomes instead.  Notice, independence is crucial to be able to factorize the expectation $\EE{\prod_{i\in Z'} g_i}$, in the same way as in the proof of Theorem~\ref{thm:tsbeta}.
 We recall two equivalent definitions of $\bU$ is weakly stochastically dominated by $\bV,$ see \citet{shaked2007stochastic} for more details and properties of dominances,
\begin{itemize}
    \item For all non-negative, non-increasing functions $f_i$, it holds  $\EE{\prod_i f_i(U_i)}\leq \EE{\prod_i f_i(V_i)}$. 
    \item For any vector $\bx$, it holds $\PP{\bU\geq \bx}\leq \PP{\bV\geq \bx}$.
\end{itemize}
The first point applied to $g_i$'s (and up to the supremum over $t$) is a simple way to obtain the aforementioned wanted control. Thus, it's enough to prove the second point, which is a consequence of the sub-Gaussianity of outcomes given by Assumption~\ref{ass:subgau} and 
  some concentration inequality. Finally, we circumvent the supremum over $t\geq 1$ issue thanks to Doob's optional sampling theorem for non-negative super-martingales (see \citet{durrett2019probability}, Theorem 5.7.6).

    \paragraph{Importance of using a factorized prior in our analysis}
    Note that in Algorithm~\ref{algo:tsgauss}, the samples $\theta_{i,t}$ are independent, while the outcomes are not necessarily independent. This independence is in fact crucial in order to be able to start the analysis in the same way as in the proof of Theorem~\ref{thm:tsbeta} (recall that Algorithm~\ref{algo:tsbeta} also uses a factorized prior). More precisely, a factorized prior allows us to link the filtered regret against the event $\fS_t(Z)\wedge\fT_t(Z)$ to the expected number of rounds needed for $\neg\fT_t(Z)$ to occur (see \eqref{rel:llaass} in Step 4 of the proof of Theorem~\ref{thm:tsbeta} in Appendix~\ref{app:thm:tsbeta} for a definition of $\fS_t(Z)$).
Indeed, without the factorized prior, the two events $\fS_t(Z),\fT_t(Z)$ would no longer be independent conditionally to the history, and the term ${1}/{\PPc{\neg\fT_{t}\pa{Z}}{\cH_{t}}}$ obtained in the previous paragraph would then be replaced by ${1}/{\PPc{\neg\fT_{t}\pa{Z}}{\fS_{t}\pa{Z},\cH_{t}}}$, which is much more difficult to deal with. To the best of our knowledge, it is unknown how to get the desired bound when $\fS_t(Z)$ and $\fT_t(Z)$ are not independent conditionally to the history.

\subsection{\textsc{clip cts-gaussian} for the linear reward case}
 In this subsection, we make the following assumptions on top of Section~\ref{sec:model}.
 \begin{assumption}\label{ass:linear_reward}
 The reward function is linear, defined as $r(A,\bmu)\triangleq\be_A\transpose \bmu$.
 \end{assumption}
 \begin{assumption}\label{ass:subgaupos}
 The agent knows a matrix $\bGamma\succeq 0$ s.t.
$\forall\blambda\in \R_+^n,~\EE{e^{\blambda\pa{\bX-\bmu^*}}}\leq e^{\blambda\transpose \bGamma\blambda/2}.$
 \end{assumption}
 Notice that Assumption~\ref{ass:subgaupos} slightly generalises the setting from \citet{Degenne2016}.
 Requiring $\blambda\in \R_+^n$ allows us to take  $D_i=\max_{A\in \cA,~i\in A}\sum_{j\in A}\pa{0\vee \Gamma_{ij}}$, so that negative correlations are no longer harmful. $D_i$ can still be too large (and thus $\btheta_t$ might be over-sampled), so we cap $\btheta_t$ with the score $\bmu_t$ used by \textsc{cucb}.  
 The resulting policy is \textsc{clip cts-gaussian}, where the score $\btheta_t$ is replaced by $\bar\bmu_{t-1}\vee \btheta_t \wedge \bmu_t$ before we plug it into $\ora$, where $\mu_{i,t}=\bar\mu_{i,t-1}+\sqrt{\Gamma_{ii}\frac{2\pa{\log(t)+4\log\log(t)}}{N_{i,t-1}}}$.
 \textsc{clip cts-gaussian} enjoys the following regret bound. 
 \begin{theorem}\label{thm:tsclipgauss}
  The policy \textsc{clip cts-gaussian} has regret of order
   \vspace{-.15cm}
  \[\cO\pa{\sum_{i\in [n]}\frac{\pa{D_i\log^2(m)\wedge m\Gamma_{ii}}\log(T)}{\Delta_{i,\min}}}.\]
 \end{theorem}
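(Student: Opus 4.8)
\emph{Setup and decomposition.} The plan is to merge the Thompson-sampling analysis of Theorem~\ref{thm:tsgauss} with a \textsc{cucb}-style analysis, using the two-sided clipping to select, arm by arm, the better of the two resulting thresholds. Write $\tilde\theta_{i,t}\triangleq\bar\mu_{i,t-1}\vee\theta_{i,t}\wedge\mu_{i,t}$ for the clipped score fed to the oracle. Two deterministic (given $\cH_t$) envelopes are immediate:
\[\bar\mu_{i,t-1}\leq\tilde\theta_{i,t}\leq\mu_{i,t}\quad\text{and}\quad 0\leq\tilde\theta_{i,t}-\bar\mu_{i,t-1}\leq\min\pa{\pa{\theta_{i,t}-\bar\mu_{i,t-1}}\vee 0,\ \mu_{i,t}-\bar\mu_{i,t-1}}.\]
Since the reward is linear (Assumption~\ref{ass:linear_reward}), oracle optimality $\be_{A_t}\transpose\tilde\btheta_t\geq\be_{A^*}\transpose\tilde\btheta_t$ gives the standard split $\Delta_t\leq\be_{A^*}\transpose\pa{\bmu^*-\tilde\btheta_t}+\be_{A_t}\transpose\pa{\tilde\btheta_t-\bmu^*}$, an \emph{underestimation} term on $A^*$ and an \emph{overestimation} term on $A_t$, which I bound on the good event that the empirical means concentrate and $\mu_{i,t}\geq\mu_i^*$ for all $i$.

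\emph{Overestimation: where the minimum is born.} On the played action I write $\be_{A_t}\transpose\pa{\tilde\btheta_t-\bmu^*}=\sum_{i\in A_t}\pa{\tilde\theta_{i,t}-\bar\mu_{i,t-1}}+\sum_{i\in A_t}\pa{\bar\mu_{i,t-1}-\mu_i^*}$, where the second sum concentrates and contributes only lower-order ($T$-independent) terms. For the optimism $\sum_{i\in A_t}\pa{\tilde\theta_{i,t}-\bar\mu_{i,t-1}}$ I use the two envelopes in parallel. First, since $\tilde\theta_{i,t}-\bar\mu_{i,t-1}\leq\pa{\theta_{i,t}-\bar\mu_{i,t-1}}\vee 0$, the clipped optimism is dominated by that of plain \textsc{cts-gaussian}; because the prior is factorized, $\sum_{i\in A_t}\pa{\theta_{i,t}-\bar\mu_{i,t-1}}$ is conditionally Gaussian with variance $\beta\sum_{i\in A_t}D_i/N_{i,t-1}$, an \emph{ellipsoidal} width, so the peeling argument behind Theorem~\ref{thm:tsgauss} applies verbatim and yields the factor $D_i\log^2(m)$. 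Second, the cap gives $\tilde\theta_{i,t}-\bar\mu_{i,t-1}\leq\mu_{i,t}-\bar\mu_{i,t-1}=\sqrt{\Gamma_{ii}\,2\pa{\log(t)+4\log\log(t)}/N_{i,t-1}}$, the \textsc{cucb} confidence width built from the diagonal only, so the standard \textsc{cucb} counting yields the factor $m\Gamma_{ii}$. To turn ``two valid global bounds'' into a \emph{sum of minima}, I run a single per-arm peeling over the counts $N_{i,t-1}$: at each scale I charge a suboptimal round to the least-advanced arm of $A_t$ and bound its deviation by whichever envelope is smaller, so the number of suboptimal rounds charged to arm $i$ is at most $\cO\pa{\pa{D_i\log^2(m)\wedge m\Gamma_{ii}}\log(T)/\Delta_{i,\min}^2}$.

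\emph{Underestimation.} This term is handled exactly as in Theorem~\ref{thm:tsgauss}: the conditional law of $\theta_{i,t}-\bar\mu_{i,t-1}$ is symmetric and depends only on $N_{i,t-1}$, and weak stochastic dominance of $\pa{\abs{\bar\mu_{i,t-1}-\mu_i^*}}_i$ by $\pa{\sqrt{\beta D_i/N_{i,t-1}}\abs{\eta_i}}_i$, combined with Doob's optional sampling theorem, controls the expected number of rounds needed for $\tilde\btheta_t$ to be optimistic on a subset $Z\subset A^*$. The upper clip is harmless here because $\mu_{i,t}\geq\mu_i^*$ on the good event; the lower clip only helps, since $\tilde\theta_{i,t}\geq\bar\mu_{i,t-1}$ means that once the marginal $\bar\mu_{i,t-1}$ has concentrated (a $\Gamma_{ii}$-governed event) the clipped sample is automatically optimistic. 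This second observation lets me bound the underestimation by the \emph{same} per-arm minimum, so that, after summing the charged counts against the gaps, the whole regret respects $\sum_i\pa{D_i\log^2(m)\wedge m\Gamma_{ii}}\log(T)/\Delta_{i,\min}$.

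\emph{Main obstacle.} The delicate point is the per-arm reconciliation in the overestimation step: the $\log^2(m)$ overhead comes from the ellipsoidal (Cauchy--Schwarz-tight) aggregation over the whole action $A_t$, whereas the $m$ overhead of \textsc{cucb} comes from a crude sum of individual widths, so the two factors are produced by different action-level aggregations and do not a priori combine into a term-wise minimum. The resolution is to commit to one peeling scheme over the per-arm counts and to verify that, at every scale and for every arm, both the ellipsoidal and the diagonal width bounds hold simultaneously for the clipped score, so that the smaller of the two can be selected without breaking the charging. Everything else (the $T$-independent constants and the good-event probabilities) is routine and absorbed into the lower-order terms.
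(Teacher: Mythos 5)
Your high-level skeleton (two envelopes on the clipped score, one Gaussian and one \textsc{cucb}-style, merged into a per-arm minimum) matches the paper's intent, but there are two genuine gaps. First, the mechanism that actually produces the term-wise minimum is missing. You correctly identify this as the main obstacle — the $\log^2(m)$ factor comes from an ellipsoidal aggregation over all of $A_t$ while the $m\Gamma_{ii}$ factor comes from an $\ell_1$ aggregation — but your proposed resolution (``commit to one peeling scheme and, at every scale and for every arm, select whichever envelope is smaller'') is never substantiated, and it is not clear it can be: inside the Degenne-style peeling the $\ell_2$ structure is global, so per-scale, per-arm switching between envelopes breaks the charging. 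The paper resolves this differently and concretely: it fixes, \emph{in advance}, a deterministic partition $S=\set{i:\Gamma_{ii}m\pa{\log T+4\log\log T}\geq 4\log_2^2(4\sqrt m)\beta D_i\log\pa{2^m\abs{\cA}T}}$, proves a concentration bound so that on a good event $\Delta_t\leq\sqrt{8\log\pa{\abs{\cA}2^mT}\sum_{i\in A_t\cap S}\beta D_i/N_{i,t-1}}+\sum_{i\in A_t\cap S^c}\sqrt{8\Gamma_{ii}\pa{\log t+4\log\log t}/N_{i,t-1}}$, and then uses the composed-bonus trick ($\Delta_t\leq B_1+B_2$ implies the regret splits into the events $\set{\Delta_t\leq 2B_1}$ and $\set{\Delta_t\leq 2B_2}$) so that the $\ell_2$-peeling theorem applies on $A_t\cap S$ and the $\ell_1$ counting applies on $A_t\cap S^c$. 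The minimum then appears arm by arm because membership in $S$ was decided by exactly that comparison.

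Second, your claim that the underestimation step is ``handled exactly as in Theorem~\ref{thm:tsgauss}'' is false, and your further claim that this term also obeys the per-arm minimum is structurally confused: in the paper that term is a $T$-independent constant (exponential in $m^*$), not part of the leading $\log(T)$ term, so it needs no minimum at all. The real work that clipping forces is elsewhere: the events $\fS_t(Z)$, $\fT_t(Z)$ must be redefined one-sidedly (e.g.\ $\fT_t(Z)=\set{\exists i\in Z,\ \mu_i^*-\mu_i^*\wedge\tilde\theta_{i,t}>\eps}$), which is precisely what permits Assumption~\ref{ass:subgaupos} with $\blambda\in\R_+^n$ only; the analogue of Lemma~\ref{lem:fSfT} must be re-proved with a more delicate induction (the constant degrades from $(m^*)^2+1$ to $m^*(m^*+1)/2+1$); and one needs the oracle monotonicity property for linear rewards (Lemma~\ref{lem:increa}: if every maximizer of $\be_{A'}\transpose\bheta$ contains $Z$ and $\bdelta\geq 0$, then every maximizer of $\be_{A'}\transpose\pa{\bheta+\bdelta\odot\be_Z}$ contains $Z$), because clipping from below by $\bar\bmu_{t-1}$ changes the vector fed to $\ora$ and one must check the oracle's argmax still contains $Z$. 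Your intuition that ``the lower clip only helps'' is right in spirit, but without this lemma and the reworked events the argument does not go through; this is exactly where the paper relies heavily on reward linearity, as it emphasizes.
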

 \vspace{-.25cm}
 Not only $D_i$ is improved through the above relaxation, but also, the leading term is never worse than the one of \textsc{cucb}. The proof and the complete non-asymptotic upper-bound is delayed to Appendix~\ref{app:optimistic_cts}. We note that we rely heavily on reward linearity to analyse this clip version, not only using monotony to restrict the controls to the $\R_+^n$ directions (and thus to cap from bellow the sample by the empirical mean), but also using the oracle's invariance property $\ora\pa{\bmu}=\ora\pa{\bmu+\bdelta\odot\be_{\ora\pa{\bmu}}}$, with $\bdelta\geq 0$, to cap the sample from above by the UCB. 
 
  \vspace{-.15cm}
 \paragraph{Comparison with the \textsc{ols-ucb} analysis of \citet{Degenne2016}}
 The leading term in the regret bound given from Theorem~\ref{thm:tsclipgauss} is comparable to the one for \textsc{ols-ucb} from \citet{Degenne2016}. Indeed, we recall that they obtained a factor of order $\Gamma_{ii}\pa{(1-\gamma)\log^2(m) + \gamma m}$, with $\gamma\triangleq\max_{A\in \cA}\max_{(i,j)\in A^2,i\neq j}\pa{0\vee\Gamma_{ij}}/\sqrt{\Gamma_{ii}\Gamma_{jj}}$,  where we have $\pa{D_i\log^2(m)\wedge m\Gamma_{ii} }$. When $\gamma\in\set{0,1}$ (this is the case when we are in the settings \nameref{indep_setting} and \nameref{arb_cor_setting} respectively), these two terms coincide. When $\gamma\in (0,1)$, they are incomparable in general. We can still see that our variance term $D_i$ is always lower than  their $\Gamma_{ii}\pa{(1-\gamma) + \gamma m}$, i.e., that our bound rate is lower than $\log^2(m)$ times theirs. 
  \vspace{-.1cm}
 \section{Experiments}\label{sec:exp}
 
Before describing the experiments carried out, notice that in the \textsc{cts-gaussian} policies, $\beta>1$ is an artefact of the analysis and can in practice be taken equal to $1$. This is what we did in our experiments.
 
 \paragraph{The shortest path problem}
 We compare our \textsc{cts} policies to \textsc{cucb} and \textsc{cucb-kl}, for the shortest path problem on the road chesapeake network \citep{roadnet}. This network contains $39$ nodes and $n=170$ edges. $\cA$ is the set of paths from an origin to a destination in the network. 
 We choose a linear reward, so that an efficient $\ora$ exists for this problem. We choose $\bmu
^*$ uniformly in $[-1,0]^n$ and then normalize its sum so that 
 $\sum_i \mu_i^*=-s$, where $s$ is unknown to the agent. The parameter
 $s$ stands for the global network traffic (e.g., the total number of vehicles in the network).
 We run two experiments, one with $-\bX\sim\otimes_i \Bernoulli\pa{-\mu_i^*}$ and another with $-\bX\sim\otimes_i \Bernoulli\pa{-\mu_i^*}$ conditionally on $\sum_i X_i = -s$. They are presented in Figure~\ref{exp:shortest}. Since the outcomes are not mutually independent in this last experiment, we use \textsc{(clip) cts-gaussian} rather than \textsc{cts-beta}, where we take $D_i=1/4$, using that for any $\blambda\in\R_+^n$, $\EE{e^{ \blambda\transpose \bX }}\leq\prod_{i\in [n]}\EE{e^{\lambda_iX_i}} $ (see e.g., \citet{borcea2009negative}, corollary 4.18).
 It is clear from the experiments that \textsc{cts} policies outperform both \textsc{cucb} and  \textsc{cucb-kl}. In the second experiment, we see that \textsc{clip cts-gaussian} and \textsc{cts-gaussian} are very similar --- which is not surprising because $D_i$ is not large here (unlike in the next experiment) --- and that for a small $s$, \textsc{cucb-kl} becomes competitive, since the $\mathrm{kl}$ is much larger than the quadratic divergence in that case.
  \vspace{-.15cm}
 \paragraph{Comparison to \textsc{escb} for the matching problem}\label{subsec:exp_matching}
 We consider here a comparison between \textsc{(clip) cts-gaussian}, \textsc{cucb} and \textsc{escb} (we refer the reader to \citet{Wang2018} for a comparison between \textsc{cts-beta} and \textsc{escb}). Since \textsc{escb} is computationally intractable, we limit ourselves to a toy  matching problem  on the complete bipartite graphs $K_{4,4}$, with $\bX\sim\cN(\bmu^*,(c\II{i\neq j} +\II{i=j})_{ij}$), where this covariance is known to the agent. Our results are shown in Figure~\ref{exp:max_matching}, where we observe that \textsc{clip cts-gaussian} (resp. \textsc{escb})  is slightly better for $c$ small (resp. large), thus reaching the best of both worlds. This is because a large $c$ forces \textsc{clip cts-gaussian} to oversample (as evidenced by \textsc{cts-gaussian} whose performance is even worse than \textsc{cucb} for $c=1$). We also recorded the computation time for larger instances (see Table~\ref{table:comp_time}), and observe the efficiency of \textsc{cucb} and \textsc{clip cts-gaussian} compared to \textsc{escb}.
 
   \paragraph{Correlated vs independent prior in practice} We briefly discussed the use of a correlated prior in footnote~\ref{note3}, with covariance $\pa{C_{ij}N_{ij,t-1}N^{-1}_{i,t-1}N^{-1}_{j,t-1}}_{ij}$, mentioning that the policy would perform better than using an independent prior. We ran additional empirical comparisons to assess this, plotting the results in Figure~\ref{fig:add_exp} where we also compared with a common prior policy approach \citep{agrawal2017thompson}, i.e., with covariance $\pa{{N^{-1/2}_{i,t-1}N^{-1/2}_{j,t-1}}}_{ij}$.\footnote{We also tried the policy (without displaying the results, for the sake of clarity) with covariance $\pa{C_{ij}{N^{-1/2}_{i,t-1}N^{-1/2}_{j,t-1}}}_{ij}$, and observed about the same performance as the correlated prior approach.} As expected, the correlated prior policy is better than the independent one (when outcomes are correlated). This motivates the theoretical study of such policy for future work. The  common prior approach is comparable to the correlated prior one on the matching problem, but it is outperformed in the worst-case scenario of a separate action space $\cA=\set{\set{km+1,\dots, (k+1)m} \mid k \in \set{0,\dots, \frac{n}{m}-1}}$ with independent outcomes. This is because such problem reduces to a classical MAB problem with a covariance scaled up by a factor $m$, whereas the common prior approach has a variance scaled up by a factor $m^2$.

 \begin{figure}[t]
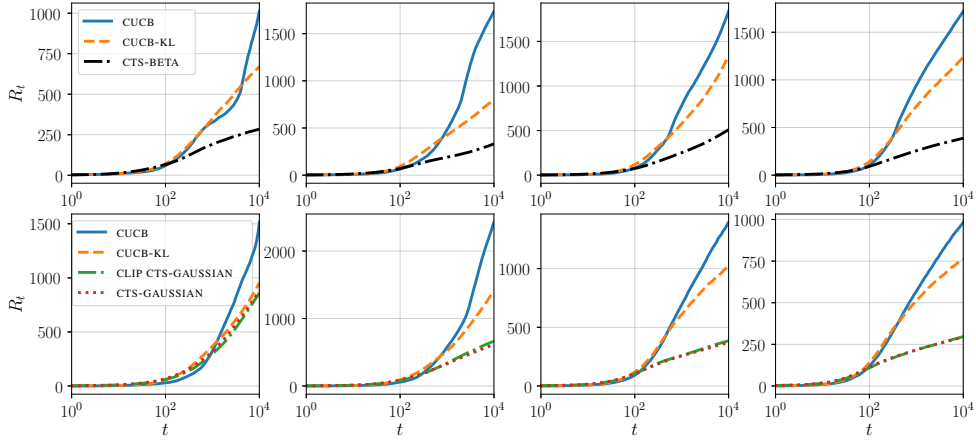

\centering
\begin{adjustbox}{clip,trim=0cm .8cm 0cm .6cm}
\resizebox{\textwidth}{!}{\input{multiple_regret_170_10000_0.pgf}}
\end{adjustbox}
\begin{adjustbox}{clip,trim=0cm .6cm 0cm .6cm}
\resizebox{\textwidth}{!}{\input{multiple_regret_170_10000_0c.pgf}}
\end{adjustbox}
\caption{Cumulative regret (averaged over 50 simulations) for the shortest path problem.
\textbf{Top:} with mutually independent outcomes, taking the opposite sum of means being $s=70,90,110,130$ respectively.
\textbf{Bottom:} with correlated outcomes, taking the opposite sum of outcomes being $s=70,90,110,130$ respectively.}
\label{exp:shortest}
\end{figure}

\begin{figure}[t]
\centering
\begin{adjustbox}{clip,trim=0cm .6cm 0cm .6cm}
\resizebox{\textwidth}{!}{\input{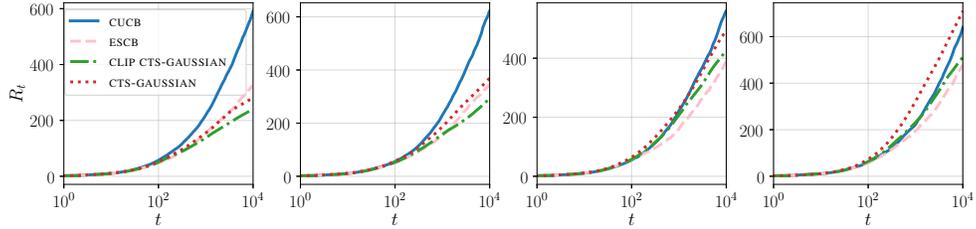}}
\end{adjustbox}
\caption{Cumulative regret (averaged over 50 simulations) for the matching problem with Gaussian outcomes, taking $c=-1/n,0.2,0.5,1$ respectively.}
\label{exp:max_matching}
\end{figure}

\begin{table}[H]
  \caption{Computation time per round (ms), with $c=0.3$, $T=100$, averaged over $5$ simulations.}
  \centering
  \begin{tabular}{c|cccccc}
    \toprule
         & $K_{3,3}$     & $K_{4,4}$ & $K_{5,5}$& $K_{6,6}$ & $K_{7,7}$ &
         $K_{8,8}$\\
    \midrule
    \textsc{cucb} & $0.39$  & $0.64$ &$1.23$ &$1.65$  & $2.45$ & $3.88$ \\
    \textsc{clip cts-gaussian}     & $0.50$& $0.80$ & $1.75$ & $1.79$ & $3.30$ & $5.42$   \\
    \textsc{escb}     & $0.45$& $1.93$ & $10.3$ & $75.6$ & $541$ & $4694$\\
    \bottomrule
  \end{tabular}\label{table:comp_time}
\end{table}

  \begin{figure}[H]
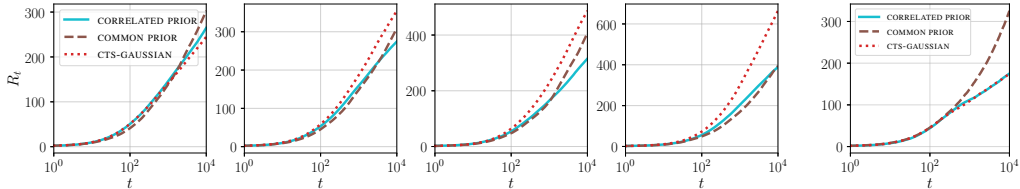

    \centering
    \begin{adjustbox}{clip,trim=0cm 0cm 0cm 0cm}
    \resizebox{\textwidth}{!}{\input{regret_multiple_MNLandcorr_16_10000_1.pgf}\input{regret_20_10000_4_0_2.pgf}}
    \end{adjustbox}
    \caption{\label{fig:add_exp}Comparison with correlated prior sampling and common prior sampling (averaged over 50 simulations). \textbf{The first 4:} for the $K_{4,4}$ matching problem, with Gaussian outcomes, taking $c=0,0.2,0.5,1$. \textbf{The last:} for $\cA=\set{\set{km+1,\dots, (k+1)m} \mid k \in \set{0,\dots, \frac{n}{m}-1}}$, $c=0$.}
\end{figure}
 
  \section{Conclusion and future work}
 In this paper, we have provided the first efficient policies having an optimal regret bound for a wide spectrum of problems instances for CMAB with semi-bandit feedback. Our approach also answers the question of finding an analysis for \textsc{cts} under correlated arm distributions. There are several possible extensions that could be considered as future work. For example,  it would be interesting to have an analysis of \textsc{cts} with a \emph{correlated} (Gaussian) prior. Indeed, apart from the empirical gain, this would open up the possibility of estimating the covariance matrix and using it in the prior distribution.  
 Further relevant results would be an analysis of \textsc{cts-beta} without the mutual independence of outcomes, or also an improved concentration bound for a sum of independent betas, relying on the $\mathrm{kl}$ rather than using sub-Gaussianity. This latter result would thus show that \textsc{cts-beta} dominates \textsc{cucb-kl}, which is empirically observed.

\clearpage
\section*{Broader Impact}

This work does not present any foreseeable societal consequence.

\begin{ack}
\vspace{-.1cm}
The research presented was supported by European CHIST-ERA project DELTA, French Ministry of
Higher Education and Research, Nord-Pas-de-Calais Regional Council,  French National Research Agency project BOLD (ANR19-CE23-0026-04).

It was also supported in part by a public grant as part of the Investissement d'avenir project, reference ANR-11-LABX-0056-LMH, LabEx LMH, in a joint call with Gaspard Monge Program for optimization, operations research and their interactions with data sciences.

\end{ack}
 \vspace{-.1cm}

\setlength{\bibsep}{5.1pt}
\bibliography{example}

\begin{thebibliography}{41}
\providecommand{\natexlab}[1]{#1}
\providecommand{\url}[1]{\texttt{#1}}
\expandafter\ifx\csname urlstyle\endcsname\relax
  \providecommand{\doi}[1]{doi: #1}\else
  \providecommand{\doi}{doi: \begingroup \urlstyle{rm}\Url}\fi

\bibitem[Agrawal and Goyal(2012)]{agrawal2012thompsonarxiv}
S.~Agrawal and N.~Goyal.
\newblock {Thompson Sampling for Contextual Bandits with Linear Payoffs}.
\newblock \emph{CoRR, abs/1209.3352, http://arxiv.org/abs/1209.3352}, sep 2012.
\newblock URL \url{http://arxiv.org/abs/1209.3352}.

\bibitem[Agrawal et~al.(2017)Agrawal, Avadhanula, Goyal, and
  Zeevi]{agrawal2017thompson}
S.~Agrawal, V.~Avadhanula, V.~Goyal, and A.~Zeevi.
\newblock Thompson sampling for the mnl-bandit.
\newblock \emph{arXiv preprint arXiv:1706.00977}, 2017.

\bibitem[Anantharam et~al.(1987)Anantharam, Varaiya, and
  Walrand]{anantharam1987asymptotically}
V.~Anantharam, P.~Varaiya, and J.~Walrand.
\newblock Asymptotically efficient allocation rules for the multiarmed bandit
  problem with multiple plays-part i: Iid rewards.
\newblock \emph{IEEE Transactions on Automatic Control}, 32\penalty0
  (11):\penalty0 968--976, 1987.

\bibitem[Atamt{\"u}rk and G{\'o}mez(2017)]{atamturk2017maximizing}
A.~Atamt{\"u}rk and A.~G{\'o}mez.
\newblock Maximizing a class of utility functions over the vertices of a
  polytope.
\newblock \emph{Operations Research}, 65\penalty0 (2):\penalty0 433--445, 2017.

\bibitem[Auer et~al.(2002)Auer, Cesa-Bianchi, and Fischer]{auer2002finite}
P.~Auer, N.~Cesa-Bianchi, and P.~Fischer.
\newblock {Finite-time analysis of the multiarmed bandit problem}.
\newblock \emph{Machine Learning}, 47\penalty0 (2-3):\penalty0 235--256, 2002.

\bibitem[Berry and Fristedt(1985)]{berry1985bandit}
D.~A. Berry and B.~Fristedt.
\newblock \emph{{Bandit Problems: Sequential Allocation of Experiments}},
  volume~38 of \emph{Monographs on statistics and applied probability}.
\newblock Chapman and Hall, 1985.

\bibitem[Borcea et~al.(2009)Borcea, Br{\"a}nd{\'e}n, and
  Liggett]{borcea2009negative}
J.~Borcea, P.~Br{\"a}nd{\'e}n, and T.~Liggett.
\newblock Negative dependence and the geometry of polynomials.
\newblock \emph{Journal of the American Mathematical Society}, 22\penalty0
  (2):\penalty0 521--567, 2009.

\bibitem[Cesa-Bianchi and Lugosi(2012)]{cesa-bianchi2012combinatorial}
N.~Cesa-Bianchi and G.~Lugosi.
\newblock {Combinatorial bandits}.
\newblock In \emph{Journal of Computer and System Sciences}, volume~78, pages
  1404--1422, 2012.

\bibitem[Chang et~al.(2011)Chang, Cosman, and Milstein]{chang2011chernoff}
S.-H. Chang, P.~C. Cosman, and L.~B. Milstein.
\newblock Chernoff-type bounds for the gaussian error function.
\newblock \emph{IEEE Transactions on Communications}, 59\penalty0
  (11):\penalty0 2939--2944, 2011.

\bibitem[Chen et~al.(2013)Chen, Wang, and Yuan]{chen13a}
W.~Chen, Y.~Wang, and Y.~Yuan.
\newblock Combinatorial multi-armed bandit: General framework and applications.
\newblock In S.~Dasgupta and D.~McAllester, editors, \emph{Proceedings of the
  30th International Conference on Machine Learning}, volume~28 of
  \emph{Proceedings of Machine Learning Research}, pages 151--159, Atlanta,
  Georgia, USA, 17--19 Jun 2013. PMLR.
\newblock URL \url{http://proceedings.mlr.press/v28/chen13a.html}.

\bibitem[Chen et~al.(2016)Chen, Wang, and Yuan]{Chen2015combinatorial}
W.~Chen, Y.~Wang, and Y.~Yuan.
\newblock {Combinatorial multi-armed bandit and its extension to
  probabilistically triggered arms}.
\newblock \emph{Journal of Machine Learning Research}, 17, 2016.

\bibitem[Combes et~al.(2015)Combes, Shahi, Proutiere, and
  Others]{combes2015combinatorial}
R.~Combes, M.~S. T.~M. Shahi, A.~Proutiere, and Others.
\newblock {Combinatorial bandits revisited}.
\newblock In \emph{Advances in Neural Information Processing Systems}, pages
  2116--2124, 2015.

\bibitem[Cuvelier et~al.(2020)Cuvelier, Combes, and
  Gourdin]{cuvelier2020statistically}
T.~Cuvelier, R.~Combes, and E.~Gourdin.
\newblock Statistically efficient, polynomial time algorithms for combinatorial
  semi bandits.
\newblock \emph{arXiv preprint arXiv:2002.07258}, 2020.

\bibitem[Degenne and Perchet(2016{\natexlab{a}})]{Anytime}
R.~Degenne and V.~Perchet.
\newblock Anytime optimal algorithms in stochastic multi-armed bandits.
\newblock In M.~F. Balcan and K.~Q. Weinberger, editors, \emph{Proceedings of
  The 33rd International Conference on Machine Learning}, volume~48 of
  \emph{Proceedings of Machine Learning Research}, pages 1587--1595, New York,
  New York, USA, 20--22 Jun 2016{\natexlab{a}}. PMLR.
\newblock URL \url{http://proceedings.mlr.press/v48/degenne16.html}.

\bibitem[Degenne and Perchet(2016{\natexlab{b}})]{Degenne2016}
R.~Degenne and V.~Perchet.
\newblock Combinatorial semi-bandit with known covariance.
\newblock In D.~D. Lee, M.~Sugiyama, U.~V. Luxburg, I.~Guyon, and R.~Garnett,
  editors, \emph{Advances in Neural Information Processing Systems 29}, pages
  2972--2980. Curran Associates, Inc., 2016{\natexlab{b}}.
\newblock URL
  \url{http://papers.nips.cc/paper/6137-combinatorial-semi-bandit-with-known-covariance.pdf}.

\bibitem[Durrett(2019)]{durrett2019probability}
R.~Durrett.
\newblock \emph{Probability: theory and examples}, volume~49.
\newblock Cambridge university press, 2019.

\bibitem[Gai et~al.(2010)Gai, Krishnamachari, and Jain]{gai2010learning}
Y.~Gai, B.~Krishnamachari, and R.~Jain.
\newblock Learning multiuser channel allocations in cognitive radio networks: A
  combinatorial multi-armed bandit formulation.
\newblock In \emph{2010 IEEE Symposium on New Frontiers in Dynamic Spectrum
  (DySPAN)}, pages 1--9. IEEE, 2010.

\bibitem[Gai et~al.(2012)Gai, Krishnamachari, and Jain]{gai2012combinatorial}
Y.~Gai, B.~Krishnamachari, and R.~Jain.
\newblock {Combinatorial network optimization with unknown variables:
  Multi-armed bandits with linear rewards and individual observations}.
\newblock \emph{Transactions on Networking}, 20\penalty0 (5):\penalty0
  1466--1478, 2012.

\bibitem[Gopalan et~al.(2014)Gopalan, Mannor, and Mansour]{gopalan2013thompson}
A.~Gopalan, S.~Mannor, and Y.~Mansour.
\newblock {Thompson sampling for complex bandit problems}.
\newblock In \emph{International Conference on Machine Learning}, 2014.

\bibitem[Hoeffding(1963)]{hoeffding1963probability}
W.~Hoeffding.
\newblock {Probability inequalities for sums of bounded random variables}.
\newblock \emph{Journal of the American Statistical Association}, 58:\penalty0
  13--30, 1963.

\bibitem[Jacobs and Wozencraft(1965)]{jacobs1965principles}
I.~M. Jacobs and J.~Wozencraft.
\newblock Principles of communication engineering.
\newblock 1965.

\bibitem[Kaufmann et~al.(2012)Kaufmann, Korda, and Munos]{kaufmann2012thompson}
E.~Kaufmann, N.~Korda, and R.~Munos.
\newblock {Thompson Sampling: An Asymptotically Optimal Finite Time Analysis}.
\newblock \emph{Algorithmic Learning Theory}, 2012.

\bibitem[Komiyama et~al.(2015)Komiyama, Honda, and Nakagawa]{Komiyama2015}
J.~Komiyama, J.~Honda, and H.~Nakagawa.
\newblock {Optimal Regret Analysis of Thompson Sampling in Stochastic
  Multi-armed Bandit Problem with Multiple Plays}.
\newblock jun 2015.
\newblock URL \url{http://arxiv.org/abs/1506.00779}.

\bibitem[Kveton et~al.(2014)Kveton, Wen, Ashkan, Eydgahi, and
  Eriksson]{kveton2014matroid}
B.~Kveton, Z.~Wen, A.~Ashkan, H.~Eydgahi, and B.~Eriksson.
\newblock {Matroid bandits: Fast combinatorial optimization with learning}.
\newblock In \emph{Uncertainty in Artificial Intelligence}, 2014.

\bibitem[Kveton et~al.(2015)Kveton, Wen, Ashkan, and
  Szepesvari]{kveton2015tight}
B.~Kveton, Z.~Wen, A.~Ashkan, and C.~Szepesvari.
\newblock {Tight regret bounds for stochastic combinatorial semi-bandits}.
\newblock In \emph{International Conference on Artificial Intelligence and
  Statistics}, 2015.

\bibitem[Lai and Robbins(1985)]{lai1985asymptotically}
T.~L. Lai and H.~Robbins.
\newblock {Asymptotically efficient adaptive allocation rules}.
\newblock \emph{Advances in Applied Mathematics}, 6\penalty0 (1):\penalty0
  4--22, 1985.

\bibitem[Liu and Zhao(2012)]{liu2012adaptive}
K.~Liu and Q.~Zhao.
\newblock Adaptive shortest-path routing under unknown and stochastically
  varying link states.
\newblock In \emph{2012 10th International Symposium on Modeling and
  Optimization in Mobile, Ad Hoc and Wireless Networks (WiOpt)}, pages
  232--237. IEEE, 2012.

\bibitem[Marchal et~al.(2017)Marchal, Arbel, et~al.]{marchal2017sub}
O.~Marchal, J.~Arbel, et~al.
\newblock On the sub-gaussianity of the beta and dirichlet distributions.
\newblock \emph{Electronic Communications in Probability}, 22, 2017.

\bibitem[Perrault et~al.(2019{\natexlab{a}})Perrault, Perchet, and
  Valko]{perrault2019exploiting}
P.~Perrault, V.~Perchet, and M.~Valko.
\newblock Exploiting structure of uncertainty for efficient matroid
  semi-bandits.
\newblock In K.~Chaudhuri and R.~Salakhutdinov, editors, \emph{Proceedings of
  the 36th International Conference on Machine Learning}, volume~97 of
  \emph{Proceedings of Machine Learning Research}, pages 5123--5132, Long
  Beach, California, USA, 09--15 Jun 2019{\natexlab{a}}. PMLR.
\newblock URL \url{http://proceedings.mlr.press/v97/perrault19a.html}.

\bibitem[Perrault et~al.(2019{\natexlab{b}})Perrault, Perchet, and
  Valko]{pmlr-v89-perrault19a}
P.~Perrault, V.~Perchet, and M.~Valko.
\newblock Finding the bandit in a graph: Sequential search-and-stop.
\newblock In K.~Chaudhuri and M.~Sugiyama, editors, \emph{Proceedings of
  Machine Learning Research}, volume~89 of \emph{Proceedings of Machine
  Learning Research}, pages 1668--1677. PMLR, 16--18 Apr 2019{\natexlab{b}}.
\newblock URL \url{http://proceedings.mlr.press/v89/perrault19a.html}.

\bibitem[Perrault et~al.(2020{\natexlab{a}})Perrault, Healey, Wen, and
  Valko]{perraultbudgeted2020}
P.~Perrault, J.~Healey, Z.~Wen, and M.~Valko.
\newblock Budgeted online influence maximization.
\newblock In \emph{Proceedings of the 37th International Conference on Machine
  Learning}, pages 6588--6599. 2020{\natexlab{a}}.

\bibitem[Perrault et~al.(2020{\natexlab{b}})Perrault, Perchet, and
  Valko]{perrault2020covariance-adapting}
P.~Perrault, V.~Perchet, and M.~Valko.
\newblock {Covariance-adapting algorithm for semi-bandits with application to
  sparse rewards}.
\newblock In \emph{Conference on Learning Theory}, 2020{\natexlab{b}}.

\bibitem[Robbins(1952)]{robbins1952some}
H.~Robbins.
\newblock {Some aspects of the sequential design of experiments}.
\newblock \emph{Bulletin of the American Mathematics Society}, 58:\penalty0
  527--535, 1952.

\bibitem[Rossi and Ahmed(2015)]{roadnet}
R.~A. Rossi and N.~K. Ahmed.
\newblock The network data repository with interactive graph analytics and
  visualization.
\newblock In \emph{AAAI}, 2015.
\newblock URL \url{http://networkrepository.com}.

\bibitem[Russo and Van~Roy(2016)]{russo2016information}
D.~Russo and B.~Van~Roy.
\newblock An information-theoretic analysis of thompson sampling.
\newblock \emph{The Journal of Machine Learning Research}, 17\penalty0
  (1):\penalty0 2442--2471, 2016.

\bibitem[Shaked and Shanthikumar(2007)]{shaked2007stochastic}
M.~Shaked and J.~G. Shanthikumar.
\newblock \emph{Stochastic orders}.
\newblock Springer Science \& Business Media, 2007.

\bibitem[Talebi and Proutiere(2016)]{Talebi2016}
M.~S. Talebi and A.~Proutiere.
\newblock {An Optimal Algorithm for Stochastic Matroid Bandit Optimization}.
\newblock In \emph{The 2016 International Conference on Autonomous Agents {\&}
  Multiagent Systems}, pages 548--556, 2016.
\newblock ISBN 9781450342391.

\bibitem[Thompson(1933)]{thompson1933likelihood}
W.~R. Thompson.
\newblock {On the likelihood that one unknown probability exceeds another in
  view of the evidence of two samples}.
\newblock \emph{Biometrika}, 25:\penalty0 285--294, 1933.

\bibitem[Wang and Chen(2017)]{wang2017improving}
Q.~Wang and W.~Chen.
\newblock {Improving regret bounds for combinatorial semi-bandits with
  probabilistically triggered arms and its applications}.
\newblock In \emph{Neural Information Processing Systems}, mar 2017.
\newblock URL \url{http://arxiv.org/abs/1703.01610}.

\bibitem[Wang and Chen(2018)]{Wang2018}
S.~Wang and W.~Chen.
\newblock {Thompson Sampling for Combinatorial Semi-Bandits}.
\newblock mar 2018.
\newblock URL \url{http://arxiv.org/abs/1803.04623}.

\bibitem[Wen et~al.(2015)Wen, Kveton, and Ashkan]{wen2015efficient}
Z.~Wen, B.~Kveton, and A.~Ashkan.
\newblock Efficient learning in large-scale combinatorial semi-bandits.
\newblock In \emph{International Conference on Machine Learning}, pages
  1113--1122, 2015.

\end{thebibliography}

\clearpage
\appendix

\section{Proof of Theorem~\ref{thm:tsbeta}}\label{app:thm:tsbeta}
We first restate the complete non-asymptotic upper-bound as follows.
\begin{theorem*}
The policy $\pi$ described in
 Algorithm~\ref{algo:tsbeta} has regret $R_T(\pi)$
 bounded by
\begin{align*}
 16\log_2^2(16m)\!\sum_{i\in [n]}\!\frac{{B^2\!\log\pa{2^m\abs{\cA}T}}}{\Delta_{i,\min}}\!+\!
\Delta_{\max}(1+n)\!+\!\frac{nm^2\Delta_{\max}}{\pa{\frac{\Delta_{\min}}{2B}-({m^*}^2+1)\eps}^{2}}\!+\!\Delta_{\max}\frac{C}{\eps^2}\pa{\frac{C'}{\eps^4}}^{m^*}\!\!,
\end{align*}
where $C,C'$ are two universal constants, and $\eps\in (0,1)$ is such that ${\Delta_{\min}}/(2B)-({m^*}^2+1)\eps>~0.$
\end{theorem*}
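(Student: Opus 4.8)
The plan is to follow the event-decomposition strategy of \citet{Wang2018}, but with the sharper $\ell_1$ concentration \eqref{rel:concentration_l1_ts} in place of their $\ell_\infty$ bound and with a corrected treatment of the constant term. First I would reduce to Bernoulli outcomes: since $\EE{Y_{i,t}}=\mu_i^*$ the regret of the binarized problem coincides with the original one, and the product of Beta priors is exactly conjugate to the product of Bernoulli likelihoods, so the posterior update in Algorithm~\ref{algo:tsbeta} is legitimate. I would then write $R_T(\pi)=\EE{\sum_t\Delta_t}$ and split each summand according to three $\cH_t$-measurable events: whether the empirical averages $\mean{i,t-1}$ are $\eps$-close to $\bmu^*$, whether the sample $\btheta_t$ is close to $\mean{\cdot,t-1}$ on $A_t$, and whether $\btheta_t$ is close to $\bmu^*$ on the optimal set. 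The initialization (playing each arm once, plus the $+1$ losses from telescoping) contributes the $\Delta_{\max}(1+n)$ term.

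The constant term $\frac{nm^2\Delta_{\max}}{(\Delta_{\min}/(2B)-({m^*}^2+1)\eps)^2}$ would come from the first event: when some empirical average deviates from its mean by more than $\eps$ while the arm has already been played enough, a Hoeffding bound together with a union over the $n$ arms and over the possible counter values $N_{i,t-1}$ yields a $\sum_t 1/t^2$-type series, which converges and produces the squared factor. The leading term would come from the second event. On the clean empirical event, optimality of the oracle, $r(A_t,\btheta_t)\geq r(A^*,\btheta_t)$, combined with Assumption~\ref{ass:smooth}, forces $\Delta_t$ to be dominated by $B\norm{\be_{A_t}\odot(\btheta_t-\mean{\cdot,t-1})}_1$ up to $\eps$-corrections, and then \eqref{rel:concentration_l1_ts} gives, with probability at least $1-1/T$, that $\Delta_t^2\lesssim B^2\log(2^m\abs{\cA}T)\sum_{i\in A_t}1/N_{i,t-1}$. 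The decisive step is to sum this inequality over $t$: I would use the dyadic peeling of the counters in the spirit of \citet{Degenne2016}, assigning the blame for $\Delta_t$ to the arms of $A_t$ whose counters lie in geometrically spaced ranges, so that the aggregate $\sum_{i\in A_t}1/N_{i,t-1}$ collapses into per-arm contributions carrying a $\log_2^2(16m)$ factor rather than a factor $m$. Summing over $i\in[n]$ and dividing by $\Delta_{i,\min}$ then produces $16\log_2^2(16m)\sum_i B^2\log(2^m\abs{\cA}T)/\Delta_{i,\min}$.

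Finally, the $T$-independent exponential term would come from the third event. When both concentration events hold but $A_t$ is still suboptimal, the sampled score on the optimal action must be too small, i.e. $\neg\fT_t(Z)$ holds for some nonempty $Z\subseteq A^*$. I would bound the expected number of such rounds by the telescoping identity indicated in the excerpt, $\EE{\sum_t(t-1)\PPc{\neg\fT_t(Z)}{\cH_t}\prod_{j<t}\PPc{\fT_j(Z)}{\cH_j}}\leq\EE{\sup_t 1/\PPc{\neg\fT_t(Z)}{\cH_t}}-1$, which is valid precisely because the factorized prior makes $\fS_t(Z)$ and $\fT_t(Z)$ conditionally independent given $\cH_t$. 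Using conditional independence of the coordinates of $\btheta_t$, the probability $\PPc{\neg\fT_t(Z)}{\cH_t}$ factorizes into a product over $i\in Z$ of Beta anti-concentration probabilities; lower-bounding each by a polynomial in $\eps$ and taking a supremum over the finitely many counter values (the union bound over counters that repairs the decorrelation oversight in Lemma~7 of \citet{Wang2018}) yields the $\frac{C}{\eps^2}(C'/\eps^4)^{m^*}$ behaviour, with the degree $m^*$ coming from $\abs{Z}\leq m^*$.

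I expect the main obstacle to be the counting argument for the leading term: turning the single-round inequality $\Delta_t^2\lesssim B^2\log(2^m\abs{\cA}T)\sum_{i\in A_t}1/N_{i,t-1}$ into a clean per-arm sum with only a polylogarithmic-in-$m$ overhead requires the careful dyadic allocation of \citet{Degenne2016} and is where the $\log_2^2(16m)$ factor is actually earned; a secondary difficulty is obtaining a Beta anti-concentration lower bound that is sharp and uniform over counters, so that the constant term is controlled without reintroducing a spurious decorrelation.
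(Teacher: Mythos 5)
Your proposal follows essentially the same route as the paper's proof: the same event decomposition (empirical-mean deviation, sample-versus-empirical-mean concentration via the $\ell_1$ bound, and optimism failure on a subset of $A^*$), the same Degenne-style dyadic peeling of counters to earn the $\log_2^2$ factor in the leading term, and the same geometric-distribution argument with a union bound over counter values to obtain the exponential $T$-independent constant. One cosmetic slip: Algorithm~1 has no initialization plays, so the $\Delta_{\max}(1+n)$ term does not come from initialization, but rather from the $1/T$ concentration-failure probability summed over $T$ rounds (the lone $\Delta_{\max}$) together with the additive $+1$ per arm in the empirical-deviation counting lemma (the $n\Delta_{\max}$).
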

\subsection{Preliminary lemmas}
In order to prove Theorem~\ref{thm:tsbeta}, we modify two lemmas from \cite{Wang2018}: first, in their Lemma~3, we replace $\eps$ by ${\Delta_{\min}}/\pa{2B}-({m^*}^2+1)\eps>0,$
which gives the following Lemma~\ref{lem:nlem3}. 
\begin{lemma}\label{lem:nlem3}In Algorithm~\ref{algo:tsbeta}, for any arm $i$, we have
 \[\EE{\abs{t\in [T],~i\in A_t,~\abs{A_t}\cdot\abs{\bar\mu_{i,t-1}-\mu_i^*}>\frac{\Delta_{\min}}{2B}-({m^*}^2+1)\eps}}\leq 1+\pa{\frac{\Delta_{\min}}{2mB}-\frac{({m^*}^2+1)\eps}{m}}^{-2}.\]
\end{lemma}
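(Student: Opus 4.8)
The plan is to first reduce the two-sided condition in the statement to a pure concentration event, and then apply a counter-indexed Hoeffding argument, being careful to sidestep the correlation between the empirical average and the random number of samples that produced it.

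\textbf{Reduction.} Since $\abs{A_t}\le m$ for every round $t$, the event in the statement implies $\abs{\mean{i,t-1}-\mu_i^*}>\gamma$, where I set $\gamma\triangleq \frac{\Delta_{\min}}{2mB}-\frac{({m^*}^2+1)\eps}{m}$, which is positive by the standing hypothesis $\Delta_{\min}/(2B)-({m^*}^2+1)\eps>0$. Hence it suffices to bound $\EE{\sum_{t\in[T]}\II{i\in A_t,~\abs{\mean{i,t-1}-\mu_i^*}>\gamma}}$ by $1+\gamma^{-2}$.

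\textbf{Counter-indexed coupling.} Let $Y_{i,1},Y_{i,2},\dots$ be the i.i.d.\ $\Bernoulli(\mu_i^*)$ outcomes (recall $\EE{Y_{i,t}}=\mu_i^*$ and $Y_{i,t}\in[0,1]$) associated with the successive plays of arm $i$, so that whenever $i\in A_t$ and $N_{i,t-1}=s$ one has $\mean{i,t-1}=\frac1s\sum_{j=1}^s Y_{i,j}$. The key observation is that the counter $N_{i,t-1}$ takes a distinct value at each play of $i$: each integer $s\ge0$ corresponds to at most one round $t$ with $i\in A_t$ and $N_{i,t-1}=s$. Bounding the $s=0$ indicator by $1$ and dominating the remaining terms pathwise,
\begin{align*}
\sum_{t\in[T]}\II{i\in A_t,~\abs{\mean{i,t-1}-\mu_i^*}>\gamma}\le 1+\sum_{s\ge1}\II{\abs{\tfrac1s\textstyle\sum_{j=1}^s Y_{i,j}-\mu_i^*}>\gamma}.
\end{align*}

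\textbf{Hoeffding and summation.} Taking expectations, the right-hand side involves only the i.i.d.\ sequence $(Y_{i,j})_j$, which has been decoupled from the policy's adaptive choices; Hoeffding's inequality then gives $\PP{\abs{\frac1s\sum_{j=1}^s Y_{i,j}-\mu_i^*}>\gamma}\le 2e^{-2s\gamma^2}$. Summing and comparing to an integral, $\sum_{s\ge1}2e^{-2s\gamma^2}\le 2\int_0^\infty e^{-2x\gamma^2}\,dx=\gamma^{-2}$, which yields the claimed bound $1+\gamma^{-2}$.

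\textbf{Main obstacle.} The delicate point is exactly the independence invoked in the last step: conditioned on the realized value of the counter, $\mean{i,t-1}$ and $N_{i,t-1}$ are \emph{not} independent, because the decision to play $i$ at round $t$ can depend on the past outcomes of $i$. This is precisely the gap we flagged in the corresponding lemma of \citet{Wang2018}. I circumvent it by the pathwise domination above, replacing the adaptively-indexed average by the deterministic family $\{\frac1s\sum_{j\le s}Y_{i,j}\}_{s\ge1}$ \emph{before} taking expectations, so that no conditioning on the counter is ever needed.
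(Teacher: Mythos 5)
Your proof is correct and follows essentially the same route as the paper, which obtains this lemma by applying the counter-indexed Hoeffding argument of Lemma~3 in \citet{Wang2018} with $\eps$ replaced by ${\Delta_{\min}}/(2B)-({m^*}^2+1)\eps$: the reduction via $\abs{A_t}\le m$, the pathwise domination over distinct counter values, Hoeffding's inequality, and the integral comparison $\sum_{s\ge 1}2e^{-2s\gamma^2}\le \gamma^{-2}$ all match. One small remark: the decorrelation flaw that the paper attributes to \citet{Wang2018} concerns their Lemma~7 (handled in Step~4 of the proof of Theorem~\ref{thm:tsbeta}), not the present lemma, whose standard counter-indexed argument was already sound.
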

Then, we modify Lemma~4 from \cite{Wang2018} as follows, leveraging on the mutual independence of $\theta_{1,t},\dots,\theta_{n,t}$ to get a tighter confidence region for the sample $\btheta_t$. 
\begin{lemma}\label{lem:nlem4}
 In Algorithm~\ref{algo:tsbeta}, for all round $t$, we have
 \[\PPc{\norm{\be_{A_t}\odot\pa{\btheta_{t}-\bar\bmu_{t-1}} }_1\geq\sqrt{\frac{1}{2}{\log\pa{\abs{\cA}2^mT}}\sum_{i\in {A_t}}\frac{1}{N_{i,t-1}}}}{\cH_t}\leq 1/T.\]
\end{lemma}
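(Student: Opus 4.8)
The plan is to condition on the history $\cH_t$ and exploit that, given $\cH_t$, the counts $N_{i,t-1}$ and the empirical means $\bar\mu_{i,t-1}$ are deterministic, while the scores $(\theta_{i,t})_{i\in[n]}$ are \emph{mutually independent} draws, with $\theta_{i,t}\sim\mathrm{Beta}\pa{1+N_{i,t-1}\bar\mu_{i,t-1},\,1+N_{i,t-1}(1-\bar\mu_{i,t-1})}$. This independence is the crux of the improvement over \citet{Wang2018}: it is what lets one control the moment generating function of the $\ell_1$ deviation as a \emph{product} over coordinates, and hence obtain an $\ell_1$ bound that is genuinely tighter than an $\ell_\infty$ one.

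First I would dispose of the fact that the super arm $A_t=\ora\pa{\btheta_t}$ itself depends on $\btheta_t$: since $A_t\in\cA$ always, a union bound over the (at most $\abs{\cA}$) possible actions reduces the claim to showing, for each \emph{fixed} $A\in\cA$,
\[\PPc{\norm{\be_A\odot\pa{\btheta_t-\bar\bmu_{t-1}}}_1\geq\sqrt{\tfrac12\log\pa{\abs{\cA}2^mT}\textstyle\sum_{i\in A}1/N_{i,t-1}}}{\cH_t}\leq\frac{1}{\abs{\cA}T}.\]

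Then I would bound the $\ell_1$ tail for a fixed $A$ in two sub-steps. Writing $\norm{\be_A\odot\pa{\btheta_t-\bar\bmu_{t-1}}}_1=\max_{\bs\in\{\pm1\}^A}\sum_{i\in A}s_i\pa{\theta_{i,t}-\bar\mu_{i,t-1}}$, a union bound over the $2^{\abs A}\leq 2^m$ sign vectors accounts for the $2^m$ factor inside the logarithm. For a fixed sign vector the summand is a sum of independent sub-Gaussian variables: by the Marchal--Arbel bound, $\theta_{i,t}$ is sub-Gaussian around its posterior mean with variance proxy $\tfrac{1}{4(N_{i,t-1}+3)}\leq\tfrac1{4N_{i,t-1}}$, so a Cram\'er--Chernoff argument with factorized MGF, optimized in the tilt $\lambda$, yields a tail of the form $\exp\pa{-2(\cdot)^2/\sum_{i\in A}N_{i,t-1}^{-1}}$. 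With the threshold chosen as in the statement, each such term is at most $1/\pa{\abs{\cA}2^mT}$, and the three sources of slack --- the $\abs{\cA}$ actions, the $2^m$ sign vectors, and the residual $1/T$ --- multiply out to the claimed $1/T$.

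The main obstacle is that the concentration is naturally centered at the posterior mean $\tfrac{1+N_{i,t-1}\bar\mu_{i,t-1}}{N_{i,t-1}+2}$, whereas the statement is centered at the empirical mean $\bar\mu_{i,t-1}$; these differ by $\tfrac{1-2\bar\mu_{i,t-1}}{N_{i,t-1}+2}$, a bias of order $1/N_{i,t-1}$. I would carry this bias as a linear term in the Chernoff exponent and argue that it is dominated by the concentration scale $\sqrt{\log\pa{\abs{\cA}2^mT}/N_{i,t-1}}$. This is delicate precisely for arms with small counts, where the bias and the deviation threshold are of comparable order, so that the \emph{sharp} Beta sub-Gaussian constant (rather than the crude $[0,1]$-Hoeffding proxy $1/4$) appears to be what closes the argument; verifying the constants here is the part I expect to require the most care. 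A minor additional point is the convention $N_{i,t-1}\geq1$ for $i\in A_t$: coordinates with $N_{i,t-1}=0$ make the threshold infinite, so the event is vacuous on them and they can be discarded at the outset.
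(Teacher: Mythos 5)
Your proposal is correct and follows essentially the same route as the paper: a union bound over the $\abs{\cA}$ actions, a factor $2^m$ coming from the two signs per coordinate (the paper writes $e^{\lambda\abs{X}}\leq e^{\lambda X}+e^{-\lambda X}$ and factorizes, which expanded out is exactly your union bound over sign vectors), and a Chernoff bound whose MGF factorizes over $i\in A$ thanks to the conditional independence of the $\theta_{i,t}$, using the Marchal--Arbel sub-Gaussian bound for Beta variables with the explicit tilt $\lambda_t(A)=4\alpha_t(A)/\sum_{i\in A}N_{i,t-1}^{-1}$. The one divergence is the centering issue you single out as the delicate part: the paper does not address it at all --- it simply asserts that $\theta_{i,t}$ is sub-Gaussian with variance proxy $1/(4N_{i,t-1})$ and applies this to the deviation $\theta_{i,t}-\bar\mu_{i,t-1}$ from the \emph{empirical} mean, even though Marchal--Arbel's result is centered at the \emph{posterior} mean $(1+N_{i,t-1}\bar\mu_{i,t-1})/(N_{i,t-1}+2)$; so the careful bias-absorption argument you anticipate (and correctly suspect requires non-trivial constant chasing, since the slack between $1/(4N_{i,t-1})$ and the sharp proxy $1/(4(N_{i,t-1}+3))$ does not obviously dominate a bias of order $1/N_{i,t-1}$ at the chosen tilt) is absent from the paper's own proof, and your sketch is if anything more scrupulous on this point than the source.
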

\begin{proof}
From \citep{marchal2017sub}, the Beta random variable from $\theta_{i,t}$ is sub-Gaussian with variance $1/(4 N_{i,t-1})$.
Thus, defining the functions \[\alpha_t(A)\triangleq\sqrt{\frac{1}{2}{\log\pa{\abs{\cA}2^mT}}\sum_{i\in {A}}\frac{1}{N_{i,t-1}}},\quad\text{and}\quad \lambda_t(A) \triangleq \frac{4\alpha_t(A)}{\sum_{i\in A}1/N_{i,t-1}}, \] we have 
\begin{align*}\PPc{\norm{\be_{A_t}\odot\pa{\btheta_{t}-\bar\bmu_{t-1}} }_1\geq \alpha_t(A_t)}{\cH_t}&\leq\sum_{A\in \cA}\PPc{\norm{\be_{A}\odot\pa{\btheta_{t}-\bar\bmu_{t-1}} }_1\geq \alpha_t(A)}{\cH_t}\\&\leq \sum_{A\in \cA} e^{-\lambda_t(A) \alpha_t(A)}\EEc{e^{\lambda_t(A) \norm{\be_{A}\odot\pa{\btheta_{t}-\bar\bmu_{t-1}} }_1}}{\cH_t}
\\&\leq 
\sum_{A\in \cA} e^{-\lambda_t(A) \alpha_t(A)}\prod_{i\in A}\EEc{e^{\lambda_t(A) \abs{{\theta_{i,t}-\bar\mu_{i,t-1}} }}}{\cH_t}
\\&\leq \sum_{A\in \cA} e^{-\lambda_t(A) \alpha_t(A)}\prod_{i\in A}\EEc{e^{\lambda_t(A) \pa{{\theta_{i,t}-\bar\mu_{i,t-1}} }}+e^{\lambda_t(A) \pa{{\bar\mu_{i,t-1} -\theta_{i,t}} }}}{\cH_t}
\\&\leq \sum_{A\in \cA} 2^{\abs{A}}e^{-\lambda_t(A) \alpha_t(A)}{e^{\lambda_t(A)^2 {\sum_{i\in A}1/(8 N_{i,t-1}) }}}\leq 1/T.
\end{align*}
\end{proof}
\subsection{Main proof}
With the two lemmas from the previous subsection, we are ready to demonstrate Theorem~\ref{thm:tsbeta}.
We consider the following events.
\begin{itemize}
    \item $\fZ_t\triangleq\set{\Delta_t>0}$
    \item $\fB_t\triangleq \set{\exists i\in A_t,~{\abs{A_t}}\cdot\abs{\bar\mu_{i,t-1}-\mu_i^*}>{{\Delta_{\min}}/\pa{2B}-({m^*}^2+1)\eps} }$
    \item $ \fC_t\triangleq\set{{\norm{\be_{A_t}\odot\pa{\btheta_{t}-\bmu^*} }_1}> \Delta_t/B-\pa{{m^*}^2+1}\eps}$
    \item $\fD_t\triangleq \set{\norm{\be_{A_t}\odot\pa{\btheta_{t}-\bar\bmu_{t-1}} }_1\geq\sqrt{{0.5}\cdot{\log\pa{\abs{\cA}2^mT}}\sum_{i\in {A_t}}{1}/{N_{i,t-1}}}}$.
\end{itemize}

We break down our analysis into 4 steps. The main novelties are in the last two steps: Step 3 gives us the tighter dependence in $m$, and Step 4, that contains the main difficulties, gives the new exponential constant term.

\paragraph{Step 1: bound under $\fZ_t\wedge\fB_t$} By Lemma~\ref{lem:nlem3},
\begin{align*}\sum_{t\in [T]}\EE{\Delta_t\II{\fZ_t\wedge\fB_t}}&\leq \Delta_{\max}\sum_{i\in [n]}\EE{\abs{t\in [T],~i\in A_t,~\abs{A_t}\cdot\abs{\bar\mu_{i,t-1}-\mu_i^*}>{\Delta_{\min}}/(2B)-({m^*}^2+1)\eps}}\\&\leq n\Delta_{\max}\pa{1+\pa{\frac{\Delta_{\min}}{2mB}-\frac{({m^*}^2+1)\eps}{m}}^{-2}}.\end{align*}
\paragraph{Step 2: bound under $\fZ_t\wedge\neg\fB_t\wedge\fC_t\wedge\fD_t$} By Lemma~\ref{lem:nlem4},
\begin{align*}\sum_{t\in [T]}\EE{\Delta\pa{A_t}\II{\fZ_t\wedge\neg\fB_t\wedge\fC_t\wedge\fD_t}}&\leq \Delta_{\max}\sum_{t\in [T]}\EE{\PPc{\fD_t}{\cH_t}}\leq \Delta_{\max}\sum_{t\in [T]}1/T=\Delta_{\max}.\end{align*}

\paragraph{Step 3: bound under $\fZ_t\wedge\neg\fB_t\wedge\fC_t\wedge\neg\fD_t$}

\begin{align*}
  \Delta_t/B&\leq \norm{\be_{A_t}\odot\pa{\btheta_{t}-\bmu^*} }_1+\pa{{m^*}^2+1}\eps &\fC_t
   \\&\leq \norm{\be_{A_t}\odot\pa{\btheta_{t}-\bar\bmu_{t-1}} }_1+\norm{\be_{A_t}\odot\pa{\bar\bmu_{t-1}-\bmu^*} }_1+\pa{{m^*}^2+1}\eps
\\&\leq \norm{\be_{A_t}\odot\pa{\btheta_{t}-\bar\bmu_{t-1}} }_1+\Delta_{\min}/(2B)-\pa{{m^*}^2+1}\eps+\pa{{m^*}^2+1}\eps&\neg\fB_t
\\&\leq \norm{\be_{A_t}\odot\pa{\btheta_{t}-\bar\bmu_{t-1}} }_1+\Delta_t/(2B)&\fZ_t
\\&\leq \sqrt{\frac{1}{2}{\log\pa{\abs{\cA}2^mT}}\sum_{i\in {A_t}}\frac{1}{N_{i,t-1}}} +\Delta_t/(2B).&\neg\fD_t
\end{align*}

So we have that the following event holds 
\[\fA_t\triangleq\set{ \Delta_t\leq B\sqrt{{2}{\log\pa{\abs{\cA}2^mT}}\sum_{i\in {A_t}}\frac{1}{N_{i,t-1}}}}.\]
We can thus apply Theorem~\ref{thm:dege} (see Appendix~\ref{app:gen}) to get the bound
\begin{align*}\sum_{t\in [T]}\EE{\Delta_t\II{\fZ_t,\neg\fB_t,\fC_t,\neg\fD_t}}&\leq
\sum_{t\in [T]}\EE{\Delta_t\II{\fA_t}}\\&\leq
32B^2\log_2^2(4\sqrt{m})\sum_{i\in [n]}\Delta_{i,\min}^{-1}2{\log\pa{\abs{\cA}2^mT}}.\end{align*}
\paragraph{Step 4: bound under $\fZ_t\wedge\neg\fC_t$}
We consider the following events for a subset $Z\subset [n]$ 
\[\fR(\btheta',Z)\triangleq \set{Z\subset \ora\pa{\btheta'},~\norm{\be_{\ora\pa{\btheta'}}\odot\pa{\btheta'-\bmu^*}}_1>\Delta\pa{\ora\pa{\btheta'}}-({k^*}^2+1)\eps}\]
\begin{align}\fS_t\pa{Z}\triangleq \set{\forall \btheta' \text{ s.t. } \norm{\pa{\bmu^*-\btheta'}\odot\be_{Z}}_\infty\leq \eps,~ \fR(\btheta'\odot\be_{Z}+\btheta_t\odot\be_{Z^c},Z) \text{ holds} }\label{rel:llaass}\end{align}
\begin{align*}\fT_t\pa{Z}\triangleq \set{\norm{\pa{\bmu^*-\btheta_t}\odot\be_{Z}}_\infty> \eps}.\label{rel:llaassd}\end{align*}
We can state the three following lemmas. Note that Lemma~\ref{lem:fSfT} is exactly the Lemma~1 from \citet{Wang2018}. The other two replace their Lemma~7. 
\begin{lemma}In Algorithm~\ref{algo:tsbeta}, for all round $t$, we have
 \[\fZ_t,\neg \fC_t\imp \exists Z\subset A^*,~Z\neq \emptyset~\text{s.t. the event }\fS_t\pa{Z}\wedge\fT_t\pa{Z}\text{ holds.}\]
 \label{lem:fSfT}
\end{lemma}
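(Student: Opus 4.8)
<br>

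The goal of Lemma~\ref{lem:fSfT} is to show that, on the ``bad'' event $\fZ_t \wedge \neg\fC_t$ (i.e., a suboptimal action is played but the sample $\btheta_t$ is \emph{not} far from $\bmu^*$ in the relevant $\ell_1$ sense over $A_t$), there must exist a nonempty subset $Z$ of the optimal action $A^*$ on which $\btheta_t$ is ``badly estimated'' in the sense of both $\fS_t(Z)$ and $\fT_t(Z)$ holding simultaneously. The plan is to argue by contradiction: I would assume that for \emph{every} nonempty $Z \subset A^*$, at least one of $\fS_t(Z)$ or $\fT_t(Z)$ fails, and then derive that $\fC_t$ must hold, contradicting $\neg\fC_t$.

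First I would set $Z^* \triangleq \{i \in A^* : |\theta_{i,t} - \mu_i^*| > \eps\}$, the set of coordinates of $A^*$ where the sample deviates by more than $\eps$. If $Z^* = \emptyset$, then $\btheta_t$ is within $\eps$ of $\bmu^*$ on all of $A^*$, and I would use the smoothness Assumption~\ref{ass:smooth} together with optimism-type reasoning to show that the oracle's optimistic evaluation of $A^*$ under $\btheta_t$ is high enough to force $\Delta_t$ to be small --- precisely, to contradict $\neg\fC_t$, since $\fC_t$ says the $\ell_1$ deviation on $A_t$ exceeds $\Delta_t/B - ({m^*}^2+1)\eps$. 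The key device here is that if $\btheta_t$ is close to $\bmu^*$ on $A^*$, then $A^*$ looks nearly optimal under $\btheta_t$, so $\ora(\btheta_t) = A_t$ being chosen instead means $\btheta_t$ must over-estimate $A_t$ relative to its true value, which is exactly what $\fC_t$ encodes. So $Z^* \neq \emptyset$, and by definition $\fT_t(Z^*)$ holds. Then I would verify that $\fS_t(Z^*)$ also holds: the point is that $Z^*$ is, by construction, the maximal bad set, and for any perturbation $\btheta'$ agreeing with $\btheta_t$ off $Z^*$ and within $\eps$ of $\bmu^*$ on $Z^*$, the resulting vector still triggers the ``regret $\ell_1$ lower bound'' condition $\fR$, because moving $\btheta_t$ closer to $\bmu^*$ on $Z^*$ only tightens the gap while $Z^* \subset \ora$ of the perturbed vector is preserved under the smoothness/oracle-invariance properties.

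The main obstacle I anticipate is the bookkeeping in establishing $\fS_t(Z^*)$ --- specifically, showing that the event $\fR(\btheta' \odot \be_{Z^*} + \btheta_t \odot \be_{(Z^*)^c}, Z^*)$ holds \emph{uniformly} over all admissible $\btheta'$, not merely for $\btheta' = \btheta_t$. This requires controlling how the oracle's selected action and the associated regret gap vary as one slides $\btheta'$ within the $\eps$-box around $\bmu^*$ on $Z^*$, and it is where the interplay between the counting definition of $\fC_t$ (on $A_t = \ora(\btheta_t)$) and the event $\fR$ (on $\ora(\btheta')$, a possibly different action) must be reconciled. I would lean on the fact that this lemma is stated to be identical to Lemma~1 of \citet{Wang2018}, so the perturbation argument there can be imported essentially verbatim; the only care needed is to track the $({m^*}^2+1)\eps$ slack term consistently, since our $\eps$-scaling differs slightly from theirs. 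The remaining steps are then routine definitional unwinding.
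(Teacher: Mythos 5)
There is a genuine gap, and it sits exactly where you flagged your own uncertainty: the claim that $\fS_t\pa{Z^*}$ holds for the maximal deviation set $Z^*=\set{i\in A^*:\abs{\theta_{i,t}-\mu_i^*}>\eps}$. Nothing forces $Z^*\subset\ora\pa{\btheta'\odot\be_{Z^*}+\btheta_t\odot\be_{{Z^*}^c}}$ for every admissible perturbation $\btheta'$: when the coordinates of $Z^*$ are replaced by values near $\bmu^*$, the oracle may switch to a completely different action that drops some (or all) elements of $Z^*$ --- there is no monotonicity or oracle-invariance property in this setting (the reward is not even assumed linear here) that preserves inclusion under such a perturbation. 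Your fallback of ``importing Wang's argument verbatim'' does not repair this, because Wang's Lemma~1 (which the paper cites as the proof, and which is mirrored in the paper's own proof of the clip analogue, Lemma~\ref{lem:fSfT+} in Appendix~\ref{app:optimistic_cts}) never establishes $\fS_t$ for the maximal bad set; it proves existence of \emph{some} nonempty $Z\subset A^*$ with $\fS_t\pa{Z}$ by a recursion: start with $Z_1=A^*$; if $\fS_t\pa{Z_1}$ fails, a witness perturbation yields an action $A$ with $Z_1\not\subset A$, and one passes to $Z_2=Z_1\cap A$ (shown nonempty using $\neg\fC_t$); since $\abs{Z_k}$ strictly decreases, the recursion terminates within $m^*-1$ steps. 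The slack $({m^*}^2+1)\eps$ is precisely what absorbs the error accumulated across these steps, so it is structural, not a bookkeeping constant you can track after the fact.

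Your proof also inverts the correct logical order, which is why it dead-ends. The paper's route first produces $Z$ with $\fS_t\pa{Z}$ (the hard, recursive part), and only then derives $\fT_t\pa{Z}$ for that same $Z$: if $\neg\fT_t\pa{Z}$ held, one could take $\btheta'=\btheta_t$ (or its truncation toward $\bmu^*$ on $Z$) inside the universal quantifier of $\fS_t\pa{Z}$, and the resulting $\fR$ event would read off exactly $\fC_t$, contradicting $\neg\fC_t$. You instead pick a set for which $\fT_t$ holds trivially and then try to prove $\fS_t$ for it; since $\fS_t$ is the structural event, this order cannot work for an arbitrary high-deviation set. (Your first step --- $Z^*=\emptyset$ implies $\fC_t$ via smoothness and the oracle optimality of $A_t$ --- is fine, but it only shows $Z^*\neq\emptyset$, i.e.\ that $\fT_t\pa{A^*}$ holds, which is the easy half.)
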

\begin{lemma} Given $Z\subset A^*,~Z\neq \emptyset$, let $\tau_q$ be  the  round  at  which $\fS_t\pa{Z}\wedge\neg\fT_t\pa{Z}$ occurs for the $q$-th time, and let $\tau_0= 0$. Then, in Algorithm~\ref{algo:tsbeta}, we have
\[ \EE{\sum_{t=\tau_q+1}^{\tau_{q+1}} \II{\fS_t\pa{Z},\fT_t\pa{Z}}}\leq \EE{\sup_{\tau\geq{\tau_q+1}}\prod_{i\in Z}\frac{1}{\PPc{\abs{\theta_{i,\tau}-\mu_i^*}\leq \eps}{\cH_\tau}}}-1.\]\label{lem:Esup}
\end{lemma}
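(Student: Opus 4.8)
<br>

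The final statement to prove is Lemma~\ref{lem:Esup}, which bounds the expected number of rounds between consecutive occurrences of $\fS_t(Z) \wedge \neg\fT_t(Z)$ during which the bad event $\fS_t(Z) \wedge \fT_t(Z)$ occurs, in terms of a supremum of a product of reciprocal conditional probabilities.

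\textbf{The approach.} The plan is to condition on the history and exploit the factorized structure of the prior, which is the crucial ingredient emphasized in the main text. First I would observe that between consecutive occurrences of $\fS_t(Z)\wedge\neg\fT_t(Z)$, the counters $N_{i,t-1}$ for $i\in Z$ do \emph{not} change, because an occurrence of $\neg\fT_t(Z)$ is exactly the event that stops the relevant exploration on $Z$; hence on the window $(\tau_q, \tau_{q+1}]$ the conditional law of $\theta_{i,t}$ given $\cH_t$ is frozen for each $i \in Z$ (the Beta posterior depends only on the counter and the empirical average, both unchanged on this window after conditioning appropriately). Then I would decompose the window into the rounds where $\fS_t(Z)\wedge\fT_t(Z)$ holds, stopping the first time $\neg\fT_t(Z)$ occurs. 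The key step is that, because the prior factorizes across coordinates, the events $\fS_t(Z)$ and $\fT_t(Z)$ are \emph{conditionally independent} given $\cH_t$: $\fT_t(Z)$ depends only on $\btheta_t\odot\be_Z$, while $\fS_t(Z)$ (quantified over all $\btheta'$ agreeing with $\btheta_t$ on $Z$) depends only on $\btheta_t\odot\be_{Z^c}$. This conditional independence is precisely what lets us factor the filtered regret against the geometric-type tail.

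\textbf{Key steps in order.} I would proceed as follows. (i) Write the expected count as $\EE{\sum_{t=\tau_q+1}^{\tau_{q+1}}\II{\fS_t(Z),\fT_t(Z)}}$ and condition on $\cH_{\tau_q+1}$, which fixes the frozen counters/empirical averages for $Z$. (ii) Model the process on the window as a sequence of i.i.d. draws (conditionally on the frozen history): at each round either $\neg\fT_t(Z)$ occurs (ending the window) or $\fT_t(Z)$ occurs, and we only accumulate the indicator when additionally $\fS_t(Z)$ holds. Letting $p \triangleq \PPc{\neg\fT_t(Z)}{\cH}$ (constant on the window by freezing) and $s\triangleq \PPc{\fS_t(Z)}{\cH}$, conditional independence gives that the expected number of $\fS_t\wedge\fT_t$ rounds before the first $\neg\fT_t$ is $s\cdot\sum_{k\geq 0}(1-p)^k = s(1-p)/p \le (1-p)/p = 1/p - 1$. (iii) Finally, bound $1/p = 1/\PPc{\neg\fT_t(Z)}{\cH}$ by expanding $\neg\fT_t(Z) = \{\forall i\in Z,\ |\theta_{i,t}-\mu_i^*|\le\eps\}$; using the factorized prior, $\PPc{\neg\fT_t(Z)}{\cH} = \prod_{i\in Z}\PPc{|\theta_{i,t}-\mu_i^*|\le\eps}{\cH}$, so $1/p = \prod_{i\in Z}1/\PPc{|\theta_{i,t}-\mu_i^*|\le\eps}{\cH}$. (iv) Pass from the frozen conditional value to $\EE{\sup_{\tau\ge\tau_q+1}\prod_{i\in Z}1/\PPc{|\theta_{i,\tau}-\mu_i^*|\le\eps}{\cH_\tau}}$ by bounding the conditional expectation by its supremum over admissible rounds and taking outer expectation, which yields exactly the stated bound minus $1$.

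\textbf{The main obstacle.} The delicate point is making the ``freezing'' argument rigorous: one must verify that on the window $(\tau_q,\tau_{q+1}]$ the counters $N_{i,t-1}$ for $i\in Z$ genuinely remain constant so that the conditional draw law is identical across rounds, and that the memoryless geometric bookkeeping in step (ii) is valid despite $\tau_{q+1}$ being a random stopping time measurable with respect to the draws. The honest way around this is to \emph{avoid} assuming the counters decorrelate from the outcomes (the exact inaccuracy the authors flag in \citet{Wang2018}'s Lemma~7) and instead to bound by the supremum over $\tau$, which dominates every possible frozen value regardless of how the counters evolve; this is why the statement is phrased with $\sup_{\tau\ge\tau_q+1}$ rather than a fixed round. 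Handling this supremum cleanly — rather than committing to a single counter configuration — is what replaces their flawed decorrelation step, and it is the conceptual crux; the subsequent martingale control of the supremum itself is deferred to the next lemma and to Doob's optional sampling argument mentioned in the text.
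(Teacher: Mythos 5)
Your overall skeleton --- conditional independence of $\fS_t\pa{Z}$ and $\fT_t\pa{Z}$ given $\cH_t$ via the factorized prior, a geometric counting of the $\fS\wedge\fT$ rounds until the first $\fS\wedge\neg\fT$ round, factorization of $\PPc{\neg\fT_t\pa{Z}}{\cH_t}$ over $i\in Z$, and a final passage to the supremum --- is the same as the paper's. But your step (ii) rests on a claim that is false: the counters $N_{i,t-1}$, $i\in Z$, are \emph{not} frozen on the window $(\tau_q,\tau_{q+1}]$. The event $\fS_t\pa{Z}\wedge\neg\fT_t\pa{Z}$ implies $Z\subset A_t$, so the arms of $Z$ are played at $\tau_q$ itself, and nothing prevents the oracle from selecting actions containing arms of $Z$ at intermediate rounds where $\fT_t\pa{Z}$ holds (an oversampled $\theta_{i,t}$ makes arm $i$ \emph{more} likely to be played, not less). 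Hence the conditional law of $\btheta_t\odot\be_Z$ given $\cH_t$, and with it $p_t=\PPc{\neg\fT_t\pa{Z}}{\cH_t}$, genuinely changes across the window; this is precisely why the next lemma (Lemma~\ref{lem:Esupbound}) must union-bound over all counter values $k_i\geq q$ rather than work with a single frozen configuration. Your i.i.d., constant-$p$ computation therefore does not apply as written (and, incidentally, $s\sum_{k\geq 0}(1-p)^k=s/p$, not $s(1-p)/p$; the correct constant-$p$ value of the expected count is $(1-p)/p$).

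The repair you gesture at in your last paragraph --- ``bound by the supremum, which dominates every possible frozen value'' --- is the right idea, but it is not a statement about a single frozen value: it requires a monotonicity (stochastic dominance) property of a \emph{time-varying} geometric distribution. That is what the paper actually proves. Conditioning sequentially on the histories $\cH_{t_{k,q}}$ at the successive $\fS$-rounds $t_{k,q}$, the expected count equals $\EE{\sum_{k\geq 1}(k-1)\,\PPc{\neg\fT_{t_{k,q}}\pa{Z}}{\cH_{t_{k,q}}}\prod_{j=1}^{k-1}\PPc{\fT_{t_{j,q}}\pa{Z}}{\cH_{t_{j,q}}}}$, and since the number of failures before the first success is stochastically decreasing in each success probability, the inner sum is at most the expectation of a geometric whose success probability is $\inf_{\tau\geq \tau_q+1}\PPc{\neg\fT_\tau\pa{Z}}{\cH_\tau}$, which by the factorized prior equals $\sup_{\tau\geq\tau_q+1}\prod_{i\in Z}1/\PPc{\abs{\theta_{i,\tau}-\mu_i^*}\leq\eps}{\cH_\tau}-1$ after inversion. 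Replacing your freezing premise by this time-varying dominance step closes the gap; everything else in your proposal matches the paper's proof.
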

\begin{lemma}\label{lem:Esupbound}In Algorithm~\ref{algo:tsbeta}, we have
\[\EE{\sup_{\tau\geq{\tau_q+1}}\prod_{i\in Z}\frac{1}{\PPc{\abs{\theta_{i,\tau}-\mu_i^*}\leq \eps}{\cH_\tau}}}-1 \leq \left\{
    \begin{array}{ll}
        \pa{  c\eps^{-4}}^{\abs{Z}} & \mbox{for every } q\geq 0 \\
        e^{-\eps^2 q/8}\pa{  c'\eps^{-4}}^{\abs{Z}}& \mbox{if }  q> 8/\eps^2,
    \end{array}
\right. \]
where $c$ and $c'$ are two universal constants.
\end{lemma}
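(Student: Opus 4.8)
The plan is to bound $\EE{\sup_{\tau\geq \tau_q+1}\prod_{i\in Z} g_{i,\tau}^{-1}}$ where $g_{i,\tau}\triangleq\PPc{\abs{\theta_{i,\tau}-\mu_i^*}\leq \eps}{\cH_\tau}$, separating the factor-by-factor estimate (giving the $\eps^{-4}$ per-coordinate cost) from the exponential gain in $q$. First I would recall that $\theta_{i,\tau}\sim \mathrm{Beta}(a_i,b_i)$ with $a_i+b_i-2 = N_{i,\tau-1}$, so $g_{i,\tau}$ is the Beta mass on a window of width $2\eps$ around $\mu_i^*$. The key point is that $g_{i,\tau}^{-1}$ is large only when the counter $N_{i,\tau-1}$ is small: for a fixed deviation $\abs{\bar\mu_{i,\tau-1}-\mu_i^*}$ one gets, via the anti-concentration/concentration of the Beta distribution, a lower bound $g_{i,\tau}\geq \min(1, c_0\eps\sqrt{N_{i,\tau-1}})\cdot e^{-c_1 N_{i,\tau-1}(\,\bar\mu_{i,\tau-1}-\mu_i^*)^2}$ or similar. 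I would first establish the crude universal bound: since any Beta window of width $2\eps$ has mass at least of order $\eps^2$ uniformly over $(a_i,b_i)$ with $a_i,b_i\geq 1$ (the worst case being large counters where the density can spike away from $\mu_i^*$), we get $g_{i,\tau}^{-1}\leq c\,\eps^{-2}$ deterministically for each coordinate; but the claimed rate is $\eps^{-4}$ per coordinate after the supremum and expectation, so the extra $\eps^{-2}$ factor per coordinate comes from controlling the supremum over $\tau$.

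Next, for the unconditional bound $\pa{c\eps^{-4}}^{\abs Z}$ valid for all $q\geq 0$, I would use the weak stochastic dominance idea already flagged in the \textsc{cts-gaussian} discussion but specialized to Beta priors: the deviations $\abs{\bar\mu_{i,\tau-1}-\mu_i^*}$ concentrate, and conditioning on $\cH_\tau$ the factors across $i\in Z$ are independent (factorized prior), so the expectation of the product is the product of expectations. For each coordinate I would bound $\EE{\sup_{\tau\geq\tau_q+1} g_{i,\tau}^{-1}}$ by splitting on the value of the counter: on the event that $N_{i,\tau-1}$ is below a threshold the factor is bounded by the deterministic $\eps^{-2}$-type estimate and these rounds are few, while on the event the counter is large the empirical mean is close to $\mu_i^*$ with high probability (Hoeffding on the binarized samples $Y_{i,t'}$) so $g_{i,\tau}$ is bounded below by a constant; a union bound over the dyadic scales of $N_{i,\tau-1}$ absorbs the supremum and yields the extra $\eps^{-2}$, for a total $\eps^{-4}$ per coordinate. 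Taking the product over $Z$ gives the first branch.

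For the second branch, the exponential factor $e^{-\eps^2 q/8}$ must come from the \emph{accumulation} of at least $q$ prior occurrences of $\fS_t\wedge\neg\fT_t$: each such round forces $\norm{(\bmu^*-\btheta_\tau)\od\be_Z}_\infty\leq\eps$, and because these rounds add to the counters $N_{i,\tau-1}$ for $i\in Z$ (at least one coordinate is pulled each such round, and more carefully one argues the total count $\sum_{i\in Z}N_{i,\tau-1}$ grows at least linearly in $q$), the counters are forced up. I would make this quantitative: after $q>8/\eps^2$ occurrences, at least one coordinate $i\in Z$ has $N_{i,\tau-1}\gtrsim q/\abs Z$, on which $g_{i,\tau}$ is bounded below by a constant bounded away from $0$ except on an event of probability $e^{-\Omega(\eps^2 q)}$ (again Hoeffding on the binarized outcomes), which is the source of the $e^{-\eps^2 q/8}$. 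Here I must be careful to decorrelate the counters from the outcomes — precisely the inaccuracy the paper attributes to \citet{Wang2018} — so rather than assuming independence I would perform a union bound over all admissible counter configurations, paying a polynomial price in the counter that is harmless against the exponential, which is exactly why our exponent is worse ($-4m^*-2$ versus their $-2m^*-2$).

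The main obstacle I anticipate is the rigorous handling of the supremum over $\tau$ together with the counter–outcome dependence in the second branch. The supremum is not a simple maximum over a fixed index set, and the counters $N_{i,\tau-1}$ are themselves random and adapted; one cannot freely replace $\bar\mu_{i,\tau-1}$ by an independent average. The clean route is a maximal-inequality/super-martingale argument: construct, for each coordinate, a non-negative super-martingale dominating $g_{i,\tau}^{-1}$ (as the excerpt suggests via Doob's optional sampling for the Gaussian case) and control its supremum, but transferring this to Beta priors is delicate because the required anti-concentration lower bound on $g_{i,\tau}$ degrades for extreme $(a_i,b_i)$. I expect the bulk of the technical work, and the reason for the looser constant, to live in replacing the independence assumption by a careful union bound over the finitely many counter values up to round $T$, ensuring the resulting prefactor is $T$-independent after summing the exponential tail in $q$.
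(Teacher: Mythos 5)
There are two genuine gaps in your plan, both fatal to the stated bound. First, your ``crude universal bound'' $g_{i,\tau}^{-1}\leq c\,\eps^{-2}$ \emph{deterministically} is false: the Beta posterior mass on the window $[\mu_i^*-\eps,\mu_i^*+\eps]$ can be exponentially small, e.g.\ when the counter is large and the empirical mean $\bar\mu_{i,\tau-1}$ sits far from $\mu_i^*$ (think of $\mathrm{Beta}(N,1)$ mass near $0.1$). What is true, and what the paper uses (Lemmas~5--6 of \citet{Wang2018}), is a bound \emph{in expectation over the outcomes}: defining $p_{i,k}$ as the window probability computed from the first $k$ i.i.d.\ draws of arm $i$, one has $\EE{1/p_{i,k}}\leq 4/\eps^2$ for all $k$, and $\EE{1/p_{i,k}-1}$ decays like $e^{-\eps^2 k/8}$ for $k>8/\eps^2$; the rare event of a bad empirical mean is absorbed by the expectation, not ruled out pointwise. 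Second, your counter-growth argument is too weak: you argue by pigeonhole that \emph{some} coordinate has $N_{i,\tau-1}\gtrsim q/\abs{Z}$, which would only yield an exponent $e^{-c\eps^2 q/\abs{Z}}$ and hence does not prove the claimed $\abs{Z}$-free factor $e^{-\eps^2 q/8}$. The paper instead uses the structural fact that $\fS_t(Z)\wedge\neg\fT_t(Z)$ forces $Z\subset A_t$ (plug $\btheta'=\btheta_t$ into the definition of $\fS_t(Z)$, which is legitimate precisely under $\neg\fT_t(Z)$), so after $q$ occurrences \emph{every} $i\in Z$ satisfies $N_{i,\tau-1}\geq q$ for $\tau\geq\tau_q+1$.

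On the mechanism itself, the paper's route is more elementary than your martingale/stochastic-dominance sketch (that machinery is reserved for the Gaussian case and is not needed here). It writes $\prod_{i\in Z} a_i-1=\sum_{\emptyset\neq Z'\subset Z}\prod_{i\in Z'}(a_i-1)$, pushes the supremum over $\tau$ inside the product, and then handles the counter--outcome dependence by the union bound over counter \emph{values}: $\sup_{\tau\geq\tau_q+1}\bigl(1/\PPc{\abs{\theta_{i,\tau}-\mu_i^*}\leq\eps}{\cH_\tau}-1\bigr)\leq\sum_{k_i\geq q}\bigl(1/p_{i,k_i}-1\bigr)$, where each summand is a function of arm $i$'s own i.i.d.\ draws alone. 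Only after this decoupling does mutual independence of the outcomes across arms let the expectation of the product factorize. The per-coordinate $\eps^{-4}$ then arises concretely: for $q\leq 8/\eps^2$ the sum has $O(\eps^{-2})$ ``small-$k$'' terms each bounded by $4/\eps^2-1$, plus an exponentially decaying tail --- not from dyadic scales or Hoeffding arguments, and not at a merely ``polynomial price.'' This summation over counters is exactly the source of the worsened constant ($\eps^{-4m^*}$ vs.\ $\eps^{-2m^*}$) that the paper flags.
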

These lemmas allow us to get a constant regret under the event $\fZ_t\wedge\neg\fC_t$. Indeed, we have from Lemma~\ref{lem:fSfT} that
\begin{align*}\sum_{t\in [T]}\EE{\Delta_t\II{\fZ_t\wedge\neg\fC_t}}&\leq \Delta_{\max}\sum_{Z\subset A^*,~Z\neq \emptyset}\EE{\sum_{t\in [T]}\II{\fS_t(Z)\wedge\fT_t(Z)}}\\&=\Delta_{\max}\sum_{Z\subset A^*,~Z\neq \emptyset}\sum_{q\geq 0} \EE{\sum_{t=\tau_q+1}^{\tau_{q+1}} \II{\fS_t\pa{Z},\fT_t\pa{Z}}}.\end{align*}
Lemma~\ref{lem:Esup} and \ref{lem:Esupbound} gives that the above is further upper bounded by
\[\Delta_{\max}\sum_{Z\subset A^*,~Z\neq \emptyset}\pa{\sum_{q= 0}^{\ceil{8/\eps^2}-1}\pa{  c\eps^{-4}}^{\abs{Z}} + \sum_{q\geq \ceil{8/\eps^2}}e^{-\eps^2 q/8}\pa{  c'\eps^{-4}}^{\abs{Z}}}\]
which is bounded by
\[\Delta_{\max}\frac{C}{\eps^2}\pa{\frac{C'}{\eps^4}}^{m^*},\]
where $C$ and $C'$ are two universal constants.
This concludes the proof of the theorem.
\begin{proof}[Proof of Lemma~\ref{lem:Esup}]
 Since $\fS_t\pa{Z},\fT_t\pa{Z}$ are independent conditioned on the history $\cH_t$, the LHS is
\[\EE{\sum_{k\geq 1} (k-1) \PPc{\neg\fT_{t_{k,q}}\pa{Z}}{\cH_{t_{k,q}}} \prod_{j=1}^{k-1}\PPc{\fT_{t_{j,q}}\pa{Z}}{\cH_{t_{j,q}}}},\]
where $t_{k,q}$ is the round $t$ where $\fS_t\pa{Z}$ holds for the $k$-th time since the beginning of the round $\tau_q+1$. Within the expectation, one can recognize the expectation of a time-varying geometric distribution, where the success probability of the $k$-th trial is  $\PPc{\neg\fT_{t_{k,q}}\pa{Z}}{\cH_{t_{k,q}}}$. We can upper bound this inner expectation by the expectation of a geometric distribution whose success probability
\[\inf_{\tau\geq{\tau_q+1}}\PPc{\neg\fT_{\tau}\pa{Z}}{\cH_{\tau}}=\inf_{\tau\geq{\tau_q+1}}\prod_{i\in Z}\PPc{\abs{\theta_{i,\tau}-\mu_i^*}\leq \eps}{\cH_{\tau}}\] is lower than all the success probabilities of the time-varying geometric distribution. This gives the result by monotonicity of the expectation, and rewriting the expectation of the geometric distribution.
\end{proof}

\begin{proof}[Proof of Lemma~\ref{lem:Esupbound}]
For any arm $i\in [n]$, $k_i\in \N$, we define
$p_{i,k_i}$ as the probability of $\abs{\tilde\theta_{i,k_i}-\mu_i^*}\leq \eps$, where $\tilde\theta_{i,k_i}$ is a sample from the posterior of arm $i$ when there are $k_i$ observations of arm $i$ (i.e., $p_{i,k_i}$ is a random variable measurable with respect to those $k_i$ independent draws of arm $i$).
From Lemma~5,6 in \citet{Wang2018}, we know that  
\[{\EE{\frac{1}{p_{i,k_i}}}}\leq \left\{
    \begin{array}{ll}
        4/\eps^2 & \mbox{for every } k_i\geq 0 \\
        1+6c''\cdot{e^{-\eps^2 k_i/2}}\eps^{-2} +\frac{2}{e^{\eps^2k_i/8}-2}& \mbox{if }  k_i> 8/\eps^2,
    \end{array}
\right.\]
for some universal constant $c''$.
Since $\fS_t\pa{Z}\wedge\neg\fT_t\pa{Z}$ implies that $Z\subset A_t$, we know that 
for $\tau\geq \tau_q+1$, $N_{i,\tau-1}\geq q$ for all $i\in Z$. Using the mutual independence of outcomes, and the fact that the distribution of $\theta_{i,\tau}$ depends only on the history of arm $i$, we have 
\begin{align*}
&\EE{\sup_{\tau\geq{\tau_q+1}}\prod_{i\in Z}\frac{1}{\PPc{\abs{\theta_{i,\tau}-\mu_i^*}\leq \eps}{\cH_\tau}}}-1
    \\&=
    \EE{\sup_{\tau\geq{\tau_q+1}}\sum_{Z'\subset Z,~Z'\neq \emptyset}{\prod_{i\in Z'}\pa{\frac{1}{\PPc{\abs{\theta_{i,\tau}-\mu_i^*}\leq \eps}{\cH_\tau}}-1}}}
    \\&\leq
    \sum_{Z'\subset Z,~Z'\neq \emptyset}\EE{\prod_{i\in Z'}\sup_{\tau\geq{\tau_q+1}}\pa{\frac{1}{\PPc{\abs{\theta_{i,\tau}-\mu_i^*}\leq \eps}{\cH_\tau}}-1}}
      \\&\leq
     \sum_{Z'\subset Z,~Z'\neq \emptyset}\EE{\prod_{i\in Z'}\sum_{k_i\geq q}\pa{\frac{1}{p_{i,k_i}}-1}},
     \\&=
     \sum_{Z'\subset Z,~Z'\neq \emptyset}\prod_{i\in Z'}\EE{\sum_{k_i\geq q}\pa{\frac{1}{p_{i,k_i}}-1}}.
     \end{align*}
     From this point, there are two cases:
     If $q>8/\eps^2$,
     \begin{align*}
     &\leq
     \sum_{Z'\subset Z,~Z'\neq \emptyset}\prod_{i\in Z'}\sum_{k_i\geq q}\pa{6c''\cdot{e^{-\eps^2 k/2}}\eps^{-2} +{2e^{-\eps^2k/8}}\pa{1-2e^{-\eps^2k/8}}^{-1}}
     \\&\leq
     \sum_{Z'\subset Z,~Z'\neq \emptyset}\prod_{i\in Z'}\pa{6c''\cdot{e^{-\eps^2 q/2}}\eps^{-2}\sum_{k\geq 0}e^{-\eps^2 k/2} +2e^{-\eps^2q/8}\pa{1-2e^{-\eps^2q/8}}^{-1}\sum_{k\geq 0}e^{-\eps^2k/8}}
       \\&=
     \sum_{Z'\subset Z,~Z'\neq \emptyset}\prod_{i\in Z'}\pa{6c''\cdot{e^{-\eps^2 q/2}}\eps^{-2}\pa{1-e^{-\eps^2/2}}^{-1} +2e^{-\eps^2q/8}\pa{1-2e^{-\eps^2q/8}}^{-1}\pa{1-e^{-\eps^2/8}}^{-1}}
     \\&\leq
         \sum_{Z'\subset Z,~Z'\neq \emptyset}\prod_{i\in Z'}\pa{6c''\cdot{e^{-\eps^2 q/2}}\eps^{-2}\cdot2\eps^{-2}\pa{1-e^{-1/2}}^{-1} +2e^{-\eps^2q/8}\pa{1-2e^{-1}}^{-1}\cdot 8\eps^{-2}\pa{1-e^{-1/8}}^{-1}}
    \\&\leq \sum_{Z'\subset Z,~Z'\neq \emptyset} e^{-\abs{Z'}\eps^2 q/8}\pa{12c''\cdot{e}^{-3}\pa{1-e^{-1/2}}^{-1}\cdot\eps^{-4} + 16\pa{1-2e^{-1}}^{-1}\eps^{-2}\pa{1-e^{-1/8}}^{-1}}^{\abs{Z'}}
    \\&\leq  e^{-\eps^2 q/8}\pa{12c''\cdot{e}^{-3}\pa{1-e^{-1/2}}^{-1}\eps^{-4} + 16\pa{1-2e^{-1}}^{-1}\eps^{-2}\pa{1-e^{-1/8}}^{-1}+1}^{\abs{Z}}
    \\&\leq  e^{-\eps^2 q/8}\pa{c'\eps^{-4}}^{\abs{Z}} ,
\end{align*}
and if $q\leq 8/\eps^2$,
\begin{align*}
    &\leq
     \sum_{Z'\subset Z,~Z'\neq \emptyset}\prod_{i\in Z'}\pa{\sum_{k= q}^{\floor{8/\eps^2}}\pa{4/\eps^2-1}+\sum_{k\geq \floor{8/\eps^2}+1}^{\infty}\pa{6c\cdot{e^{-\eps^2 k/2}}\eps^{-2} +{2e^{-\eps^2k/8}}\pa{1-2e^{-\eps^2k/8}}^{-1}}}
     \\&\leq \sum_{Z'\subset Z,~Z'\neq \emptyset}\prod_{i\in Z'}\pa{36 \eps^{-4}+12c\cdot{e}^{-4}\pa{1-e^{-1/2}}^{-1}\eps^{-4} + 16e^{-1}\pa{1-2e^{-1}}^{-1}\eps^{-2}\pa{1-e^{-1/8}}^{-1}}
     \\&\leq 
     \pa{  c\eps^{-4}}^{\abs{Z}} ,
\end{align*}
where $c,c'$ are two universal constant.
\end{proof}

\subsection{Discussion on the new exponential constant term (step 4 in the above proof)}
 

 We give here an explanation concerning the modification of Lemma~7 from \cite{Wang2018}.
 First, we respectfully disagree with the end of their proof, where
 the expected number of time slots for $\fS_t\pa{Z}\wedge\neg\fT_t\pa{Z}$ to occur is a weighted mean of expectations where the counters are fixed and non-random. 
 To obtain such a weighted mean, they
 have conditioned on the value of the counters. However, 
 counters depend on the chosen action, and thus on the outcomes previously obtained, so conditioning on it would modify the expectation, since the term inside the expectation not only depends on counters, but also on outcomes obtained so far.
To illustrate more clearly this point, let us focus on one arm $i$, and consider the extreme case where we get a new sample (i.e. the counter is incremented) only if samples $Y_{i,t}$ previously obtained from $i$ were all $0$, say. Then conditioning on the fact that the counter is incremented would remove all the randomness of samples $Y_{i,t}$, and we thus can't consider an expectation on those samples as if their randomness was not impacted. 

We now expose our approach  to overcome this issue.  We first rewrite the above mentioned expected number of time slots as the expectation (over the history) of the expectation of a time-varying geometric distribution, where the time-varying success probability depends on the history. 
 The inner expectation can be bounded by the expectation of a geometric distribution whose success probability is the infimum over all the success probabilities of the time-varying geometric distribution. 
 Let's note that this gives us the inverse success probability minus one, as in \citet{Wang2018}, but that counters are still random.
 We use that this inverse probability can be factorized: from the relation $\prod_{i\in A}a_i-1=\sum_{A'\subset A,~A'\neq \emptyset}\prod_{i\in A'}\pa{a_i-1}$, valid for any vector $\ba=(a_i)$ on a set $A$, and from the mutual independence of outcomes, we're reduced to bounding the expectation in the one-dimensional case.
To overcome the randomness of the counters, we use an union bound.
It is this union bound that brings a larger dependence on the constant term, because it forces us to look at a sum of the form $\sum_q\sum_{k\geq q} x_k$, instead of a simply $\sum_q x_q$. 
Let's remark that \citet{Wang2018} use the eventual exponential decreasing of the sequence $\pa{x_q}$ in order to get their final bound.
 We manage to deal with the sequence $\pa{\sum_{k\geq q} x_k}$ instead, by noticing that the eventual exponential decreasing of the sequence $\pa{x_q}$ implies the eventual exponential decreasing of the sequence $\pa{\sum_{k\geq q} x_k}$.

\section{Proof of Proposition~\ref{prop:subgau}}\label{app:subgau}
 Assumption~\ref{ass:subgau} encompasses 
 $\kappa_i^2$-sub Gaussian outcomes with $D_i=\kappa_i^2 m$ for all $i\in [n]$. Indeed, let $\blambda=\blambda\odot\be_A$ for some action $A$ and observe that 
  \[\EE{e^{\blambda\transpose\pa{\bX-\bmu^*}}}\leq\EE{\sum_i \frac{\abs{\kappa_i\lambda_i}}{\norm{\bkappa\odot\blambda}_1}e^{\norm{\bkappa\odot\blambda}_1\sign\pa{\lambda_i}\frac{X_i-\mu_i^*}{\kappa_i}}}\leq e^{\norm{\bkappa\odot\blambda}_1^2/2}\leq e^{\norm{\bkappa\odot\blambda}^2_2\abs{A}/2}\leq e^{\norm{\bkappa\odot\blambda}^2_2m/2}. \]
  The case of $\bC$-sub-Gaussian outcomes with a known sub-Gaussian matrix $\bC$ (i.e., $\EE{e^{\blambda\transpose\pa{\bX-\bmu^*}}}\leq e^{\blambda\transpose\bC\blambda/2}$ for all $\blambda\in \R^n$) is also captured, taking\footnote{This $D_i$ can be computed whenever linear maximization on $\cA$ is efficient: for $x$ high enough, we have
  \(\max_{A\in \cA,~i\in A}\sum_{j\in A}{\abs{C_{ij}}}={C_{ii}}-x+\max_{A\in \cA}\sum_{j\in A}\pa{\abs{C_{ij}}\II{j\neq i}+x\II{j= i}}.\) 
  }  $D_i=\max_{A\in \cA,~i\in A}\sum_{j\in A}\abs{C_{ij}}$. Indeed, for an action $A$,  \[\sum_{i,j\in A}\lambda_i\lambda_jC_{ij}\leq \sum_{i,j\in A}\frac{\lambda_i^2+\lambda_j^2}{2}\abs{C_{ij}}=\sum_{i\in A}\lambda_i^2\sum_{j\in A}\abs{C_{ij}}\leq \sum_{i\in n}\lambda_i^2\max_{A\in \cA,~i\in A}\sum_{j\in A}{\abs{C_{ij}}}.\]

\section{Proof of Theorem~\ref{thm:tsgauss}}
\label{app:tsgauss}
We beginning by stating the complete version of Theorem~\ref{thm:tsgauss}.
\begin{theorem*}
  The policy $\pi$ described in
 Algorithm~\ref{algo:tsgauss} has regret $R_T(\pi)$ bounded by
\begin{align*}
 &256\log_2^2(4\sqrt{m})\sum_{i\in [n]}\frac{{B^2 \beta D_i\log\pa{2^m\abs{\cA}T}}}{\Delta_{i,\min}} +
\Delta_{\max}(1+2n)\\+&\frac{nm^2\Delta_{\max}}{\pa{\frac{\Delta_{\min}}{2B}-({m^*}^2+1)\eps}^{2}}+\Delta_{\max}\pa{C\eps^{-2}\beta\max_i D_i}\pa{\frac{C'}{\sqrt{\beta-1}}\eps^{-4}\beta^3\max_i D_i^2}^{m^*},
\end{align*}
where $C,C'$ are two universal constants, and $\eps\in (0,1)$ is such that ${\Delta_{\min}}/(2B)-({m^*}^2+1)\eps>~0.$
 \end{theorem*}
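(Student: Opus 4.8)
The plan is to mirror the four-step decomposition already developed for the proof of Theorem~\ref{thm:tsbeta}, adapting each step to the Gaussian prior and the sub-Gaussian outcome model of Assumption~\ref{ass:subgau}. First I would set up the analogues of the events $\fZ_t$, $\fB_t$, $\fC_t$, $\fD_t$, now using the empirical mean $\bar\bmu_{t-1}$ built from the raw outcomes $X_{i,t}$ rather than binarized samples, and using the sub-Gaussian scale $\beta D_i/N_{i,t-1}$ for the posterior variance of $\theta_{i,t}$. Step 1 (the deviation of the empirical mean) follows from the same counting lemma as Lemma~\ref{lem:nlem3}, since it only concerns $\bar\bmu_{t-1}$ and the gap structure; this contributes the term with $\pa{\Delta_{\min}/(2B)-({m^*}^2+1)\eps}^{-2}$. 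Step 2 requires the Gaussian analogue of the concentration inequality of Lemma~\ref{lem:nlem4}: using that each $\theta_{i,t}-\bar\mu_{i,t-1}$ is $\beta D_i/N_{i,t-1}$-sub-Gaussian by construction, the same union-bound-over-$\cA$ / MGF-factorization argument yields $\PPc{\norm{\be_{A_t}\odot(\btheta_t-\bar\bmu_{t-1})}_1\ge\alpha_t(A_t)}{\cH_t}\le 1/T$ with the threshold $\alpha_t$ now carrying a factor $\sqrt{\beta D_i}$, which is exactly why $\beta D_i$ appears inside the square root.

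For Step 3, the same chain of inequalities under $\fZ_t\wedge\neg\fB_t\wedge\fC_t\wedge\neg\fD_t$ produces the event $\fA_t\triangleq\set{\Delta_t\le B\sqrt{2\log(2^m\abs{\cA}T)\sum_{i\in A_t}\beta D_i/N_{i,t-1}}}$, and I would invoke Theorem~\ref{thm:dege} exactly as before, except that the per-arm weight is now $\beta D_i$ instead of $1$. This is what turns the leading factor $\log^2(m)$ in Theorem~\ref{thm:tsbeta} into $\beta D_i\log^2(m)$, giving the $256\log_2^2(4\sqrt{m})\sum_i B^2\beta D_i\log(2^m\abs{\cA}T)/\Delta_{i,\min}$ term. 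The bookkeeping here is routine once Theorem~\ref{thm:dege} is applied with the reweighted counts; the only care needed is tracking the constant $256$ versus the $32$ in the Beta case, coming from the extra factor absorbed into $\alpha_t$.

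The genuinely new and hardest part is Step 4 (the constant-in-$T$ term under $\fZ_t\wedge\neg\fC_t$), where the two technical obstacles flagged in the ``Main proof challenges'' and ``Stochastic dominance'' paragraphs must be resolved. I cannot binarize the outcomes, and the outcomes are correlated, so Lemmas~\ref{lem:Esup} and~\ref{lem:Esupbound} do not transfer verbatim. The plan is to keep the reduction via Lemmas~\ref{lem:fSfT} and~\ref{lem:Esup} — these only use the \emph{factorized prior} (independence of the $\theta_{i,t}$ conditionally on $\cH_t$), which Algorithm~\ref{algo:tsgauss} retains — so that it again suffices to bound $\EE{\sup_{t\ge1}\prod_{i\in Z'}g_i(\abs{\bar\mu_{i,t-1}-\mu_i^*})}$ for non-negative increasing $g_i$. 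The key step is to prove that $\pa{\abs{\bar\mu_{i,t-1}-\mu_i^*}}_i$ is weakly stochastically dominated by $\pa{\sqrt{\beta D_i/N_{i,t-1}}\abs{\eta_i}}_i$ with $\bheta\sim\otimes_i\cN(0,1)$; this is where sub-Gaussianity (Assumption~\ref{ass:subgau}) plus a concentration inequality enter, and the second characterization of weak stochastic dominance ($\PP{\bU\ge\bx}\le\PP{\bV\ge\bx}$) is what I would verify. Once dominance is in hand, the expectation factorizes over $i\in Z'$ exactly as in the independent-Gaussian surrogate, reducing to a one-dimensional Gaussian-tail computation analogous to Lemmas~5--6 of \citet{Wang2018}, and the supremum over $t$ is handled by Doob's optional sampling theorem for non-negative supermartingales.

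I expect the stochastic-dominance step to be the main obstacle, for two reasons: establishing $\PP{\bU\ge\bx}\le\PP{\bV\ge\bx}$ jointly (not just marginally) requires that the concentration of the correlated empirical mean can be controlled coordinatewise by independent Gaussians, which is delicate precisely because the outcomes are dependent; and the requirement $\beta>1$ enters here, since the surrogate variance must strictly dominate the true sub-Gaussian scale to absorb the discretization/counter randomness in the tail bound, producing the $1/\sqrt{\beta-1}$ factor in the final constant term. Summing the resulting geometric-type series over $q$ (splitting at $q\gtrless 8/\eps^2$ and using the eventual exponential decay exactly as in the Beta proof) then yields the $\Delta_{\max}\pa{C\eps^{-2}\beta\max_i D_i}\pa{C'(\beta-1)^{-1/2}\eps^{-4}\beta^3\max_i D_i^2}^{m^*}$ term. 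Collecting the four steps gives the stated non-asymptotic bound, and letting $T\to\infty$ with $\eps\asymp\Delta_{\min}/(m^*)^2$ recovers the asymptotic rate $\cO\pa{\sum_i B^2 D_i\log^2(m)\log(T)/\Delta_{i,\min}}$.
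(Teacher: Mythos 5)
Your proposal follows essentially the same route as the paper's proof: Steps 1--3 are modified exactly as you describe (a Gaussian analogue of Lemma~\ref{lem:nlem4} via the MGF/union-bound argument with scale $\beta D_i/N_{i,t-1}$, then Theorem~\ref{thm:dege} with per-arm weights $\beta D_i$ giving the $256\log_2^2(4\sqrt m)$ leading term), and Step 4 is resolved precisely by the factorized-prior reduction, a union bound over fixed counter vectors $\bk\in[q..\infty)^{Z'}$, the sub-Gaussian supermartingale stopped at a first-hitting time with Doob's optional sampling theorem, and the Gaussian lower-tail bound $\sqrt{2e/\pi}\,\frac{\sqrt{\beta-1}}{\beta}e^{-\beta x^2/2}\leq \PP{\abs{\eta}\geq x}$, which is exactly where $\beta>1$ and the $1/\sqrt{\beta-1}$ factor enter, as you predicted. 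The only point you leave implicit is that the dominance must be established after conditioning on fixed counter values via that union bound (since counters are correlated with the outcomes), and the two-regime split in the Gaussian case occurs at $\eps^2 k_i/(\beta D_i)\gtrless 12$ rather than at $q\gtrless 8/\eps^2$; both are minor bookkeeping differences, not gaps.
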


For the proof of Theorem~\ref{thm:tsgauss}, we consider the same events as in the proof of Theorem~\ref{thm:tsbeta}, except for the event $\fD_t$, that becomes
\[\fD_t\triangleq \set{\norm{\be_{A_t}\odot\pa{\btheta_{t}-\bar\bmu_{t-1}} }_1\geq\sqrt{{2}{\log\pa{\abs{\cA}2^mT}}\sum_{i\in {A_t}}{\beta D_i}/{N_{i,t-1}}}}.\]
Step 1 is unchanged. Step 2 and Step 3 are modified only through the event $\fD_t$, using the following modification of Lemma~\ref{lem:nlem4}.
\begin{lemma}In Algorithm~\ref{algo:tsgauss}, for all round $t$, we have
that $\PPc{\fD_t}{\cH_t}\leq 1/T$.
\end{lemma}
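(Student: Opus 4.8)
The plan is to mirror the proof of Lemma~\ref{lem:nlem4} almost verbatim, with the sub-Gaussian variance proxy $1/(4N_{i,t-1})$ of the Beta sampling replaced by the exact conditional variance $\beta D_i/N_{i,t-1}$ of the Gaussian sampling. The key structural fact is that, conditioned on $\cH_t$ (after the initialization round, so that every $N_{i,t-1}\geq 1$), the coordinates $\theta_{i,t}-\bar\mu_{i,t-1}$ are mutually independent centered Gaussians with variance $\sigma_{i,t}^2\triangleq\beta D_i/N_{i,t-1}$, since $\bar\mu_{i,t-1}$ and $N_{i,t-1}$ are $\cH_t$-measurable and the prior is factorized. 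In particular each has the exact moment generating function $\EEc{e^{\lambda\pa{\theta_{i,t}-\bar\mu_{i,t-1}}}}{\cH_t}=e^{\lambda^2\sigma_{i,t}^2/2}$ for every $\lambda\in\R$.

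First I would set, for each action $A\in\cA$, the threshold $\alpha_t(A)\triangleq\sqrt{2\log\pa{\abs{\cA}2^mT}\sum_{i\in A}\beta D_i/N_{i,t-1}}$ (so that $\fD_t=\set{\norm{\be_{A_t}\odot\pa{\btheta_t-\bar\bmu_{t-1}}}_1\geq\alpha_t(A_t)}$) and the tilting parameter $\lambda_t(A)\triangleq\alpha_t(A)/\sum_{i\in A}\beta D_i/N_{i,t-1}$. Then I would run a union bound over $\cA$ followed by a Chernoff argument on each term, exactly as in Lemma~\ref{lem:nlem4}: bounding the absolute value via $e^{\lambda\abs{x}}\leq e^{\lambda x}+e^{-\lambda x}$, factorizing the conditional expectation by the independence above, and inserting the Gaussian MGF. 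This yields the bound $\PPc{\fD_t}{\cH_t}\leq\sum_{A\in\cA}2^{\abs{A}}e^{-\lambda_t(A)\alpha_t(A)}e^{\lambda_t(A)^2\sum_{i\in A}\sigma_{i,t}^2/2}$.

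The final step is the arithmetic. Writing $S_t(A)\triangleq\sum_{i\in A}\beta D_i/N_{i,t-1}$, the choice $\lambda_t(A)=\alpha_t(A)/S_t(A)$ gives $\lambda_t(A)\alpha_t(A)=\alpha_t(A)^2/S_t(A)=2\log\pa{\abs{\cA}2^mT}$ and $\lambda_t(A)^2 S_t(A)/2=\log\pa{\abs{\cA}2^mT}$, so each summand equals $2^{\abs{A}}e^{-\log\pa{\abs{\cA}2^mT}}\leq 2^m/\pa{\abs{\cA}2^mT}=1/\pa{\abs{\cA}T}$; summing over the at most $\abs{\cA}$ actions gives the claimed $1/T$. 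There is essentially no hard obstacle here beyond correct bookkeeping: the only points to verify carefully are the conditional independence and measurability that license the factorization of $\EEc{\prod_{i\in A}e^{\lambda_t(A)\abs{\theta_{i,t}-\bar\mu_{i,t-1}}}}{\cH_t}$, and the fact that the coefficient $2$ in $\alpha_t$ (against $1/2$ in the Beta statement of Lemma~\ref{lem:nlem4}) is exactly what compensates for the different variance scaling, so that $\alpha_t(A)^2=2\log\pa{\abs{\cA}2^mT}S_t(A)$ holds in both cases and the exponents cancel to leave $-\log\pa{\abs{\cA}2^mT}$.
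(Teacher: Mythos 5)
Your proposal is correct and follows essentially the same route as the paper: the same threshold $\alpha_t(A)$ and tilting parameter $\lambda_t(A)$, a union bound over $\cA$, a Chernoff bound with $e^{\lambda\abs{x}}\leq e^{\lambda x}+e^{-\lambda x}$, factorization via the conditional independence of the factorized Gaussian prior, and the identical exponent cancellation yielding $2^{\abs{A}}/\pa{\abs{\cA}2^mT}\leq 1/\pa{\abs{\cA}T}$ per action. Your bookkeeping checks (measurability of $\bar\bmu_{t-1}$ and the counters, and the factor-of-$2$ compensation relative to the Beta case) are accurate and consistent with the paper's argument.
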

\begin{proof}
We rely on the fact that conditionally on the history, the sample $\btheta_t$ is Gaussian of mean $\bar\bmu_{t-1}$ and of diagonal covariance given by $\beta D_iN_{i,t-1}^{-1}$. We thus define the functions \[\alpha_t(A)\triangleq\sqrt{2{\log\pa{\abs{\cA}2^mT}}\sum_{i\in {A}}\frac{\beta D_i}{N_{i,t-1}}},\quad\text{and}\quad \lambda_t(A) \triangleq \frac{\alpha_t(A)}{\sum_{i\in A}\beta D_i/N_{i,t-1}}, \] we have 
\begin{align*}\PPc{\norm{\be_{A_t}\odot\pa{\btheta_{t}-\bar\bmu_{t-1}} }_1\geq \alpha_t(A_t)}{\cH_t}&\leq\sum_{A\in \cA}\PPc{\norm{\be_{A}\odot\pa{\btheta_{t}-\bar\bmu_{t-1}} }_1\geq \alpha_t(A)}{\cH_t}\\&\leq \sum_{A\in \cA} e^{-\lambda_t(A) \alpha_t(A)}\EEc{e^{\lambda_t(A) \norm{\be_{A}\odot\pa{\btheta_{t}-\bar\bmu_{t-1}} }_1}}{\cH_t}
\\&\leq 
\sum_{A\in \cA} e^{-\lambda_t(A) \alpha_t(A)}\prod_{i\in A}\EEc{e^{\lambda_t(A) \abs{{\theta_{i,t}-\bar\mu_{i,t-1}} }}}{\cH_t}
\\&\leq \sum_{A\in \cA} e^{-\lambda_t(A) \alpha_t(A)}\prod_{i\in A}\EEc{e^{\lambda_t(A) \pa{{\theta_{i,t}-\bar\mu_{i,t-1}} }}+e^{\lambda_t(A) \pa{{\bar\mu_{i,t-1} -\theta_{i,t}} }}}{\cH_t}
\\&\leq \sum_{A\in \cA} 2^{\abs{A}}e^{-\lambda_t(A) \alpha_t(A)}{e^{\lambda_t(A)^2 {\sum_{i\in A}\beta D_i/(2 N_{i,t-1}) }}}\leq 1/T.
\end{align*}
\end{proof}
The final bound on the regret in Step 3 is 
 obtained using the same derivation as in Theorem~\ref{thm:tsbeta}, which gives the following leading term:
 \[256\log_2^2(4\sqrt{m})\sum_{i\in [n]}\frac{{B^2 \beta D_i\log\pa{2^m\abs{\cA}T}}}{\Delta_{i,\min}}.\]
In the following, we consider the last step, consisting in bounding the regret under the event $\fZ_t$ and $\neg\fC_t$.
From the initialization phase, we also assume that the event 
\[\fM_t\triangleq \set{\forall i\in [n],~N_{i,t-1}\geq 1}\]
holds (the regret under the complementary event is clearly bounded by $n\Delta_{\max}$). If there is no initialization, we can have $q=0$ in the following, noticing that when $\theta_{i,t}$ is uniform on $[a,b]$, then the probability $\PPc{\abs{\theta_{i,t}-\mu_i^*}\leq \eps}{\cH_t}$ is equal to $2\eps/(b-a)$.
\paragraph{Step 4: bound under $\fM_t\wedge\fZ_t\wedge\neg\fC_t$}
We use the independence of the prior, as for Theorem~\ref{thm:tsbeta}, to obtain the following upper bound, using $\fM_t$ to be able to start from $q=1$. 
\begin{align*}\sum_{t\in [T]}\EE{\Delta\pa{A_t}\II{\fM_t\wedge\fZ_t\wedge\neg\fC_t}}&\leq
    \sum_{Z\subset A^*,~Z\neq \emptyset}\sum_{q\geq 1}\EE{\sup_{\tau\geq{\tau_q+1}}\sum_{Z'\subset Z,~Z'\neq \emptyset}{\prod_{i\in Z'}\pa{\frac{1}{\PPc{\abs{\theta_{i,\tau}-\mu_i^*}\leq \eps}{\cH_\tau}}-1}}}\\&\leq
   \sum_{Z\subset A^*,~Z\neq \emptyset}\sum_{q\geq 1} \underbrace{\sum_{Z'\subset Z,~Z'\neq \emptyset} \EE{\sup_{\tau\geq{\tau_q+1}}{\prod_{i\in Z'}\pa{\frac{1}{\PPc{\abs{\theta_{i,\tau}-\mu_i^*}\leq \eps}{\cH_\tau}}-1}}}}_{\numterm{rel:tsterm1thm2}}
    .
\end{align*}
However, the expectation can't be put inside the product since outcomes are not mutually independent.
We can still take a union bound on counters: 
\begin{align*}
   \eqref{rel:tsterm1thm2}\leq\sum_{Z'\subset Z,~Z'\neq \emptyset}~
   \sum_{\bk\in [q..\infty)^{Z'}}
   \EE{\sup_{\tau\geq{\tau_q+1}}\II{\forall i \in Z',~N_{i,\tau-1}=k_i}{\prod_{i\in Z'}\pa{\frac{1}{\PPc{\abs{\theta_{i,\tau}-\mu_i^*}\leq \eps}{\cH_\tau}}-1}}}.
\end{align*}
One can notice that for all $i\in Z'$, all $k_i\geq q$, $\II{N_{i,\tau-1}=k_i}\pa{\frac{1}{\PPc{\abs{\theta_{i,\tau}-\mu_i^*}\leq \eps}{\cH_\tau}}-1}$ is of the form  $\II{N_{i,\tau-1}=k_i}g_i\pa{\abs{\bar\mu_{i,\tau-1}-\mu_i^*}}$, with $g_i$ being an increasing function on $\R_+$. Indeed, we see that the conditional distribution of $\theta_{i,\tau}-\bar\mu_{i,\tau-1}$ is $\cN\pa{0,\beta D_iN_{i,\tau-1}^{-1}}$, which is symmetric, so we have 
\[\PPc{\abs{\theta_{i,\tau}-\mu_i^*}\leq \eps}{\cH_\tau}=\PPc{\abs{\theta_{i,\tau}-\bar\mu_{i,\tau-1}+\abs{\bar\mu_{i,\tau-1}-\mu_i^*}}\leq \eps}{\cH_\tau}.\]
In addition, under $\II{N_{i,\tau-1}=k_i}$, the conditional distribution of $\theta_{i,\tau}-\bar\mu_{i,\tau-1}$ does not depend on the history, but only on $k_i$. Therefore, the above probability is a function of $\abs{\bar\mu_{i,\tau-1}-\mu_i^*}$ and so the function $g_i$ exists. It
is increasing on $\R_+$ because for any fixed $\sigma>0$,
\[\frac{\partial}{\partial x}\int_{x-\eps}^{x+\eps}\frac{1}{\sqrt{2\pi\sigma^2}}e^{-\frac{u^2}{2\sigma^2}}\mathrm{d}u=
\frac{1}{\sqrt{2\pi\sigma^2}}\pa{e^{-\frac{(x+\eps)^2}{2\sigma^2}}-e^{-\frac{(x-\eps)^2}{2\sigma^2}}}< 0\text{ for }x> 0.
\]
In particular, we can consider the inverse function $g_i^{-1}$.
We now want to use a stochastic dominance argument in order to treat the outcomes as if they were Gaussian: we have for any $\bk\in [q..\infty)^{Z'}$,
\begin{align}\nonumber
    &\EE{\sup_{\tau\geq{\tau_q+1}}\prod_{i\in Z'}\pa{\II{N_{i,\tau-1}=k_i}g_i\pa{\abs{\bar\mu_{i,\tau-1}-\mu_i^*}}}}\\&=
    \EE{\sup_{\tau\geq{\tau_q+1}}\prod_{i\in Z'}\pa{\II{N_{i,\tau-1}=k_i}\int_{0}^\infty \II{g_i\pa{\abs{\bar\mu_{i,\tau-1}-\mu_i^*}}\geq u_i}\mathrm{d}u_i}}\nonumber\\&\leq\int_{\bu\in \R_+^{Z'}}\EE{\sup_{\tau\geq{\tau_q+1}}\prod_{i\in Z'}{\II{N_{i,\tau-1}=k_i} \II{g_i\pa{\abs{\bar\mu_{i,\tau-1}-\mu_i^*}}\geq u_i}}} \mathrm{d}\bu
    \nonumber\\&=
    \int_{\bu\in \R_+^{Z'}}\EE{\prod_{i\in Z'}{ \II{N_{i,\tau^*-1}=k_i}\II{g_i\pa{\abs{\bar\mu_{i,\tau^*-1}-\mu_i^*}}\geq u_i}}} \mathrm{d}\bu
    \label{rel:stodom},
\end{align}
where $\tau^*$ is the first time $\tau$ such that the event $\II{\forall i \in Z',~N_{i,\tau-1}=k_i~\text{and}~g_i\pa{\abs{\bar\mu_{i,\tau-1}-\mu_i^*}}\geq u_i}$ holds, and is $\infty$ if it never holds. 
\begin{align*}
    \eqref{rel:stodom}&= \int_{\bu\in \R_+^{Z'}}\EE{\prod_{i\in Z'}{ \II{N_{i,\tau^*-1}=k_i}\II{g_i\pa{\abs{\bar\mu_{i,\tau^*-1}-\mu_i^*}}\geq u_i\vee g_i(0)}}} \mathrm{d}\bu
    \\&= \int_{\bu\in \R_+^{Z'}}\EE{\prod_{i\in Z'}{ \II{N_{i,\tau^*-1}=k_i}\II{{\abs{\bar\mu_{i,\tau^*-1}-\mu_i^*}}\geq g_i^{-1}\pa{u_i\vee g_i(0)}}}} \mathrm{d}\bu
    \\&= \int_{\bu\in \R_+^{Z'}}\sum_{\bs \in \set{-1,1}^{Z'}}\underbrace{\EE{\prod_{i\in Z'}{ \II{N_{i,\tau^*-1}=k_i}\II{{s_i\pa{\bar\mu_{i,\tau^*-1}-\mu_i^*}}\geq g_i^{-1}\pa{u_i\vee g_i(0)}}}}}_{\numterm{rel:subgauetstodom}}\mathrm{d}\bu
    \end{align*}
    
    \begin{align*}
\eqref{rel:subgauetstodom}    
    &\leq {\PP{\frac{\exp\pa{\!\sum_{i\in Z'}{ \!N_{i,\tau^*-1}\!  \pa{\frac{s_ig_i^{-1}\pa{u_i\vee g_i(0)}}{D_i}\pa{\bar\mu_{i,\tau^*-1}\!-\!\mu_i^*}-\frac{\pa{g_i^{-1}\pa{u_i\vee g_i(0)}}^2}{2D_i}}}}}{\exp\pa{\sum_{i\in Z'}\frac{\pa{g_i^{-1}\pa{u_i\vee g_i(0)}}^2k_i}{2D_i}}}\geq 1,\pa{N_{i,\tau^*-1}}_{i\in Z'}\!=\!\bk}} 
    \\&\leq
    {\PP{\frac{\exp\pa{\sum_{i\in Z'}{ N_{i,\tau^*-1}  \pa{\frac{s_ig_i^{-1}\pa{u_i\vee g_i(0)}}{D_i}\pa{\bar\mu_{i,\tau^*-1}-\mu_i^*}-\frac{\pa{g_i^{-1}\pa{u_i\vee g_i(0)}}^2}{2D_i}}}}}{\exp\pa{\sum_{i\in Z'}\frac{\pa{g_i^{-1}\pa{u_i\vee g_i(0)}}^2k_i}{2D_i}}}\geq 1}} 
    \\&\leq
    {\frac{\EE{\exp\pa{\sum_{i\in Z'}{ N_{i,\tau^*-1}  \pa{\frac{s_ig_i^{-1}\pa{u_i\vee g_i(0)}}{D_i}\pa{\bar\mu_{i,\tau^*-1}-\mu_i^*}-\frac{\pa{g_i^{-1}\pa{u_i\vee g_i(0)}}^2}{2D_i}}}}}}{\exp\pa{\sum_{i\in Z'}\frac{\pa{g_i^{-1}\pa{u_i\vee g_i(0)}}^2k_i}{2D_i}}}}
    \\&=
    {\frac{\EE{\exp\pa{\sum_{t=1}^{\tau^*-1}\sum_{i\in Z'\cap A_t}{   \pa{\frac{s_ig_i^{-1}\pa{u_i\vee g_i(0)}}{D_i}\pa{X_{i,t}-\mu_i^*}-\frac{\pa{g_i^{-1}\pa{u_i\vee g_i(0)}}^2}{2D_i}}}}}}{\exp\pa{\sum_{i\in Z'}\frac{\pa{g_i^{-1}\pa{u_i\vee g_i(0)}}^2k_i}{2D_i}}}}.
\end{align*}
From Assumption~\ref{ass:subgau}, we have that  \[M_{\tau}=\exp\pa{\sum_{t=1}^{\tau-1}\sum_{i\in Z'\cap A_t}{   \pa{\frac{s_ig_i^{-1}\pa{u_i\vee g_i(0)}}{D_i}\pa{X_{i,t}-\mu_i^*}-\frac{\pa{g_i^{-1}\pa{u_i\vee g_i(0)}}^2}{2D_i}}}}\]
is a supermartingale:
\begin{align*}\EEc{M_{\tau}}{\cF_{\tau-1}}&=M_{\tau-1}\EEc{\exp{\pa{\sum_{i\in Z'\cap A_{\tau-1}}{   \pa{\frac{s_ig_i^{-1}\pa{u_i\vee g_i(0)}}{D_i}\pa{X_{i,{\tau-1}}-\mu_i^*}-\frac{\pa{g_i^{-1}\pa{u_i\vee g_i(0)}}^2}{2D_i}}}}}}{\cF_{\tau-1}}\\&\leq M_{\tau-1}.\end{align*}
Since $\tau^*$ is a stopping time with respect to $\cF_\tau$, we have from Doob's optional sampling theorem for non-negative supermartingales\footnote{We use the version that relies on Fatou's lemma (\citet{durrett2019probability}, Theorem 5.7.6), so that it is not needed to have any additional condition on the stopping time $\tau^*$.} that $\EE{M_{\tau^*}}\leq 1$. Therefore, 
\begin{align*}
\eqref{rel:subgauetstodom}    
    &\leq \exp\pa{-\sum_{i\in Z'}\frac{\pa{g_i^{-1}\pa{u_i\vee g_i(0)}}^2k_i}{2D_i}}.
    \end{align*}
    Now, we want to use the following fact (see \citet{chang2011chernoff}):
    if $\eta\sim\cN(0,1)$, then with $\beta>1$,
    \[\sqrt{\frac{2e}{\pi}}\frac{\sqrt{\beta-1}}{\beta}e^{-\beta x^2/2}\leq \PP{\abs{\eta}\geq x}.\]
    Indeed, this gives
    \[\sqrt{\frac{2e}{\pi}}\frac{\sqrt{\beta-1}}{\beta}\exp\pa{-\frac{\pa{g_i^{-1}\pa{u_i\vee g_i(0)}}^2k_i}{2D_i}}\leq \PP{\abs{\eta_i}\geq  {g_i^{-1}\pa{u_i\vee g_i(0)}}\sqrt{\frac{k_i}{\beta D_i}}},\]
    where $\bheta\sim\cN(0,1)^{\otimes Z'}$.
    Thus, \begin{align*}
     \eqref{rel:stodom}&\leq \pa{\sqrt{\frac{\pi}{2e}}\frac{2\beta}{\sqrt{\beta-1}}}^{\abs{Z'}} \int_{\bu\in \R_+^{Z'}}\prod_{i\in Z'}\PP{\sqrt{\frac{\beta D_i}{k_i}}\abs{\eta_i}\geq  {g_i^{-1}\pa{u_i\vee g_i(0)}}}\mathrm{d}\bu\\&
     = \pa{\sqrt{\frac{\pi}{2e}}\frac{2\beta}{\sqrt{\beta-1}}}^{\abs{Z'}} \int_{\bu\in \R_+^{Z'}}\prod_{i\in Z'}\PP{g_i\pa{\sqrt{\frac{\beta D_i}{k_i}}\abs{\eta_i}}\geq  {{u_i\vee g_i(0)}}}\mathrm{d}\bu\\&
     = \pa{\sqrt{\frac{\pi}{2e}}\frac{2\beta}{\sqrt{\beta-1}}}^{\abs{Z'}} \int_{\bu\in \R_+^{Z'}}\prod_{i\in Z'}\PP{g_i\pa{\sqrt{\frac{\beta D_i}{k_i}}\abs{\eta_i}}\geq  {{u_i}}}\mathrm{d}\bu\\&
     = \pa{\sqrt{\frac{\pi}{2e}}\frac{2\beta}{\sqrt{\beta-1}}}^{\abs{Z'}} \prod_{i\in Z'}\int_{0}^\infty\PP{g_i\pa{\sqrt{\frac{\beta D_i}{k_i}}\abs{\eta_i}}\geq  {{u_i}}}\mathrm{d}u_i\\&
     = \pa{\sqrt{\frac{\pi}{2e}}\frac{2\beta}{\sqrt{\beta-1}}}^{\abs{Z'}} \prod_{i\in Z'}\EE{g_i\pa{\sqrt{\frac{\beta D_i}{k_i}}\abs{\eta_i}}}.
    \end{align*}
    We now want to bound $\EE{g_i\pa{\sqrt{\frac{\beta D_i}{k_i}}\abs{\eta_i}}}.$ We define $\alpha =2 - \sqrt{2}$, the unique solution in $(1/2,1)$ of $\alpha-1/2=(\alpha-1)^2/2$. Notice that $\alpha-1/2\geq 1/12$. Define $\eps_i\triangleq\eps\sqrt{\frac{k_i}{\beta D_i}}$.
    By definition, we have
\begin{align*}
    \EE{g_i\pa{\sqrt{\frac{\beta D_i}{k_i}}\abs{\eta_i}}} & = \int_{-\infty}^{+\infty} \frac{e^{-x^2/2}}{\int_{x - \varepsilon_i}^{x + \varepsilon_i} e^{-y^2/2} \mathrm{d}y}\mathrm{d}x - 1 \\
    & =  \underbrace{2\int_{\alpha \varepsilon_i}^{+ \infty}\frac{1}{\int_{x - \varepsilon_i}^{x + \varepsilon_i} e^{-\frac{y^2-x^2}{2}} \mathrm{d}y} \mathrm{d}x}_{A_1} + \underbrace{\int_{-\alpha \varepsilon_i}^{\alpha \varepsilon_i}\frac{e^{-x^2/2}}{\int_{x - \varepsilon_i}^{x + \varepsilon_i} e^{-y^2/2} \mathrm{d}y}\mathrm{d}x - 1}_{A_2}.
\end{align*}
We first bound $A_1$. With the change of variable $u = y-x$, we get:
\begin{align*}
    A_1 & = 2\int_{\alpha \varepsilon_i}^{+ \infty}\frac{1}{\int_{- \varepsilon_i}^{\varepsilon_i} e^{-u^2/2 - ux} \mathrm{d}u} \mathrm{d}x \\
    & \leq 2\int_{\alpha \varepsilon_i}^{+ \infty}\frac{1}{\int_{-\varepsilon_i}^{0} e^{-u^2/2 - ux} \mathrm{d}u} \mathrm{d}x
\end{align*}
Note that for $x\geq \alpha \varepsilon_i$ and $u \in [-\eps_i, 0]$, $-u^2/2 - ux \geq -(1-\frac{1}{2\alpha})ux$ and thus:
\begin{align}
    A_1 & \leq 2\int_{\alpha \varepsilon_i}^{+ \infty}\frac{1}{\int_{-\varepsilon_i}^{0} e^{-(1-\frac{1}{2\alpha})ux} \mathrm{d}u} \mathrm{d}x \nonumber\\
    & = 2\int_{\alpha \varepsilon_i}^{+ \infty} \frac{(1-\frac{1}{2\alpha})x}{e^{(1-\frac{1}{2\alpha})\varepsilon_i x}-1} \mathrm{d}x.\label{rel:tworegimes} \end{align}
    We distinguish two regimes. First, if $\eps_i^2\geq 12$, then
    \begin{align*}\eqref{rel:tworegimes}
    &\leq \frac{2e^{\pa{\alpha-\frac{1}{2}}\varepsilon_i^2}}{e^{(\alpha-\frac{1}{2})\varepsilon_i^2}-1}\int_{\alpha \varepsilon_i}^{+ \infty} \pa{1-\frac{1}{2\alpha}}xe^{-(1-\frac{1}{2\alpha})\varepsilon_i x} \mathrm{d}x \\
    & = \frac{2e^{(\alpha-\frac{1}{2})\varepsilon_i^2}}{e^{(\alpha-\frac{1}{2})\varepsilon_i^2}-1}\frac{1}{(1-\frac{1}{2\alpha})\varepsilon_i^2}\int_{(\alpha-\frac{1}{2})\varepsilon_i^2}^{+ \infty}xe^{- x} \mathrm{d}x \\
    & = \frac{2e^{(\alpha-\frac{1}{2})\varepsilon_i^2}}{e^{(\alpha-\frac{1}{2})\varepsilon_i^2}-1} \frac{1}{(1-\frac{1}{2\alpha})\varepsilon_i^2}\left[-(x+1)e^{-x} \right]_{(\alpha-\frac{1}{2})\varepsilon_i^2}^{\infty} \\
    & = \frac{2e^{(\alpha-\frac{1}{2})\varepsilon_i^2}}{e^{(\alpha-\frac{1}{2})\varepsilon_i^2}-1} \frac{1}{(1-\frac{1}{2\alpha})\varepsilon_i^2} \pa{\pa{\alpha-\frac{1}{2}}\varepsilon_i^2+1}e^{-(\alpha-\frac{1}{2})\varepsilon_i^2} \\
    & = \frac{2}{e^{(\alpha-\frac{1}{2})\varepsilon_i^2}-1}\pa{ \alpha+\frac{\alpha}{(\alpha-\frac{1}{2})\varepsilon_i^2}}
    \\
    & \leq {4 e^{-\varepsilon_i^2/12}} .
\end{align*}
Otherwise, we have
\begin{align*}
    \eqref{rel:tworegimes} &= \frac{2(1-\frac{1}{2\alpha})}{\varepsilon_i^2}\int_{\alpha \varepsilon_i^2}^\infty \frac{u}{e^{\pa{1-\frac{1}{2\alpha}}u}-1} \mathrm{d}u \\&\leq \frac{2(1-\frac{1}{2\alpha})}{\varepsilon_i^2}\int_{0}^\infty \frac{u}{e^{\pa{1-\frac{1}{2\alpha}}u}-1} \mathrm{d}u \\&= \frac{2(1-\frac{1}{2\alpha})}{\varepsilon_i^2}\frac{\pi^2}{6\pa{1-\frac{1}{2\alpha}}^2}
    \\&\leq \frac{24\beta D_i}{\eps^2}.
\end{align*}

We now bound $A_2$. 
As $x \in [-\alpha \varepsilon_i, \alpha \varepsilon_i]$, it comes that $[-(1-\alpha)\varepsilon_i, (1-\alpha)\varepsilon_i] \subset [x-\varepsilon_i, x+\varepsilon_i]$. This implies that
\begin{align*}
A_2 & \leq \frac{\int_{-\alpha \varepsilon_i}^{\alpha \varepsilon_i} e^{-x^2/2} \mathrm{d} x }{\int_{-(1-\alpha) \varepsilon_i}^{(1-\alpha) \varepsilon_i} e^{-x^2/2} \mathrm{d} x} - 1\\
& = \frac{2 \int_{(1-\alpha)\varepsilon_i}^{\alpha \varepsilon_i} e^{-x^2/2}\mathrm{d}x}{\int_{-(1-\alpha) \varepsilon_i}^{(1-\alpha) \varepsilon_i} e^{-x^2/2} \mathrm{d} x}  \\
& \leq \frac{2 \int_{(1-\alpha)\varepsilon_i}^{\infty} e^{-x^2/2}\mathrm{d}x}{\int_{-(1-\alpha) \varepsilon_i}^{(1-\alpha) \varepsilon_i} e^{-x^2/2} \mathrm{d} x} 
\\
& \leq \frac{  e^{-{(1-\alpha)^2\varepsilon_i^2}/2}}{1-e^{-{(1-\alpha)^2\eps_i^2}/{2}}}\leq \pa{1+\frac{12}{\eps_i^2}}e^{-\eps_i^2/12}.
\end{align*}
The penultimate inequality relies on  $\int_{x}^\infty e^{-u^2/2} \mathrm{d}u {\leq} \sqrt{\frac{\pi}{2}}{e^{-x^2/2}}$ (see \citet{jacobs1965principles}, eq. (2.122)).
We obtain again two regimes: $2e^{-\eps_i^2/12}$ if $\eps_i^2\geq 12$, and $1+\frac{12\beta D_i}{\eps^2}$ otherwise.
To summarize, we proved that
\[\eqref{rel:stodom}\leq  \pa{\sqrt{\frac{\pi}{2e}}\frac{2\beta}{\sqrt{\beta-1}}}^{\abs{Z'}} \prod_{i\in Z'} \pa{\II{\eps^2{\frac{k_i}{\beta D_i}}< 12}\pa{1+36\frac{\beta D_i}{\eps^2}}
+\II{\eps^2{\frac{k_i}{\beta D_i}}\geq 12}{6e^{-\eps^2{\frac{k_i}{12\beta D_i}}}}}.\]
After the summation on $\bk$, on $Z'$, on $q$, and on $Z$, we obtain that there exists two constants $C,C'$ such that
\[\sum_{Z\subset A^*,~Z\neq \emptyset}\sum_{q\geq 1}\sum_{Z'\subset Z,~Z'\neq \emptyset}\sum_{\bk\in [q..\infty)^{Z'}} \eqref{rel:stodom}\leq \pa{C\eps^{-2}\beta\max_i D_i}\pa{\frac{C'\beta}{\sqrt{\beta-1}}\eps^{-4}\beta^2\max_i D_i^2}^{m^*}.\]
Thus, 
\[\sum_{t\in [T]}\EE{\Delta\pa{A_t}\II{\fM_t\wedge\fZ_t\wedge\neg\fC_t}}\leq \Delta_{\max}\pa{C\eps^{-2}\beta\max_i D_i}\pa{\frac{C'\beta}{\sqrt{\beta-1}}\eps^{-4}\beta^2\max_i D_i^2}^{m^*}.\]

\section{Proof of Theorem~\ref{thm:tsclipgauss} (\textsc{clip cts-gaussian} for linear rewards)}
\label{app:optimistic_cts}
In this section, we provide an analysis for the regret bound of \textsc{clip cts-gaussian}, which is stated completely as follows.
 \begin{theorem*}
  The policy \textsc{clip cts-gaussian} has regret bounded by
\begin{align*}
 &\sum_{i\in [n]}\frac{{ 128\pa{ 4\log_2^2(4\sqrt{m}) \beta D_i\log\pa{2^m\abs{\cA}T}\wedge m\Gamma_{ii}\pa{\log(T)+4\log\log(T)}}}}{\Delta_{i,\min}} +
\Delta_{\max}(1+5.2n)\\+&\frac{nm^2\Delta_{\max}}{\pa{\frac{\Delta_{\min}}{2B}-({m^*}(m^*+1)/2+1)\eps}^{2}}+\Delta_{\max}\pa{C\eps^{-2}\beta\max_i D_i}\pa{\frac{C'}{\sqrt{\beta-1}}\eps^{-4}\beta^3\max_i D_i^2}^{m^*},
\end{align*}
where $C,C'$ are two universal constants, and $\eps\in (0,1)$ is such that ${\Delta_{\min}}/(2B)-({m^*}^2+1)\eps>~0.$
 \end{theorem*}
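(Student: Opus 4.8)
The plan is to follow the same four-step decomposition used for Theorem~\ref{thm:tsgauss}, keeping the events $\fZ_t$, $\fB_t$, $\fC_t$, $\fD_t$ but everywhere replacing the raw sample $\btheta_t$ fed to the oracle by the clipped score $\tilde\btheta_t\triangleq\bar\bmu_{t-1}\vee\btheta_t\wedge\bmu_t$, so that componentwise $\tilde\theta_{i,t}\in[\bar\mu_{i,t-1},\mu_{i,t}]$. I would keep the smoothness constant $B$ general and exploit both monotonicity of $\bmu\mapsto\be_A\transpose\bmu$ and the oracle invariance specific to linear rewards (Assumption~\ref{ass:linear_reward}). Step~1, which only involves the empirical mean $\bar\bmu_{t-1}$, is untouched by the clipping and carries over from Lemma~\ref{lem:nlem3}, except that linearity lets me replace the combinatorial constant ${m^*}^2+1$ by $m^*(m^*+1)/2+1$ (each unordered pair of coordinates of an optimal action being counted once), which is what appears in the stated denominator.

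The heart of the argument is Steps~2--3, where the two clippings must be combined to produce the factor $D_i\log^2(m)\wedge m\Gamma_{ii}$. First I would note that the lower clip forces $\be_{A_t}\odot\pa{\tilde\btheta_t-\bar\bmu_{t-1}}\ge\mathbf{0}$ coordinatewise, so the $\ell_1$-deviation is a sum of nonnegative terms and only positive ($\R_+^n$) directions need to be controlled; this is precisely what lets Assumption~\ref{ass:subgaupos} with the smaller $D_i=\max_{A,i\in A}\sum_{j\in A}\pa{0\vee\Gamma_{ij}}$ replace Assumption~\ref{ass:subgau}. The Gaussian sampling concentration lemma of Appendix~\ref{app:tsgauss} then reproduces the bound $\sqrt{2\log\pa{\abs{\cA}2^mT}\sum_{i\in A_t}\beta D_i/N_{i,t-1}}$, and the Degenne-style peeling of Theorem~\ref{thm:dege} yields the $\beta D_i\log^2(m)$ leading term. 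Second, the upper clip gives a \emph{deterministic} entrywise bound $\tilde\theta_{i,t}-\bar\mu_{i,t-1}\le\sqrt{\Gamma_{ii}\,2\pa{\log t+4\log\log t}/N_{i,t-1}}$, which under $\fZ_t\wedge\neg\fB_t\wedge\fC_t$ forces the \textsc{cucb}-type event $\Delta_t\le B\sum_{i\in A_t}\sqrt{2\Gamma_{ii}\pa{\log t+4\log\log t}/N_{i,t-1}}$, from which the standard counting argument gives the $m\Gamma_{ii}$ term. Taking, per coordinate, whichever control is tighter delivers the claimed minimum.

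Step~4, the $T$-independent exponential term, I expect to transfer essentially unchanged from Theorem~\ref{thm:tsgauss}: the prior is still a factorized Gaussian, so the weak stochastic dominance of $\pa{\abs{\bar\mu_{i,t-1}-\mu_i^*}}_i$ by $\pa{\sqrt{\beta D_i/N_{i,t-1}}\,\abs{\eta_i}}_i$ and the supermartingale/optional-sampling argument go through, since $\fS_t(Z)\wedge\neg\fT_t(Z)$ still implies $Z\subset A_t$ and $\PPc{\abs{\theta_{i,\tau}-\mu_i^*}\le\eps}{\cH_\tau}$ remains a monotone function of $\abs{\bar\mu_{i,\tau-1}-\mu_i^*}$. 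The one subtlety is that the lower clip alters the conditional law of $\tilde\theta_{i,t}$ near $\mu_i^*$, so I would check that capping can only increase this closeness probability, which preserves the constant and hence the term $\pa{C\eps^{-2}\beta\max_i D_i}\pa{\tfrac{C'}{\sqrt{\beta-1}}\eps^{-4}\beta^3\max_i D_i^2}^{m^*}$.

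The main obstacle is justifying that the upper clip does not destroy the optimism on which $\fC_t$ and Lemma~\ref{lem:fSfT} rest: capping the sample from above could a priori make the oracle select a worse action than under the uncapped optimistic score. This is resolved by the oracle invariance $\ora\pa{\bmu}=\ora\pa{\bmu+\bdelta\odot\be_{\ora\pa{\bmu}}}$ for $\bdelta\ge\mathbf{0}$, valid for linear rewards: whenever the cap is active it is active only on coordinates lying in the chosen action $A_t$, so restoring the larger uncapped values on those coordinates leaves $A_t=\ora\pa{\tilde\btheta_t}$ optimal and the whole event decomposition intact. I would spend most of the effort making this compatibility between the two clippings rigorous, and then collect the four steps into the stated non-asymptotic bound, with the $\Delta_{\max}\pa{1+5.2n}$ term absorbing the initialization and the $\fM_t$ bookkeeping.
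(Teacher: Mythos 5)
Your Steps 1--3 do match the paper's route: the lower clip makes $\be_{A_t}\odot\pa{\tilde\btheta_t-\bar\bmu_{t-1}}$ nonnegative so that only $\R_+^n$ directions (hence Assumption~\ref{ass:subgaupos} and the smaller $D_i$) enter the concentration lemma, the upper clip gives the deterministic \textsc{cucb}-type bound, and your ``per coordinate, whichever control is tighter'' is realized in the paper by the fixed partition $S=\set{i:\Gamma_{ii}m\pa{\log T+4\log\log T}\geq 4\log_2^2(4\sqrt m)\beta D_i\log\pa{\abs{\cA}2^mT}}$, bounding $A_t\cap S$ via the Gaussian concentration and $A_t\cap S^c$ via the UCB, then combining with Proposition~\ref{prop:composed}, Theorem~\ref{thm:dege} and Theorem~\ref{thm:l1error}.

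The genuine gap is in your Step 4. You keep the two-sided event $\abs{\tilde\theta_{i,\tau}-\mu_i^*}\leq\eps$ and propose to show that clipping ``can only increase this closeness probability.'' That claim is false precisely in the regime that matters: on the positive-probability history event $\bar\mu_{i,t-1}>\mu_i^*+\eps$, the clipped score satisfies $\tilde\theta_{i,t}\geq\bar\mu_{i,t-1}>\mu_i^*+\eps$ surely, so $\PPc{\abs{\tilde\theta_{i,t}-\mu_i^*}\leq\eps}{\cH_t}=0$, the quantity $1/\PPc{\cdot}{\cH_t}-1$ is infinite, and the geometric-distribution bound of Lemma~\ref{lem:Esup} becomes vacuous. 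Moreover, even where it is finite, the unclipped Step 4 sums over sign vectors $\bs\in\set{-1,1}^{Z'}$ and therefore needs sub-Gaussianity in all directions (Assumption~\ref{ass:subgau}); under the hypotheses of this theorem only Assumption~\ref{ass:subgaupos} ($\blambda\in\R_+^n$) is available, so the supermartingale step cannot even be run for negative signs. The paper's resolution is structural rather than a monotonicity check: it redefines the optimism events one-sidedly, $\fT_t\pa{Z}=\set{\exists i\in Z,\ \mu_i^*-\mu_i^*\wedge\tilde\theta_{i,t}>\eps}$ and $\fS_t\pa{Z}$ with $0\leq\pa{\bmu^*-\btheta'}\odot\be_Z\leq\eps\be_Z$, so that only underestimation is penalized (overestimation on $Z\subset A^*$ can only help optimism for linear rewards). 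Then $\PPc{\neg\fT_t\pa{Z}}{\cH_t}$ becomes a product of terms $\PPc{\eps\vee\pa{\mu_i^*-\bar\mu_{i,t-1}}-0\vee\pa{\theta_{i,t}-\bar\mu_{i,t-1}}\leq\eps}{\cH_t}$, which are almost surely positive and monotone in $\eps\vee\pa{\mu_i^*-\bar\mu_{i,t-1}}$, and the dominance/supermartingale argument only requires positive $\blambda$. This redefinition is what forces the paper to reprove Wang's Lemma~1 as Lemma~\ref{lem:fSfT+}, whose one-sided recursion is the actual source of the constant $m^*(m^*+1)/2+1$ (the telescoping sum $m^*+(m^*-1)+\dots+1$, not the pair-counting you invoke in Step 1), to add Lemma~\ref{lem:increa} so that $\fS_t\pa{Z}\wedge\neg\fT_t\pa{Z}$ still implies $Z\subset A_t$ when coordinates in $Z$ are lowered, and to introduce the Hoeffding event $\fJ_t=\set{\forall i\in[n],\ \mu_i^*\leq\mu_{i,t}}$ (costing $3.2n\Delta_{\max}$, which is where your $5.2n$ comes from) to neutralize the upper clip in this step. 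Without these three ingredients your Step 4 does not go through.
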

More precisely, notice that the modification on the sample $\btheta_t$ has an impact only in two places in the analysis: in the concentration bound and in the event controlling optimism. We detail these two points in the following.

\subsection{Concentration bound}
In this subsection, we provide the concentration bound of \textsc{clip cts-gaussian}. Our strategy here is to either use the concentration from $\bmu_t$ or from $\btheta_t$, depending on which regime is the best for each arm.
Thus, we define $S\triangleq
\set{i\in [n],~\Gamma_{ii}m\pa{\log(T)+4\log\log(T)}\geq 4\log^2_2(4\sqrt{m})\beta D_i\log\pa{\abs{A}2^mT}}$. We have the following lemma.
\begin{lemma}
\begin{align*} \PPc{{\be_{A_t\cap S}\transpose\pa{\bar\bmu_{t-1}\vee\btheta_{t}\wedge \bmu_t-\bar\bmu_{t-1}} }\geq\sqrt{{2}{\log\pa{\abs{\cA}2^mT}}\sum_{i\in {A_t\cap S}}{\beta D_i}/{N_{i,t-1}}}}{\cH_t}\leq 1/T.\end{align*}
\end{lemma}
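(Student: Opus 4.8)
The plan is to mimic the Gaussian-prior concentration argument already used for \textsc{cts-gaussian}, after first absorbing the truncation into a pointwise bound. Write $c_{i,t}\triangleq(\bar\mu_{i,t-1}\vee\theta_{i,t}\wedge\mu_{i,t})-\bar\mu_{i,t-1}$ for the clipped deviation of arm $i$. Since $\bar\mu_{i,t-1}\leq\mu_{i,t}$ always (the UCB dominates the empirical mean), clipping from below by $\bar\mu_{i,t-1}$ makes $c_{i,t}\geq0$, while dropping the upper cap can only increase the value, so
\[
0\leq c_{i,t}\leq (\theta_{i,t}-\bar\mu_{i,t-1})_+\leq \abs{\theta_{i,t}-\bar\mu_{i,t-1}}.
\]
In particular $\be_{A_t\cap S}\transpose\pa{\bar\bmu_{t-1}\vee\btheta_t\wedge\bmu_t-\bar\bmu_{t-1}}=\sum_{i\in A_t\cap S}c_{i,t}\leq\norm{\be_{A_t\cap S}\odot\pa{\btheta_t-\bar\bmu_{t-1}}}_1$, which reduces the claim to exactly the statement proved in the \textsc{cts-gaussian} concentration lemma, but with the index set $A_t\cap S$ in place of $A_t$.

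First I would observe that $S$ is deterministic (it depends only on $T$, $\bGamma$, $\bD$, $\cA$), so $A\mapsto A\cap S$ is a fixed map and the usual union bound over the action space applies verbatim. Setting $\alpha_t(A)\triangleq\sqrt{2\log\pa{\abs{\cA}2^mT}\sum_{i\in A\cap S}\beta D_i/N_{i,t-1}}$ and $\lambda_t(A)\triangleq\alpha_t(A)/\sum_{i\in A\cap S}\beta D_i/N_{i,t-1}$, the pointwise bound above together with $A_t\in\cA$ gives
\[
\PPc{\textstyle\sum_{i\in A_t\cap S}c_{i,t}\geq\alpha_t(A_t)}{\cH_t}\leq\sum_{A\in\cA}\PPc{\norm{\be_{A\cap S}\odot\pa{\btheta_t-\bar\bmu_{t-1}}}_1\geq\alpha_t(A)}{\cH_t}.
\]
For each fixed $A$ the Chernoff/MGF computation is identical to the \textsc{cts-gaussian} case: conditionally on $\cH_t$, $\theta_{i,t}-\bar\mu_{i,t-1}$ is centred Gaussian with variance $\beta D_i/N_{i,t-1}$, the symmetrization $\EEc{e^{\lambda\abs{\theta_{i,t}-\bar\mu_{i,t-1}}}}{\cH_t}\leq 2e^{\lambda^2\beta D_i/(2N_{i,t-1})}$ and the independence of the coordinates of $\btheta_t$ yield the bound $2^{\abs{A\cap S}}e^{-\lambda_t(A)\alpha_t(A)}e^{\lambda_t(A)^2\sum_{i\in A\cap S}\beta D_i/(2N_{i,t-1})}$. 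Substituting the definitions gives $\lambda_t(A)\alpha_t(A)=2\log\pa{\abs{\cA}2^mT}$ and $\lambda_t(A)^2\sum_{i\in A\cap S}\beta D_i/(2N_{i,t-1})=\log\pa{\abs{\cA}2^mT}$, so each summand is at most $2^{\abs{A\cap S}}/\pa{\abs{\cA}2^mT}\leq 1/\pa{\abs{\cA}T}$ since $\abs{A\cap S}\leq\abs{A}\leq m$; summing over $\cA$ gives $1/T$.

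The only genuinely new step is the first one: establishing that truncation both from below (by $\bar\bmu_{t-1}$) and from above (by $\bmu_t$) can only shrink the positive deviation, so that on $S$ the clipped score is dominated pointwise by $\abs{\theta_{i,t}-\bar\mu_{i,t-1}}$. Everything downstream is a transcription of the \textsc{cts-gaussian} argument with $A$ replaced by $A\cap S$, and the one thing to check with care is that restricting to $A\cap S$ leaves the union bound intact — which holds precisely because $S$ is history-independent. I therefore expect no real obstacle beyond bookkeeping; the conceptual content lies entirely in recognizing that the lower cap guarantees nonnegativity (so that the inner product equals a sum of nonnegative clipped deviations and the $\ell_1$ reduction is lossless), while the upper cap is harmless for an upper-tail bound.
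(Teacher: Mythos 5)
Your proposal is correct and follows essentially the same route as the paper's proof: dominate the clipped deviation pointwise (using $\bar\mu_{i,t-1}\leq\mu_{i,t}$ so that dropping the upper cap and the lower clip only increases the score), then apply the union bound over $\cA$ and the same Chernoff argument with $\alpha_t$, $\lambda_t$ restricted to $A\cap S$, absorbing the per-coordinate factor $2$ into the $2^m$ term. The only cosmetic difference is that the paper keeps the positive part $0\vee\pa{\theta_{i,t}-\bar\mu_{i,t-1}}$ while you pass to the absolute value $\abs{\theta_{i,t}-\bar\mu_{i,t-1}}$; both cost exactly a factor $2$ per arm and yield the identical final bound.
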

\begin{proof}
We define the functions \[\alpha_t(A)\triangleq\sqrt{2{\log\pa{\abs{\cA}2^mT}}\sum_{i\in {A}}\frac{\beta D_i}{N_{i,t-1}}},\quad\text{and}\quad \lambda_t(A) \triangleq \frac{\alpha_t(A)}{\sum_{i\in A}\beta D_i/N_{i,t-1}}, \] we have 
\begin{align*}&\PPc{\be_{A_t\cap S}\transpose\pa{\bar\bmu_{t-1}\vee\btheta_{t}\wedge \bmu_t-\bar\bmu_{t-1}} \geq \alpha_t(A_t\cap S)}{\cH_t}\\&\leq\sum_{A\in \cA}\PPc{\be_{A\cap S}\transpose\pa{\bar\bmu_{t-1}\vee\btheta_{t}-\bar\bmu_{t-1}} \geq \alpha_t(A\cap S)}{\cH_t}\\&\leq \sum_{A\in \cA} e^{-\lambda_t(A\cap S) \alpha_t(A\cap S)}\EEc{e^{\lambda_t(A\cap S) \norm{\be_{A\cap S}\odot\pa{0\vee\pa{\btheta_{t}-\bar\bmu_{t-1}}} }_1}}{\cH_t}
\\&\leq 
\sum_{A\in \cA} e^{-\lambda_t(A\cap S) \alpha_t(A\cap S)}\prod_{i\in A\cap S}\EEc{e^{\lambda_t(A\cap S) \pa{0\vee\pa{\theta_{i,t}-\bar\mu_{i,t-1}} }}}{\cH_t}
\\&\leq \sum_{A\in \cA} e^{-\lambda_t(A\cap S) \alpha_t(A\cap S)}\prod_{i\in A\cap S}\EEc{1+e^{\lambda_t(A\cap S) \pa{{\theta_{i,t}-\bar\mu_{i,t-1}} }}}{\cH_t}
\\&\leq \sum_{A\in \cA} e^{-\lambda_t(A\cap S) \alpha_t(A\cap S)}\prod_{i\in A\cap S}\EEc{2e^{\lambda_t(A\cap S) \pa{{\theta_{i,t}-\bar\mu_{i,t-1}} }}}{\cH_t}
\\&\leq \sum_{A\in \cA} 2^{\abs{A\cap S}}e^{-\lambda_t(A\cap S) \alpha_t(A\cap S)}{e^{\lambda_t(A\cap S)^2 {\sum_{i\in A\cap S}\beta D_i/(2 N_{i,t-1}) }}}\\&\leq 1/T.
\end{align*}
\end{proof}
We now use the definition of $\bmu_t$ to have
\[\be_{A_t\cap S^c}\transpose\pa{\bar\bmu_{t-1}\vee\btheta_{t}\wedge \bmu_t-\bar\bmu_{t-1}}\leq\be_{A_t\cap S^c}\transpose\pa{ \bmu_t-\bar\bmu_{t-1}} = \sum_{i\in A_t\cap S^c}\sqrt{\Gamma_{ii}\frac{2\pa{\log(t)+4\log\log(t)}}{N_{i,t-1}}}. \]
To conclude, we have the following event
\[\fA_t\triangleq \set{\Delta_t\leq \sqrt{{8}{\log\pa{\abs{\cA}2^mT}}\sum_{i\in {A_t\cap S}}{\beta D_i}/{N_{i,t-1}}} + \sum_{i\in A_t\cap S^c}\sqrt{\Gamma_{ii}\frac{8\pa{\log(t)+4\log\log(t)}}{N_{i,t-1}}}}.\]
Using Proposition~\ref{prop:composed}, we have
\begin{align*}
\sum_{t\in [T]}\EE{\Delta_t\II{\fA_t}}&\leq
\sum_{t\in [T]}\EE{\Delta_t\II{\Delta_t\leq 2\sqrt{{8}{\log\pa{\abs{\cA}2^mT}}\sum_{i\in {A_t\cap S}}{\beta D_i}/{N_{i,t-1}}}}}\\&+\sum_{t\in [T]}\EE{\Delta_t\II{\Delta_t\leq 2\sum_{i\in A_t\cap S^c}\sqrt{\Gamma_{ii}\frac{8\pa{\log(t)+4\log\log(t)}}{N_{i,t-1}}}}}
.\end{align*}
We can thus apply Theorem~\ref{thm:l1error} and Theorem~\ref{thm:dege} (see Appendix~\ref{app:gen}) to get the bound
\[512\log_2^2(4\sqrt{m})\sum_{i\in S}\Delta_{i,\min}^{-1}\beta D_i{\log\pa{\abs{\cA}2^mT}}+128m\sum_{i\in S^c} \Delta_{i,\min}^{-1}\Gamma_{ii}\pa{\log(T)+4\log\log(T)}\]

\subsection{Optimism}
In this subsection, we examine the theoretical impact of considering \textsc{clip cts-gaussian} on the optimism-controlling event (event $\neg \fC_t$), in the case of linear rewards. For this purpose, we modify the beginning of Step 4 in the analysis by considering the following events. 
\begin{itemize}
    \item $\fZ_t\triangleq\set{\Delta_t>0}$
    \item $\fC_t\triangleq \set{{{\be_{A_t}\transpose\tilde\btheta_t}}> \be_{A^*}\transpose\bmu^*-\pa{m^*\pa{m^*+1}/2+ 1}\eps}$
    \item $\fR(\btheta',Z)\triangleq \set{\forall A\in \argmax_{A'\in \cA}\be_{A'}\transpose\pa{\btheta'}\text{ we have }Z\subset  A,~{\be_{\ora\pa{\btheta'}}\transpose{\btheta'}}>\be_{A^*}\transpose \bmu^*-\pa{m^*\pa{m^*+1}/2+ 1}\eps}$
    \item $\fS_t\pa{Z}\triangleq \set{\forall \btheta' \text{ s.t. } 0 \leq \pa{\bmu^*-\btheta'}\odot\be_{Z}\leq \eps \be_{Z},~ \fR(\btheta'\odot\be_{Z}+\tilde\btheta_t\odot\be_{Z^c},Z) \text{ holds} }$
    \item $\fT_t\pa{Z}\triangleq \set{\exists i\in Z,~ {{\mu^*_i-\mu^*_i\wedge\tilde\theta_{i,t}}}> \eps}.$
    \item $\fJ_t\triangleq \set{\forall i\in [n], \mu^*_i\leq \mu_{i,t}}$
\end{itemize}
In the above events, $\tilde\btheta_t$ is  $\bmu_t\wedge\btheta_t\vee \bar\bmu_t$. 
The last event $\fJ_t$ holds with probability at least $1-n/(t\log^2(t))$ from Hoeffding's inequality \citep{hoeffding1963probability}. We thus assume that this event hods in the following, since the regret under the complementary event is bounded by $3.2 n\Delta_{\max}$.
We first state the following lemma.
\begin{lemma}
 $$\fZ_t,\neg \fC_t\imp \exists Z\subset A^*,~Z\neq \emptyset~\text{s.t. the event }\fS_t\pa{Z}\wedge\fT_t\pa{Z}\text{ holds.}$$
 \label{lem:fSfT+}
\end{lemma}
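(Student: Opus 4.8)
The plan is to mimic the saturation argument of Lemma~\ref{lem:fSfT} (Wang et al.'s Lemma~1), but to exploit the linearity of the reward and the clipping so that only \emph{downward} deviations of $\tilde\btheta_t$ below $\bmu^*$ matter; this one-sidedness is exactly what should let the slack budget shrink from the $(m^*)^2\eps$ of the beta case to $\tfrac{m^*(m^*+1)}{2}\eps$ here.

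First I would reduce $\fZ_t\wedge\neg\fC_t$ to a deficit inequality on $A^*$. Since the clip version plays $A_t=\ora\pa{\tilde\btheta_t}$ and the reward is linear, $A_t$ maximizes $\be_{A}\transpose\tilde\btheta_t$, so $\be_{A_t}\transpose\tilde\btheta_t\geq\be_{A^*}\transpose\tilde\btheta_t$. Combining with $\neg\fC_t$, i.e. $\be_{A_t}\transpose\tilde\btheta_t\leq\be_{A^*}\transpose\bmu^*-\pa{m^*(m^*+1)/2+1}\eps$, yields $\be_{A^*}\transpose\pa{\bmu^*-\tilde\btheta_t}\geq\pa{m^*(m^*+1)/2+1}\eps$. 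Taking $A^*$ to be a minimal optimal action so that $\abs{A^*}=m^*$, this forces at least one $i\in A^*$ with $\mu_i^*-\tilde\theta_{i,t}>\eps$ (else the sum would be at most $m^*\eps$), so the under-sampled set is nonempty. Under the assumed event $\fJ_t$ we also have $\mu_i^*\leq\mu_{i,t}$ for every $i$, hence any value in $[\mu_i^*-\eps,\mu_i^*]$ respects the upper clip $\wedge\bmu_t$; this is what makes ``fixing up'' a coordinate of $\tilde\btheta_t$ to a near-true value produce a legitimate clipped vector.

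Next I would construct $Z$ by peeling, starting from $Z_0=A^*$. For a candidate $Z_\ell$ and any $\btheta'$ in the box $0\leq\pa{\bmu^*-\btheta'}\odot\be_{Z_\ell}\leq\eps\be_{Z_\ell}$, write $\hat\btheta^{(\ell)}=\btheta'\odot\be_{Z_\ell}+\tilde\btheta_t\odot\be_{Z_\ell^c}$. I would carry an inductive invariant stating that $\be_{\ora(\hat\btheta^{(\ell)})}\transpose\hat\btheta^{(\ell)}$ stays above $\be_{A^*}\transpose\bmu^*-\pa{m^*(m^*+1)/2+1}\eps$, so that the score clause of $\fR$ is automatic and $\fS_t(Z_\ell)$ can fail \emph{only} through the containment clause $Z_\ell\subseteq\ora(\hat\btheta^{(\ell)})$. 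If $\fS_t(Z_\ell)$ holds we stop with $Z=Z_\ell$; otherwise a witness drops some $i_\ell\in Z_\ell\setminus\ora(\hat\btheta^{(\ell)})$ and we set $Z_{\ell+1}=Z_\ell\setminus\{i_\ell\}$. The oracle invariance $\ora(\bmu)=\ora(\bmu+\bdelta\odot\be_{\ora(\bmu)})$ for $\bdelta\geq 0$ together with monotonicity is what should show that a single peel costs at most $\abs{Z_\ell}\eps\leq m^*\eps$ of score; since there are at most $m^*$ peels the losses telescope into $\eps(1+2+\cdots+m^*)=\tfrac{m^*(m^*+1)}{2}\eps$, precisely the constant appearing in $\fC_t$.

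Finally I would combine the two clauses: the peeling terminates at a nonempty $Z$ with $\fS_t(Z)$ by construction, and the residual deficit $\be_{Z}\transpose\pa{\bmu^*-\tilde\btheta_t}$, kept above the $\eps$-budget already spent, guarantees that $Z$ still contains a coordinate with $\mu_i^*-\tilde\theta_{i,t}>\eps$, i.e. $\fT_t(Z)$. The hard part will be exactly this bookkeeping: verifying that the per-step slack accumulates to at most $\tfrac{m^*(m^*+1)}{2}\eps$ while keeping $Z$ nonempty \emph{and} preserving a surviving under-sampled arm, since one must track simultaneously the $\fR$-score threshold (driving $\fS_t$), the containment failures (driving the peels), and the downward deficit (driving $\fT_t$). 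The linearity and invariance of the oracle are what keep these three accountings compatible, and are the reason the tighter one-sided budget suffices.
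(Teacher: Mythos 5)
Your skeleton matches the paper's proof: a peeling recursion started at $Z_0=A^*$, a score invariant maintained by comparing against the previous witness maximizer, per-step slack telescoping to $m^*\eps+(m^*-1)\eps+\dots+1\cdot\eps=\tfrac{m^*(m^*+1)}{2}\eps$, and $\neg\fC_t$ used to control actions that avoid the current set (the paper takes $Z_{\ell+1}=Z_\ell\cap A$ for the witness maximizer $A$ rather than removing one element, but the accounting is identical). The genuine gap is in the step you yourself flag as ``the hard part'': obtaining $\fT_t(Z)$ for the final set by tracking the residual deficit $\be_{Z}\transpose\pa{\bmu^*-\tilde\btheta_t}$. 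That bookkeeping does not close as stated. When you peel $i_\ell$, the deficit removed from the sum is $\mu_{i_\ell}^*-\tilde\theta_{i_\ell,t}$, and nothing bounds this by the per-step $\eps$-budget; on the contrary, heavily undersampled coordinates (those with huge deficit) are precisely the ones a maximizer tends to drop, so a single peel can swallow almost all of the initial $\pa{m^*(m^*+1)/2+1}\eps$ deficit. A deficit bound on $Z_{\ell+1}$ can be rescued, but only by re-invoking $\neg\fC_t$ against the witness action (whose $\tilde\btheta_t$-value is below the threshold while its modified value is above the invariant), i.e., by a different argument than ``initial deficit minus budget spent.''

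The paper sidesteps this entirely, and you should too: prove only that $\fS_t(Z)$ holds for some nonempty $Z\subset A^*$, and then $\fT_t(Z)$ follows by contradiction. If $\neg\fT_t(Z)$ held, the vector $\btheta'=\bmu^*\wedge\tilde\btheta_t$ would satisfy $0\leq\pa{\bmu^*-\btheta'}\odot\be_Z\leq\eps\,\be_Z$, so $\fR$ applied to $\btheta'\odot\be_Z+\tilde\btheta_t\odot\be_{Z^c}$ gives
\[\max_{A\in\cA}\be_A\transpose\pa{\btheta'\odot\be_Z+\tilde\btheta_t\odot\be_{Z^c}}>\be_{A^*}\transpose\bmu^*-\pa{m^*(m^*+1)/2+1}\eps,\]
and since $\btheta'\odot\be_Z+\tilde\btheta_t\odot\be_{Z^c}\leq\tilde\btheta_t$ coordinatewise and the reward is linear, the left side is at most $\be_{A_t}\transpose\tilde\btheta_t$ — which is exactly $\fC_t$, contradicting $\neg\fC_t$. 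Two smaller points. First, ``terminates at a nonempty $Z$ by construction'' also hides an argument: when $\abs{Z_\ell}=1$, a witness dropping the last element is disjoint from $Z_\ell$, hence its modified value equals its $\tilde\btheta_t$-value, which is at most $\be_{A_t}\transpose\tilde\btheta_t\leq\be_{A^*}\transpose\bmu^*-\pa{m^*(m^*+1)/2+1}\eps$ by $\neg\fC_t$, contradicting your score invariant; this disjointness argument (which the paper runs at every level) is what stops the peeling above the empty set. Second, $\fJ_t$ plays no role in this lemma: the vectors $\btheta'$ quantified over in $\fS_t$ are arbitrary, not required to be realizable clipped samples; $\fJ_t$ is only needed later, when bounding $\PPc{\neg\fT_t(Z)}{\cH_t}$.
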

This allows us to consider the success probability
\(\PPc{\neg\fT_{t}\pa{Z}}{\cH_{t}}\) in the analysis.
Notice however that $Z\subset \ora\pa{{\pa{\bmu^*\wedge \tilde\btheta_t}\odot\be_{Z}+\tilde\btheta_t\odot\be_{Z^c}}}$, that is guaranteed when $\fS_t\pa{Z}\wedge\neg\fT_t\pa{Z}$ holds, does not necessarily implies that $Z\subset \ora\pa{\tilde\btheta_t}$. 
However, it turns out that
we have $Z\subset A$ for all $A \in\argmax_{A'\in \cA}\be_{A'}\transpose\pa{\pa{\bmu^*\wedge \tilde\btheta_t}\odot\be_{Z}+\tilde\btheta_t\odot\be_{Z^c}}$ implies that $ Z\subset A$ for all $A \in\argmax_{A'\in \cA}\be_{A'}\transpose\pa{ \tilde\btheta_t}$. This last fact is from Lemma~\ref{lem:increa},
with $\bheta=\pa{\bmu^*\wedge \tilde\btheta_t}\odot\be_{Z}+\tilde\btheta_t\odot\be_{Z^c}$ and $\bdelta=\pa{\tilde\btheta_t-\bmu^*\wedge \tilde\btheta_t}\odot\be_{Z}$.
\begin{lemma}\label{lem:increa}
Let $\bheta\in \R^n$, $\bdelta\in \R^n_+$ such that for all $A\in\argmax_{A'\in \cA}\be_{A'}\transpose {\bheta }$, we have $Z\subset A$. Then, for all  $A\in \argmax_{A'\in \cA}\be_{A'}\transpose\pa{\bheta + \bdelta\odot\be_{Z} }$, we have $Z\subset A$.
\end{lemma}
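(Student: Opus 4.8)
The plan is to exploit the linearity of the objective $A \mapsto \be_A\transpose\bheta$ together with the nonnegativity of $\bdelta$ through a direct comparison argument, avoiding any combinatorial case analysis. First I would fix an arbitrary maximizer $B \in \argmax_{A'\in\cA}\be_{A'}\transpose\pa{\bheta+\bdelta\odot\be_Z}$ and, invoking the hypothesis, pick a maximizer $A^*\in\argmax_{A'\in\cA}\be_{A'}\transpose\bheta$, which by assumption satisfies $Z\subset A^*$. The goal is then to show $Z\subset B$.

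The key observation is that, by linearity, $\be_A\transpose\pa{\bdelta\odot\be_Z}=\sum_{i\in A\cap Z}\delta_i$ for every $A$, so the perturbation only rewards the part of $Z$ that an action actually selects; in particular $\be_{A^*}\transpose\pa{\bdelta\odot\be_Z}=\sum_{i\in Z}\delta_i$ since $Z\subset A^*$. I would then write down the two optimality inequalities: $\be_B\transpose\pa{\bheta+\bdelta\odot\be_Z}\ge \be_{A^*}\transpose\pa{\bheta+\bdelta\odot\be_Z}$ (optimality of $B$ for the perturbed vector) and $\be_{A^*}\transpose\bheta\ge\be_B\transpose\bheta$ (optimality of $A^*$ for $\bheta$). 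Substituting the expression for the perturbation term into the first inequality and rearranging gives $\be_B\transpose\bheta-\be_{A^*}\transpose\bheta \ge \sum_{i\in Z\setminus B}\delta_i \ge 0$, while the second inequality gives $\be_B\transpose\bheta-\be_{A^*}\transpose\bheta\le 0$. Chaining these two bounds squeezes every quantity to zero: $\be_B\transpose\bheta=\be_{A^*}\transpose\bheta$, so $B$ is itself a maximizer of $\be_\cdot\transpose\bheta$, and $\sum_{i\in Z\setminus B}\delta_i=0$. Re-applying the hypothesis, now to the maximizer $B$, yields $Z\subset B$, which is exactly the claim.

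I do not expect a genuine obstacle here; the only point requiring care is recognizing that the nonnegativity $\bdelta\ge 0$ is precisely what makes $\sum_{i\in Z\setminus B}\delta_i$ a nonnegative quantity, so that the two-sided squeeze collapses, and that the linearity of the reward (Assumption~\ref{ass:linear_reward}) is essential — the analogous statement can fail for a reward that is monotone but nonlinear. A secondary subtlety worth flagging is that the hypothesis is phrased for \emph{all} maximizers of $\bheta$, not a unique one, but the argument never needs uniqueness: it uses one such $A^*$ merely to extract the full sum $\sum_{i\in Z}\delta_i$, and then re-applies the hypothesis to the newly identified maximizer $B$.
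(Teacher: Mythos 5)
Your proof is correct and follows essentially the same route as the paper: both arguments rest on the observation that a maximizer of $\bheta$ containing $Z$ collects the full bonus $\sum_{i\in Z}\delta_i$, which forces every maximizer of the perturbed objective $\bheta+\bdelta\odot\be_Z$ to also be a maximizer of $\bheta$, after which the hypothesis is re-applied to that maximizer. The paper organizes this as a proof by contradiction whereas you organize it as a direct two-inequality squeeze, but the mathematical content is identical.
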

It now remains to explain how to handle the probability \(\PPc{\neg\fT_{t}\pa{Z}}{\cH_{t}}\) in the analysis. Notice that from the high probability event $\fJ_t$, it suffices to treat the case $\tilde\btheta_t=\btheta_t\vee \bar\bmu_t$.
We provide here the places where the analysis differs, the rest of the proof remains unchanged.
\begin{itemize}
    \item We use that \(\PPc{\neg\fT_{t}\pa{Z}}{\cH_{t}}=\PPc{\forall i\in Z,~\eps\vee\pa{\mu^*_i-\bar\mu_{i,t-1}}-0\vee\pa{\theta_{i,t}-\bar\mu_{i,t-1}}\leq \eps}{\cH_{t}},\)
    is a product of functions that are decreasing with respect to $\eps\vee\pa{\mu^*_i-\bar\mu_{i,t-1}}$.
    \item We use that $\eps\vee\pa{\mu^*_i-\bar\mu_{i,t-1}} \geq g_i^{-1}\pa{u_i\vee g_i(\eps)}$ is equivalent to  ${\mu^*_i-\bar\mu_{i,t-1}} \geq g_i^{-1}\pa{u_i\vee g_i(\eps)}$. Thus, we
    don't sum on $\bs$, and can use Assumption~\ref{ass:subgau} with $\blambda\in \R_+^n$. 
\end{itemize}

\begin{proof}[Proof of Lemma~\ref{lem:fSfT+}]
It is sufficient to prove that
 \begin{align}\fZ_t,\neg \fC_t\imp \exists Z\subset A^*,~Z\neq \emptyset~\text{s.t. }\fS_t\pa{Z}\text{ holds,}\label{rel:toprovets}\end{align}
 because $\neg \fC_t$ and $\fS_t\pa{Z}$ together imply  $\fT_t\pa{Z}$. Indeed, see that from $\neg\fT_t\pa{Z}$, we can plug $\btheta'=\bmu^*\wedge \tilde\btheta_t$ into $\fS_t\pa{Z}$ to get
\begin{align*}\be_{A_t}\transpose \tilde\btheta_t&=\max_{A\in \cA} \be_{A}\transpose{\tilde\btheta_t}
\\&
\geq{\max_{A\in \cA} \be_{A}\transpose\pa{\btheta'\odot\be_{Z}+\tilde\btheta_t\odot\be_{Z^c}}}
\\&=
\be_{\ora\pa{{\btheta'\odot\be_{Z}+\tilde\btheta_t\odot\be_{Z^c}}}}\transpose\pa{\btheta'\odot\be_{Z}+\tilde\btheta_t\odot\be_{Z^c}}
\\&>\be_{A^*}\transpose \bmu^*-\pa{m^*\pa{m^*+1}/2+ 1}\eps,\end{align*} giving $\fC_t$.
To prove \eqref{rel:toprovets},
we first consider the choice $Z=Z_1=A^*$. 
Two cases can be distinguished:
\begin{itemize}
    \item[1a)] $\forall \btheta'$ s.t. $0 \leq \pa{\bmu^*-\btheta'}\odot\be_{A^*}\leq \eps \be_{A^*}$, we have $A^*\subset A$ for any action $A \in\argmax_{A'\in \cA}\be_{A'}\transpose\pa{\btheta'\odot\be_{A^*}+\tilde\btheta_t\odot\be_{{A^*}^c}}$.
        \item[1b)] $\exists \btheta'$ s.t. $0 \leq \pa{\bmu^*-\btheta'}\odot\be_{A^*}\leq \eps \be_{A^*}$ such that $A^*\not\subset A$ for some action $A \in\argmax_{A'\in \cA}\be_{A'}\transpose\pa{\btheta'\odot\be_{A^*}+\tilde\btheta_t\odot\be_{{A^*}^c}}$.
\end{itemize}
\textbf{1a)} For the first case, consider any vector $\btheta'$ such that $0 \leq \pa{\bmu^*-\btheta'}\odot\be_{A^*}\numrel{\leq}{finfirstlemm1} \eps \be_{A^*}$ and let $A\numrel{=}{rel0lem1}\ora\pa{\btheta'\odot\be_{A^*}+\tilde\btheta_t\odot\be_{{A^*}^c}}$. We can write
$$\be_{A}\transpose\pa{\btheta'\odot\be_{A^*}+\tilde\btheta_t\odot\be_{{A^*}^c}} \numrel{\geq}{rel1lem1} \be_{A^*}\transpose\pa{\btheta'\odot\be_{A^*}+\tilde\btheta_t\odot\be_{{A^*}^c}} \numrel{\geq}{rel2lem1} \be_{A^*}\transpose\bmu^* - m^*\eps,$$
where \eqref{rel1lem1} is from \eqref{rel0lem1}, and \eqref{rel2lem1} is from \eqref{finfirstlemm1}.
This rewrites as
$$\be_{A}\transpose\pa{\btheta'\odot\be_{A^*}+\tilde\btheta_t\odot\be_{{A^*}^c}} \geq \be_{A^*}\transpose\bmu^* - m^*\eps >  \be_{A^*}\transpose\bmu^* - \pa{m^*\pa{m^*+1}/2+ 1}\eps,$$
so $\fR_t(\btheta'\odot\be_{A^*}+\tilde\btheta_t\odot\be_{{A^*}^c},A^*)$ holds. Therefore, we have proved that $\fS_t\pa{A^*}$ holds.

\textbf{1b)} For the second case, we have some vector $\btheta'$ such that $0 \numrel{\leq}{lastlemma01} \pa{\bmu^*-\btheta'}\odot\be_{A^*}\numrel{\leq}{lastlemma1} \eps  \be_{A^*}$, and some action $A\in\argmax_{A'\in \cA}\be_{A'}\transpose\pa{\btheta'\odot\be_{A^*}+\tilde\btheta_t\odot\be_{{A^*}^c}}$  such that $A^*\not\subset A$.
We consider $Z_2=A^*\cap A $. We first prove that $Z_2\neq \emptyset$ by showing that if an action $S'$ is such that $S'\cap A^*\numrel{=}{relS'lem1}\emptyset$, then $A\neq S'$:
\begin{align*}
    \be_{S'}\transpose \pa{\btheta'\odot\be_{A^*}+\tilde\btheta_t\odot\be_{{A^*}^c}} \numrel{=}{relS'lem1bis} \be_{S'}\transpose \tilde\btheta_t 
    &\numrel{\leq}{relStlem1}
    \be_{A_t}\transpose \tilde\btheta_t 
    \\&\numrel{\leq}{relCtlem1}
    \be_{A^*}\transpose \bmu^*-\pa{m^*\pa{m^*+1}/2+ 1}\eps
    \\&< \be_{A^*}\transpose \bmu^*-m^*\eps
    \\&\numrel{\leq}{relepsslem1} \be_{A^*}\transpose \pa{\btheta'\odot\be_{A^*}+\tilde\btheta_t\odot\be_{{A^*}^c}},
\end{align*}
where \eqref{relS'lem1bis} is from \eqref{relS'lem1}, \eqref{relStlem1} is from the definition of $A_t$, \eqref{relCtlem1} is from $\neg \fC_t$ and \eqref{relepsslem1} is from \eqref{lastlemma1}.
Now, we again distinguish two cases:
\begin{itemize}
    \item[2a)] $\forall \btheta''$ s.t. $0 \leq \pa{\bmu^*-\btheta''}\odot\be_{Z_2}\leq \eps\be_{Z_2}$, we have $Z_2\subset B$ for any action $B \in\argmax_{A'\in \cA}\be_{A'}\transpose\pa{\btheta''\odot\be_{Z_2}+\tilde\btheta_t\odot\be_{{Z_2}^c}}$.
        \item[2b)] $\exists \btheta''$ s.t. $0 \leq \pa{\bmu^*-\btheta''}\odot\be_{Z_2}\leq \eps\be_{Z_2}$ such that  $Z_2\not\subset B$ for some action  $B \in\argmax_{A'\in \cA}\be_{A'}\transpose\pa{\btheta''\odot\be_{Z_2}+\tilde\btheta_t\odot\be_{{Z_2}^c}}$.
\end{itemize}
Notice that when $0 \leq \pa{\bmu^*-\btheta''}\odot\be_{Z_2}\numrel{\leq}{rellem1eps}  \eps\be_{Z_2}$, then
\begin{align}
    \be_{A}\transpose \pa{\btheta''\odot\be_{{Z_2}}+\tilde\btheta_t\odot\be_{{{Z_2}}^c}}&\geq \be_{A}\transpose \pa{\btheta'\odot\be_{{A^*}}+\tilde\btheta_t\odot\be_{{{A^*}}^c}} - (m^*-1)\eps. \label{rel2epslemm1}
\end{align}
Indeed, \eqref{rel2epslemm1} is a consequence of 
\begin{align*}\be_A\transpose\pa{\btheta''\odot\be_{{Z_2}}+\tilde\btheta_t\odot\be_{{{Z_2}}^c}-\btheta'\odot\be_{A^*}-\tilde\btheta_t\odot\be_{{A^*}^c}}&=
\be_{Z_2}\transpose
\pa{\btheta'' - \btheta'}
\\&= {\be_{Z_2}\transpose\pa{\btheta''-\bmu^*}}+{\be_{Z_2}\transpose\pa{\bmu^*-\btheta'}}
\\&\geq -\eps (m^*-1) + 0,
\end{align*}
where we used \eqref{rellem1eps}, \eqref{lastlemma01} and that $Z_2$ is strictly included in $A^*$.

\textbf{2a)} For the first case, considering any vector $\btheta''$ such that $0 \leq \pa{\bmu^*-\btheta''}\odot\be_{Z_2}\leq  \eps\be_{Z_2}$, we have with $B=\ora\pa{\btheta''\odot\be_{Z_2}+\tilde\btheta_t\odot\be_{{Z_2}^c}}$ that
\begin{align*}\be_B\transpose \pa{\btheta''\odot\be_{{Z_2}}+\tilde\btheta_t\odot\be_{{{Z_2}}^c}}&\geq \be_{A}\transpose \pa{\btheta''\odot\be_{{Z_2}}+\tilde\btheta_t\odot\be_{{{Z_2}}^c}}\\&\numrel{\geq}{releps21lem1}
\be_{A}\transpose \pa{\btheta'\odot\be_{{A^*}}+\tilde\btheta_t\odot\be_{{A^*}^c}} - (m^*-1)\eps
\\&\geq\be_{A^*}\transpose \pa{\btheta'\odot\be_{{A^*}}+\tilde\btheta_t\odot\be_{{A^*}^c}} - (m^*-1)\eps\\&\numrel{\geq}{otherlem1} 
\be_{A^*}\transpose \bmu^* -m^*\eps -(m^*-1)\eps ,
\end{align*}
where \eqref{releps21lem1} uses \eqref{rel2epslemm1} and \eqref{otherlem1} uses \eqref{lastlemma1}.
This rewrites as
$$\be_B\transpose \pa{\btheta''\odot\be_{{Z_2}}+\tilde\btheta_t\odot\be_{{{Z_2}}^c}} \geq  \be_{A^*}\transpose \bmu^*-\pa{m^*\pa{m^*+1}/2+ 1}\eps, $$
so $\fR_t(\btheta'\odot\be_{{Z_2}}+\tilde\btheta_t\odot\be_{{{Z_2}}^c},{Z_2})$ holds, and thus we proved that $\fS_t(Z_2)$ holds. 

\textbf{2b)} For the second case, we have a vector $\btheta''$ such that $0 \leq \pa{\bmu^*-\btheta''}\odot\be_{Z_2}\leq  \eps\be_{Z_2}$ and an action $B\in \argmax_{A'\in \cA}\be_{A'}\transpose\pa{\btheta''\odot\be_{Z_2}+\tilde\btheta_t\odot\be_{{Z_2}^c}}$ such that $Z_2\not\subset B$. We consider ${Z_3}=Z_2\cap B$. Again, $Z_3\neq \emptyset$ because for any $S'$ such that $S'\cap Z_2 = \emptyset$, we have $S'\neq \ora\pa{\btheta''\odot\be_{Z_2}+\tilde\btheta_t\odot\be_{{Z_2}^c}}$:
\begin{align*}
    \be_{S'}\transpose \pa{\btheta''\odot\be_{Z_2}+\tilde\btheta_t\odot\be_{{Z_2}^c}} = \be_{S'}\transpose \tilde\btheta_t 
    &\leq
    \be_{A_t}\transpose \tilde\btheta_t 
    \\&\leq
    \be_{A^*}\transpose \bmu^*-\pa{m^*\pa{m^*+1}/2+ 1}\eps
    \\&< \be_{A^*}\transpose \bmu^*-(m^*+(m^*-1))\eps
    \\&\leq \be_{A}\transpose \pa{\btheta''\odot\be_{Z_2}+\tilde\btheta_t\odot\be_{{Z_2}^c}},
\end{align*}
where the last inequality is obtained in the same way as in inequalities from \eqref{releps21lem1} to \eqref{otherlem1}.

We could repeat the above argument and each time the size $Z_i$ is decreased by at least $1$. Thus, after at most $m^*-1$ steps, since $m^*+ (m^*-1) + (m^*-2)+\dots+ 1 =m^*\pa{m^*+1}/2$ is still less than $m^*\pa{m^*+1}^2/2+ 1$, we could reach the end and find a $Z_i\neq \emptyset$ such that $\fS_t\pa{Z_i}$ holds. 
\end{proof}

\begin{proof}[Proof of Lemma~\ref{lem:increa}]
Let's prove that $\argmax_{A'\in \cA}\be_{A'}\transpose \pa{\bheta + \bdelta\odot\be_{Z} }\subset \argmax_{A'\in \cA}\be_{A'}\transpose \bheta$.
Consider 
any action $A\in \argmax_{A'\in \cA}\be_{A'}\transpose \pa{\bheta + \bdelta\odot\be_{Z} }$.
If $A\notin \argmax_{A'\in \cA}\be_{A'}\transpose \bheta$, then there exists $B\in \argmax_{A'\in \cA}\be_{A'}\transpose \bheta$ such that
\[\be_{A}\transpose \bheta < \be_{B}\transpose \bheta .\]
Furthermore, since $Z\subset B$ and $\bdelta\geq 0$, we also have
\[\be_{A}\transpose\pa{\bdelta\odot\be_{Z}} \leq \be_{B}\transpose\pa{\bdelta\odot\be_{Z}},\]
so we finally have
\[\be_{A}\transpose\pa{\bheta+\bdelta\odot\be_{Z}}< \be_{B}\transpose\pa{\bheta+\bdelta\odot\be_{Z}},\]
contradicting that $A\in \argmax_{A'\in \cA}\be_{A'}\transpose \pa{\bheta + \bdelta\odot\be_{Z} }$.
\end{proof}

\section{General CMAB results}\label{app:gen}

In this section, we state general results that are useful for every regret analysis that we conducted in this paper. The main result of the section is the following theorem, inspired from the analysis of \citet{Degenne2016}, that gives a regret bound under the event that the gap $\Delta_t$ is controlled by a $\ell_2$ norm type error.

\begin{theorem}[Regret bound for $\ell_2$-norm error]\label{thm:dege}For all $i\in [n]$, let ${\beta_{i,T}}\in  \R_+$.
For $t\geq 1$,
consider the event
\[\fA_t\triangleq\set{\Delta_t\leq
\norm{\sum_{i\in A_t}\frac{\beta_{i,T}^{1/2}\be_i}{N_{i,t-1}^{1/2}}}_2}.\]
Then, \[\sum_{t=1}^T\II{\fA_t}\Delta_t\leq 32\log_2^2(4\sqrt{m})\sum_{i\in [n]}{\beta_{i,T}\Delta_{i,\min}^{-1}}.\]
\end{theorem}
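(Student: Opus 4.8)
The plan is to strip away all randomness and reduce the claim to a deterministic counting bound established by a dyadic weight-assignment argument in the spirit of \citet{Degenne2016}. Since the $\be_i$ are orthonormal, $\norm{\sum_{i\in A_t}\beta_{i,T}^{1/2}\be_i/N_{i,t-1}^{1/2}}_2^2=\sum_{i\in A_t}\beta_{i,T}/N_{i,t-1}$, so on $\fA_t$ one has $\Delta_t^2\le \sum_{i\in A_t}\beta_{i,T}/N_{i,t-1}$. Optimal rounds contribute nothing, so I would restrict to rounds with $\Delta_t>0$, where $\Delta_t\ge \Delta_{i,\min}$ whenever $i\in A_t$ (by definition of $\Delta_{i,\min}$); I may also assume $N_{i,t-1}\ge 1$ for the arms I will charge, the finitely many plays with an empty counter being absorbed into the initialization terms of the calling theorems. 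It then suffices to bound $\sum_t\II{\fA_t}\Delta_t$ along any trajectory satisfying these two deterministic constraints. Writing $c_{i,t}\triangleq\beta_{i,T}/N_{i,t-1}$, the crude choice $w_{i,t}=\Delta_t c_{i,t}/\sum_{i'}c_{i',t}$ is too lossy (it produces a $\log T$ factor through $\sum_t 1/N_{i,t-1}$), so a peeling is needed.

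The core step is a weighting built by peeling over the sorted per-arm contributions. For a fixed suboptimal round $t$ on $\fA_t$, I would sort $A_t$ so that $c_{(1),t}\ge\cdots\ge c_{(r),t}$ with $r=|A_t|\le m$, and group the ranks into dyadic blocks $B_p=\{\text{ranks in }[2^p,2^{p+1})\}$ (a base-$4$ variant is what calibrates the final constant to $\sqrt m$). There are at most $P\triangleq\ceil{\log_2(m+1)}$ nonempty blocks, and $\sum_p\sum_{r\in B_p}c_{(r),t}=\sum_{i\in A_t}c_{i,t}\ge\Delta_t^2$, so by pigeonhole some block $p^\ast$ carries a $1/P$ fraction, $\sum_{r\in B_{p^\ast}}c_{(r),t}\ge\Delta_t^2/P$. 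As this block has at most $2^{p^\ast}$ ranks and $c$ is non-increasing, the top $2^{p^\ast}$ arms all satisfy $c_{(r),t}\ge\Delta_t^2/(2^{p^\ast}P)$. I then spread the regret equally over exactly these $2^{p^\ast}$ arms, setting $w_{i,t}=\Delta_t/2^{p^\ast}$, so that $\sum_i w_{i,t}=\Delta_t$ and hence $\sum_t\II{\fA_t}\Delta_t=\sum_i\sum_t w_{i,t}$.

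To finish, I would fix an arm $i$ and a block index $p$ and sum $w_{i,t}$ over the rounds where $i$ is charged with $p^\ast=p$. Each such round gives $c_{i,t}\ge\Delta_t^2/(2^pP)$, hence both $w_{i,t}=\Delta_t/2^p\le \sqrt{P\beta_{i,T}/(2^pN_{i,t-1})}$ and, using $\Delta_t\ge\Delta_{i,\min}$, $N_{i,t-1}\le 2^pP\beta_{i,T}/\Delta_{i,\min}^2=:K_{i,p}$. Because $i\in A_t$ on each charged round, $N_{i,t-1}$ takes distinct values across these rounds, all $\le K_{i,p}$, so $\sum 1/\sqrt{N_{i,t-1}}\le\sum_{s=1}^{\floor{K_{i,p}}}s^{-1/2}\le 2\sqrt{K_{i,p}}$. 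Multiplying, the charge of block $p$ to arm $i$ is at most $2\sqrt{P\beta_{i,T}/2^p}\,\sqrt{K_{i,p}}=2P\beta_{i,T}/\Delta_{i,\min}$, uniformly in $p$; summing over the $\le P$ blocks and over $i$ gives $\sum_t\II{\fA_t}\Delta_t\le 2P^2\sum_i\beta_{i,T}/\Delta_{i,\min}=O\pa{\log_2^2 m}\sum_i\beta_{i,T}/\Delta_{i,\min}$. Calibrating the block ratio to $\sqrt m$ (so the number of blocks is $\approx\log_2(4\sqrt m)$) and tracking the two geometric sums turns this into the stated $32\log_2^2(4\sqrt m)$.

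The hard part is precisely this accounting that keeps the per-arm, per-block charge both $\Delta_t$-free and of order $\beta_{i,T}/\Delta_{i,\min}$: it works only because the pigeonhole forces the charged arms to have counters bounded by $K_{i,p}\propto 2^pP/\Delta_{i,\min}^2$, so that $\sum 1/\sqrt{N_{i,t-1}}$ collapses to $\sqrt{K_{i,p}}$, and because the factor $2^{p}$ inside $K_{i,p}$ exactly cancels the $2^{p}$ lost when spreading $\Delta_t$ over $2^{p}$ arms. Both logarithmic factors ultimately come from the single quantity $P$ (one through the pigeonhole fraction $1/P$, one through summing over block indices), which is why the result is $\log^2$ rather than $\log$; obtaining the specific constant $32\log_2^2(4\sqrt m)$ rather than a crude $O(\log^2 m)$ is a matter of choosing the base of the peeling and bounding the partial geometric sums carefully, and this constant-chasing, not the overall structure, is the delicate point.
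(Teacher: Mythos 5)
Your proof is correct, and it reaches the stated bound by a genuinely different route from the paper's. The paper works in \emph{value space}: it first applies a reverse-amortisation step that discards arms whose contribution $2\beta_{i,T}^{1/2}N_{i,t-1}^{-1/2}$ falls below $\Lambda_t/\norm{\be_{A_t}}_2$, then peels the surviving contributions into dyadic bands $[2^{-k}\Lambda_t,2^{1-k}\Lambda_t]$, uses pigeonhole to extract a band $k_t$ of cardinality at least $2^{2k_t-2}/\log_2(4\sqrt{m})$, spreads $\Delta_t$ (inflated by the log factor) over that band, and concludes with the gap-integral counting lemma (Proposition~\ref{prop:sumcounter}). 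You instead peel in \emph{rank space}: sorting $c_{i,t}=\beta_{i,T}/N_{i,t-1}$, pigeonhole gives a dyadic block of ranks carrying a $1/P$ mass fraction, monotonicity transfers the resulting lower bound on $c$ to the top $2^{p^*}$ arms, you spread $\Delta_t$ uniformly over those, and you finish with the elementary estimate $\sum_{s\le K}s^{-1/2}\le 2\sqrt{K}$ over distinct counter values, bypassing Proposition~\ref{prop:sumcounter} entirely. Structurally both arguments pay one logarithm for the pigeonhole fraction and one for summing over scales; the difference is where the log sits: you keep the spread weight $w_{i,t}=\Delta_t/2^{p}$ log-free and absorb $P$ into the counter threshold $K_{i,p}$, while the paper puts the log into the spread and keeps its counter threshold $4^{k+1}\beta_{i,T}/\Delta_t^2$ log-free. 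Two remarks. First, your final constant $2P^2$ with $P=\ceil{\log_2(m+1)}\le\log_2(2m)$ already satisfies $2\log_2^2(2m)\le 8\pa{4+\log_2 m}^2=32\log_2^2(4\sqrt{m})$, so you prove the theorem (in fact with a better constant) and the closing paragraph about ``calibrating the block ratio to $\sqrt{m}$'' is unnecessary — no constant-chasing is needed. Second, your treatment of zero counters (deferring them to the initialization terms of the calling theorems) matches the paper's implicit convention: its Proposition~\ref{prop:sumcounter} counts integer counter values in $(0,f_i(\Delta_t)]$, silently excluding $0$, so there is no gap on your side relative to the paper.
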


\begin{proof} Let $t\geq 1$. We define $\Lambda_t\triangleq \norm{\sum_{i\in A_t}{\beta_{i,T}^{1/2}}{N_{i,t-1}^{-1/2}\be_i}}_2$. 
We start by a simple lower bound on $\Lambda_t$, holding for any $j\in A_t$, \begin{align}\Lambda_t\geq \norm{\frac{\beta_{j,T}^{1/2}\be_j}{N_{j,t}^{1/2}}}_2=\frac{\beta_{j,T}^{1/2}}{N_{j,t}^{1/2}}.\label{LowerLambda_t}\end{align}
We then use the same reverse amortisation technique than in \citet{wang2017improving}.  
\begin{align*}
    \Lambda_t&=-\Lambda_t+\norm{\sum_{i\in A_t}\frac{2\beta_{i,T}^{1/2}\be_i}{N_{i,t-1}^{1/2}}}_2
    \\
    &=
    -\norm{\frac{\Lambda_t\be_{A_t}}{\norm{\be_{A_t}}_2}}_2+\norm{\sum_{i\in A_t}\frac{2\beta_{i,T}^{1/2}\be_i}{N_{i,t-1}^{1/2}}}_2
        \\
    &\leq 
    \norm{\sum_{i\in A_t}\pa{\frac{2\beta_{i,T}^{1/2}}{N_{i,t-1}^{1/2}}-\frac{\Lambda_t}{\norm{\be_{A_t}}_2}}^+\be_i }_2 
          \\
              &= 
    \norm{\sum_{i\in A_t}\pa{\frac{2\beta_{i,T}^{1/2}}{N_{i,t-1}^{1/2}}-\frac{\Lambda_t}{\norm{\be_{A_t}}_2}}^+\II{\Lambda_t\geq \frac{\beta_{i,T}^{1/2}}{N_{i,t-1}^{1/2}} }\be_i }_2 &\text{Using } \eqref{LowerLambda_t}
          \\
    &\leq \norm{\sum_{i\in A_t}\II{2\Lambda_t\geq\frac{2\beta_{i,T}^{1/2}}{N_{i,t-1}^{1/2}}\geq \frac{\Lambda_t}{\norm{\be_{A_t}}_2}}\frac{2\beta_{i,T}^{1/2}\be_i}{N_{i,t-1}^{1/2}} }_2.
\end{align*}
We now decompose the interval $[2,{1}/{\norm{\be_{A_t}}_2}]$ using a peeling:
\[[2,{1}/{\norm{\be_{A_t}}_2}]\subset\bigcup_{k=0}^{\ceil{\log_2\pa{\norm{\be_{A_t}}_2}}} [2^{-k},2^{1-k}].\]
This induces a partition of the set of indices:
\[\II{i\in A_t,~2\Lambda_t\geq\frac{2\beta_{i,T}^{1/2}}{N_{i,t-1}^{1/2}}\geq \frac{\Lambda_t}{\norm{\be_{A_t}}_2}}\subset \bigcup_{k=0}^{\ceil{\log_2\pa{\norm{\be_{A_t}}_2}}} J_{k,t}, \]
where for all interger $1\leq k\leq {\ceil{\log_2\pa{\norm{\be_{A_t}}_2}}} $,
\[J_{k,t}\triangleq \set{i\in A_t,~2^{1-k}\Lambda_t\geq\frac{2\beta_{i,T}^{1/2}}{N_{i,t-1}^{1/2}}\geq {2^{-k}\Lambda_t}}.\]
We can thus
upper bound $\Lambda_t^2$ using this decomposition
\begin{align*}\Lambda_t^2&\leq \norm{\sum_{i\in A_t}\II{2\Lambda_t\geq\frac{2\beta_{i,T}^{1/2}}{N_{i,t-1}^{1/2}}\geq \frac{\Lambda_t}{\norm{\be_{A_t}}_2}}\frac{2\beta_{i,T}^{1/2}\be_i}{N_{i,t-1}^{1/2}} }_2^2
\\&\leq \sum_{k=0}^{\ceil{\log_2\pa{\norm{\be_{A_t}}_2}}}\norm{\sum_{i\in J_{k,t}}{\frac{2\beta_{i,T}^{1/2}\be_i}{N_{i,t-1}^{1/2}}}}^2_2
\\&\leq 
\sum_{k=0}^{\ceil{\log_2\pa{\norm{\be_{A_t}}_2}}}2^{2-2k}\Lambda_t^2\norm{{{\be_{J_{k,t}}}} }_2^2.
\end{align*}
This last inequality implies that there must exist one integer $k_t$ such that $\abs{J_{k_t,t}}=\norm{\be_{J_{k_t,t}}}^2_2\geq 2^{2k_t-2}\pa{1+\ceil{\log_2\pa{\norm{\be_{A_t}}_2}}}^{-1}$. We now upper bound ${\sum_{t=1}^T\II{\fA_t}\Delta_t}$, using $\abs{A_t}\leq m$, i.e., \[\ceil{\log_2\pa{\norm{\be_{A_t}}_2}}\leq \ceil{\log_2(m)/2}.\]
\begin{align*}{\sum_{t=1}^T\II{\fA_t}\Delta_t}&\leq{\sum_{t=1}^T\sum_{k=0}^{\ceil{\log_2(m)/2}}\II{k_t=k,~\fA_t}\Delta_t}
\\&\leq
{\sum_{t=1}^T\sum_{k=0}^{\ceil{\log_2(m)/2}}{\II{k_t=k,~\fA_t}\sum_{i\in I}\II{i\in J_{k,t}}}\Delta_t2^{2-2k}\pa{\ceil{\log_2(m)/2}+1}}
\\&\leq
{\sum_{t=1}^T\sum_{k=0}^{\ceil{\log_2(m)/2}}{\sum_{i\in I}\II{i\in A_t,~N_{i,t-1}^{1/2}\leq \frac{2^{k+1}\beta_{i,T}^{1/2}}{\Delta_t}}}\Delta_t2^{2-2k}\pa{\ceil{\log_2(m)/2}+1}}
\\&=
\pa{\ceil{\log_2(m)/2}+1}\sum_{k=0}^{\ceil{\log_2(m)/2}}2^{2-2k}\sum_{i\in I}\underbrace{{\sum_{t=1}^T{\II{i\in A_t,~N_{i,t-1}^{1/2}\leq \frac{2^{k+1}\beta_{i,T}^{1/2}}{\Delta_{t}}}}\Delta_t}}_{\numterm{sumcounter}_{i,k}}.
\end{align*}

\vspace{-.3cm}

\noindent
Applying Proposition~\ref{prop:sumcounter} gives
\[\eqref{sumcounter}_{i,k}\leq \frac{\beta_{i,T}2^{\frac{k+1}{1/2}}}{1-1/2}\Delta_{i,\min}^{1-1/1/2}  
.\]
So we get, using $\ceil{\log_2(m)/2}+1\leq \log_2(4\sqrt{m})$,
\[\sum_{t=1}^T\II{\fA_t}\Delta_t\leq 32\log_2^2(4\sqrt{m})\sum_{i\in [n]}{\beta_{i,T}\Delta_{i,\min}^{-1}}.\]
\end{proof}
The following Proposition~\ref{prop:sumcounter} is a standard and general result in CMAB, that was first proved in \citet{chen13a}. 
\begin{proposition} Let $i\in [n]$ and $f_i: \R_+\to \R_+$ be a non increasing function, integrable on $[\Delta_{i,\min},\Delta_{i,\max}]$. Then
\[\sum_{t=1}^T{\II{i\in A_t,~ N_{i,t-1}\leq f_i(\Delta_t)}}{\Delta_t} \leq
{f_i(\Delta_{i,\min})}\Delta_{i,\min}+
\int_{\Delta_{i,\min}}^{\Delta_{i,\max}}{f_i(x)\emph{d}x}.\]
  \label{prop:sumcounter}
 \end{proposition}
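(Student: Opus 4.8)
The statement is a purely deterministic (pathwise) counting inequality, so no probability is involved: I would fix the realized sequence of actions $A_1,\dots,A_T$ and argue combinatorially. First I would discard the rounds that contribute nothing: whenever $\Delta_t=0$ the summand vanishes, and for the remaining rounds with $i\in A_t$ and $\Delta_t>0$ we automatically have $\Delta_t\in[\Delta_{i,\min},\Delta_{i,\max}]$ by definition of $\Delta_{i,\min}$ (minimal gap of an action containing $i$) and $\Delta_{i,\max}$. Call $\cC$ the set of \emph{contributing} rounds, i.e.\ those with $i\in A_t$, $\Delta_t>0$ and $N_{i,t-1}\le f_i(\Delta_t)$.

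The key structural observation is that $N_{i,t-1}=\sum_{t'<t}\II{i\in A_{t'}}$ increases by exactly $1$ at each pull of $i$, so along the rounds where $i$ is played the counter realizes each integer $0,1,2,\dots$ exactly once. Consequently distinct contributing rounds carry distinct counter values. I would exploit this through a layer-cake (Fubini) decomposition of the gap: writing $\Delta_t=\int_0^{\infty}\II{\delta\le \Delta_t}\,\mathrm{d}\delta$ gives
\[
\sum_{t\in\cC}\Delta_t=\int_0^\infty R(\delta)\,\mathrm{d}\delta,\qquad R(\delta)\triangleq\abs{\set{t\in\cC:\ \Delta_t\ge \delta}}.
\]
The counting bound on $R(\delta)$ is where the monotonicity of $f_i$ enters: for every $t\in\cC$ with $\Delta_t\ge\delta$ we have $N_{i,t-1}\le f_i(\Delta_t)\le f_i(\delta)$, and since these counters are distinct non-negative integers bounded by $f_i(\delta)$, there can be at most $\floor{f_i(\delta)}+1$ of them, so $R(\delta)\le f_i(\delta)+1$. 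Moreover $R(\delta)=\abs{\cC}$ is constant for $\delta\le\Delta_{i,\min}$ (all contributing gaps exceed $\Delta_{i,\min}$) and $R(\delta)=0$ for $\delta>\Delta_{i,\max}$.

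I would then split the integral at $\Delta_{i,\min}$ and $\Delta_{i,\max}$. On $(0,\Delta_{i,\min}]$ the integrand equals $\abs{\cC}\le f_i(\Delta_{i,\min})$ (using $\Delta_t\ge\Delta_{i,\min}\Rightarrow N_{i,t-1}\le f_i(\Delta_{i,\min})$ and distinctness), giving the rectangular term $f_i(\Delta_{i,\min})\Delta_{i,\min}$; on $[\Delta_{i,\min},\Delta_{i,\max}]$ the bound $R(\delta)\le f_i(\delta)$ yields $\int_{\Delta_{i,\min}}^{\Delta_{i,\max}}f_i(\delta)\,\mathrm{d}\delta$. Summing the two pieces gives the claim. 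The main obstacle is precisely the discreteness bookkeeping in this last step: the crude distinctness bound produces an additive $+1$ per level (hence an extra $\Delta_{i,\min}$, $\Delta_{i,\max}$), so to land exactly on $f_i(\Delta_{i,\min})\Delta_{i,\min}+\int_{\Delta_{i,\min}}^{\Delta_{i,\max}}f_i$ one must account for the counter values more carefully — essentially pairing each contributing gap with its counter value via the generalized inverse $f_i^{-1}(\ell)=\sup\set{x:f_i(x)\ge\ell}$ and comparing the resulting monotone Riemann sum $\sum_{\ell}\min(f_i^{-1}(\ell),\Delta_{i,\max})$ to the area under $f_i$, which by the inverse-function/area identity equals $\int_0^{\Delta_{i,\max}}f_i$. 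This matching of the staircase sum to the integral, as in \citet{chen13a}, is the delicate part; everything else is routine.
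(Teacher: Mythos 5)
Your skeleton is a genuinely different route from the paper's: the paper partitions the counter axis into half-open cells $(f_i(\Delta_{i,k-1}),f_i(\Delta_{i,k})]$ indexed by the finitely many realized gap values $\Delta_{i,\max}=\Delta_{i,1}\geq\dots\geq\Delta_{i,K_i}=\Delta_{i,\min}$, bounds the rounds per cell by the integer count $\floor{f_i(\Delta_{i,k})}-\floor{f_i(\Delta_{i,k-1})}$, and finishes with an Abel summation recognized as a right Riemann sum; you instead layer-cake in the gap variable and count levels by distinctness of counters. However, your proof is incomplete precisely at the step you yourself flag as ``the delicate part,'' and as sketched that step does not close. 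Set $g(y)\triangleq\min\pa{f_i^{-1}(y),\Delta_{i,\max}}$ and $L\triangleq\floor{f_i(\Delta_{i,\min})}$. Your counter pairing bounds the sum by the staircase $\sum_{\ell=0}^{L}g(\ell)$, and the correct area to compare against is $\int_0^{f_i(\Delta_{i,\min})}g(y)\,\mathrm{d}y=f_i(\Delta_{i,\min})\Delta_{i,\min}+\int_{\Delta_{i,\min}}^{\Delta_{i,\max}}f_i(x)\,\mathrm{d}x$ --- not $\int_0^{\Delta_{i,\max}}f_i$ as you wrote, which can be infinite under the mere integrability hypothesis on $[\Delta_{i,\min},\Delta_{i,\max}]$ (e.g.\ $f_i(x)\propto x^{-2}$, exactly the regime of the paper's applications). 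Since $g$ is non-increasing, one indeed has $\sum_{\ell=1}^{L}g(\ell)\leq\int_0^{L}g$, but the $\ell=0$ term $g(0)=\Delta_{i,\max}$ survives: a left-endpoint staircase dominates the area, not the reverse, so your route delivers the claim only with an extra additive $\Delta_{i,\max}$ (similarly your claim $\abs{\cC}\leq f_i(\Delta_{i,\min})$ on $(0,\Delta_{i,\min}]$ only follows as $\floor{f_i(\Delta_{i,\min})}+1$, as you half-acknowledge).

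This leftover is not bookkeeping noise: it is the round where $i$ is pulled with $N_{i,t-1}=0$, which satisfies the event for any $f_i\geq 0$ and may carry gap $\Delta_{i,\max}$ (with $f_i\equiv 3/2$ and every suboptimal action containing $i$ of gap $1$, the left-hand side reaches $2$ from counters $0$ and $1$ while the right-hand side is $3/2$). The paper's proof sidesteps it through its half-open cells: since $f_i(\Delta_{i,0})=0$, the union of cells is $(0,f_i(\Delta_{i,k_t})]$, which excludes the counter value $0$, so the decomposition \eqref{rangebreak} silently discards the first pull and counts at most $\floor{f_i(\Delta_{i,\min})}$ rounds in total (strictly speaking, \eqref{rangebreak} is an equality only over rounds with $N_{i,t-1}\geq 1$; in the regret theorems such first pulls are anyway absorbed by the $\Delta_{\max}(1+n)$ terms). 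To complete your argument, treat the counter-$0$ round separately and start your staircase at $\ell=1$; then $\sum_{\ell=1}^{L}g(\ell)\leq\int_0^{f_i(\Delta_{i,\min})}g$ recovers exactly the stated bound, and your layer-cake/inverse-function version is arguably cleaner than the paper's proof since it avoids enumerating the realized gap values $\Delta_{i,1},\dots,\Delta_{i,K_i}$ and the explicit Abel summation.
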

\begin{proof}
Consider $\Delta_{i,\max}=\Delta_{i,1}\geq \Delta_{i,2}\geq \dots \geq\Delta_{i,K_i}=\Delta_{i,\min}$ being all possible values for $\Delta_t$ when $i\in A_t$.
We define a dummy gap $\Delta_{i,0}=\infty$ and let $f_i\pa{\Delta_{i,0}}=0$.
In \eqref{rangebreak}, we first break the range $(0,f_i(\Delta_t)]$ of the counter $N_{i,t-1}$ into sub intervals: \[(0,f_i(\Delta_t)]=(f_i(\Delta_{i,0}),f_i(\Delta_{i,1})]\cup\dots\cup (f_i(\Delta_{i,k_t-1}),f_i(\Delta_{i,k_t})],\] where $k_t$ is the index such that $\Delta_{i,k_t} = \Delta_t$. {This index $k_t$ exists by assumption that the subdivision contains all possible values for $\Delta_t$ when $i\in A_t$. Notice that in \eqref{rangebreak}, we do not explicitly use $k_t$, but instead sum over all $k\in [K_i]$ and filter against the event $\set{\Delta_{i,k}\geq \Delta_t}$, which is equivalent to summing over $k\in [k_t].$}  

\begin{align}
&\sum_{t=1}^T{\II{i\in A_t,~ N_{i,t-1}\leq f_i(\Delta_t)}}{\Delta_t}\nonumber
\\&=\label{rangebreak}
\sum_{t=1}^T\sum_{k=1}^{K_i}\II{i\in A_t,~f_i(\Delta_{i,k-1})< N_{i,t-1}\leq f_i(\Delta_{i,k}),\Delta_{i,k}\geq \Delta_t}{\Delta_t}.
\end{align}
Over each event that $N_{i,t-1}$ belongs to the interval $(f_i(\Delta_{i,k-1}),f_i(\Delta_{i,k})]$, we upper bound the suffered gap $\Delta_t$ by $\Delta_{i,k}$. 

\begin{align}
\eqref{rangebreak}&\leq\label{uppergap}
\sum_{t=1}^T\sum_{k=1}^{K_i}\II{i\in A_t,~f_i(\Delta_{i,k-1})< N_{i,t-1}\leq f_i(\Delta_{i,k}),\Delta_{i,k}\geq \Delta_t}{\Delta_{i,k}}.
\end{align}
Then, we further upper bound the summation by adding events that $N_{i,t-1}$ belongs to the remaining intervals
$(f_i(\Delta_{i,k-1}),f_i(\Delta_{i,k})]$ for $k_t<k\leq K_i$, associating them to a suffered gap $\Delta_{i,k}$. This is equivalent to removing the filtering against the event $\set{\Delta_{i,k}\geq \Delta_t}$. 

\begin{align}
\eqref{uppergap}&\leq\label{addremainingintervals}
\sum_{t=1}^T\sum_{k=1}^{K_i}\II{i\in A_t,~f_i(\Delta_{i,k-1})< N_{i,t-1}\leq f_i(\Delta_{i,k})}{\Delta_{i,k}}.\end{align}
Now, we invert the summation over $t$ and the one over $k$. 

\begin{align}
\eqref{addremainingintervals}&=\label{sumswitch}
\sum_{k=1}^{K_i}\sum_{t=1}^T\II{i\in A_t,~f_i(\Delta_{i,k-1})< N_{i,t-1}\leq f_i(\Delta_{i,k})}{\Delta_{i,k}}.
\end{align}
For each $k\in[K_i]$, the number of times $t\in [T]$ that the counter $N_{i,t-1}$ belongs to $(f_i(\Delta_{i,k-1}),f_i(\Delta_{i,k})]$ can be upper bounded  by the number of integers in this interval. This is due to the event $\set{i\in A_t}$, imposing that $N_{i,t-1}$ is incremented, so $N_{i,t-1}$ cannot be worth the same integer for two different times $t$ satisfying $i\in A_t$. We use the fact that for all $x,y\in \R$, $x\leq y$, the number of integers in the interval $(x,y]$ is exactly $\floor{y}-\floor{x}$. 

\begin{align}
\eqref{sumswitch}&\leq\label{intcount}
\sum_{k=1}^{K_i}\pa{\floor{f_i(\Delta_{i,k})}-\floor{f_i(\Delta_{i,k-1})}}{\Delta_{i,k}}.
\end{align}
We then simply expand the summation, and some terms are cancelled (remember that $f_i\pa{\Delta_{i,0}}=0$).

\begin{align}
\eqref{intcount}&=\label{expand}
\floor{f_i(\Delta_{i,K_i})}\Delta_{i,K_i}+
\sum_{k=1}^{K_i-1}\floor{f_i(\Delta_{i,k})}\pa{\Delta_{i,k}-\Delta_{i,k+1}}
\end{align}
We use $\floor{x}\leq x$ for all $x\in \R$. Finally, we recognize a right Riemann sum, and use the fact that $f_i$ is non increasing
to upper bound each $f_i(\Delta_{i,k})\pa{\Delta_{i,k}-\Delta_{i,k+1}}$ by $\int_{\Delta_{i,k+1}}^{\Delta_{i,k}}f_i(x)\text{d}x$, for all $k\in [K_i-1]$.
\begin{align}
\eqref{expand}&\leq\label{upperfloor}
{f_i(\Delta_{i,K_i})}\Delta_{i,K_i}+
\sum_{k=1}^{K_i-1}{f_i(\Delta_{i,k})}\pa{\Delta_{i,k}-\Delta_{i,k+1}}
\\&\leq\label{Riemann}
{f_i(\Delta_{i,K_i})}\Delta_{i,K_i}+
\int_{\Delta_{i,K_i}}^{\Delta_{i,1}}{f_i(x)\text{d}x}.
\end{align}
\end{proof}

There also exist a version for the $\ell_1$-norm error.
\begin{theorem}[Regret bound for $\ell_1$-norm error]\label{thm:l1error}For all $i\in [n]$, let ${\beta_{i,T}}\in  \R_+$.
For $t\geq 1$,
consider the event
\[\fA_t\triangleq\set{\Delta_t\leq
\norm{\sum_{i\in A_t}\frac{\beta_{i,T}^{1/2}\be_i}{N_{i,t-1}^{1/2}}}_1}.\]
Then, \[\sum_{t=1}^T\II{\fA_t}\Delta_t\leq \sum_{i\in [n]}{\beta_{i,T}}{{8m\Delta_{i,\min}}^{-1}}.\]
\end{theorem}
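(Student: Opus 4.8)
The plan is to control the sum directly, charging each bad round to the base arms whose confidence width is large, and then bounding a sum of widths per arm by a telescoping count of its play counter. Write $w_{i,t}\triangleq\sqrt{\beta_{i,T}/N_{i,t-1}}$, so that on $\fA_t$ we have $\Delta_t\le\sum_{i\in A_t}w_{i,t}$ (the $\ell_1$ norm of a nonnegative vector is the plain sum). I would restrict to rounds with $\Delta_t>0$, since the others contribute nothing, and note that for such rounds $\Delta_t\ge\Delta_{i,\min}$ for every $i\in A_t$.

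The first and crucial step is to split $A_t$ at the scale $\Delta_t/(2m)$. Let $H_t\triangleq\set{i\in A_t:~w_{i,t}\ge\Delta_t/(2m)}$. Since $\abs{A_t}\le m$, the arms outside $H_t$ contribute strictly less than $m\cdot\Delta_t/(2m)=\Delta_t/2$, so $\sum_{i\in H_t}w_{i,t}\ge\Delta_t/2$, i.e.\ $\Delta_t\le 2\sum_{i\in H_t}w_{i,t}$. This is exactly what avoids the lossy single-arm pigeonhole argument (which would only produce one arm with $N_{i,t-1}\le m^2\beta_{i,T}/\Delta_t^2$ and hence an $m^2$ bound): instead one keeps \emph{all} the high-width arms and pays only a constant factor $2$.

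The second step is to sum over rounds and swap the order of summation,
\[\sum_{t}\II{\fA_t}\Delta_t\le 2\sum_{i\in[n]}\sum_{t}\II{i\in H_t}w_{i,t}.\]
Fixing $i$, on any round with $i\in H_t$ we have $w_{i,t}\ge\Delta_t/(2m)\ge\Delta_{i,\min}/(2m)$, hence $N_{i,t-1}\le 4m^2\beta_{i,T}/\Delta_{i,\min}^2=:N_i^\ast$. Because $N_{i,t-1}$ strictly increases each time $i$ is played, the values of $N_{i,t-1}$ over the counted rounds are distinct positive integers not exceeding $N_i^\ast$, so
\[\sum_{t}\II{i\in H_t}w_{i,t}\le\sum_{s=1}^{\floor{N_i^\ast}}\sqrt{\frac{\beta_{i,T}}{s}}\le 2\sqrt{\beta_{i,T}N_i^\ast}=\frac{4m\beta_{i,T}}{\Delta_{i,\min}}.\]
Combining the two displays yields $\sum_t\II{\fA_t}\Delta_t\le\sum_{i\in[n]}8m\beta_{i,T}/\Delta_{i,\min}$, which is the stated bound. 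One can equivalently route the per-arm count through Proposition~\ref{prop:sumcounter}, but the direct width sum is cleaner since here the natural charge is $w_{i,t}$ rather than $\Delta_t$.

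The main obstacle is precisely the first step: securing the linear dependence on $m$ rather than the quadratic one the naive argument produces. The delicate point is the choice of threshold $\Delta_t/(2m)$, which is exactly calibrated so that the low-width arms forfeit at most half of $\Delta_t$ while the surviving high-width arms still have counters capped by a $\Delta_{i,\min}$-dependent constant. A secondary, minor technicality is ensuring $N_{i,t-1}\ge 1$ on the counted rounds, which holds thanks to the initialization phase (playing each arm once) of the policies to which this lemma is applied; without it one simply starts the counter sum at the first genuine observation.
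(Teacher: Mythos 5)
Your proof is correct and follows essentially the same route as the paper: your split of $A_t$ at the threshold $\Delta_t/(2m)$ with a factor-$2$ loss is precisely the paper's reverse-amortisation step, which yields $\Delta_t\le\sum_{i\in A_t}\II{2\beta_{i,T}^{1/2}N_{i,t-1}^{-1/2}\ge\Delta_t/m}\,2\beta_{i,T}^{1/2}N_{i,t-1}^{-1/2}$. Your per-arm accounting (distinct integer counter values capped by $4m^2\beta_{i,T}/\Delta_{i,\min}^2$, with the width sum bounded by $2\sqrt{\beta_{i,T}N_i^{\ast}}$ via an integral comparison) is exactly the content of the paper's Proposition~\ref{prop:sumcounterter} applied with $\alpha_i=1/2$, merely inlined.
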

\begin{proof}Let $t\geq 1$.
The first step is the reverse amortisation technique, that allows us to modify the upper bound on $\Delta_t$ in such a way that indices $i$ such that $N_{i,t-1}$ is high enough are removed.
 Assuming that $\fA_t$ holds, we get
\begin{align*}
    \Delta_t\leq {\sum_{i\in A_t}\II{\frac{2\beta_{i,T}^{1/2}}{N_{i,t-1}^{1/2}}\geq \frac{\Delta_t}{{m}}}\frac{2\beta_{i,T}^{1/2}}{N_{i,t-1}^{1/2}} }
\end{align*}
Now, we apply Proposition~\ref{prop:sumcounterter}.
In summary, we have that $\sum_{t=1}^T\II{\fA_t}\Delta_t$ is upper bounded by
\[\sum_{i\in [n]}{\beta_{i,T}}{{8m\Delta_{i,\min}}^{-1}}.\]
\end{proof}

\begin{proposition} Let $i\in [n]$ and $f_i(x)=\beta_{i,T}x^{-1/\alpha_i}$, $\alpha_i\in (0,1]$ and $\beta_{i,T}\geq 0$. Then
\begin{align*}\sum_{t=1}^T{\II{i\in A_t,~\delta_t\neq 0,~N_{i,t-1}\leq f_i(\delta_t)}}{f_i^{-1}\pa{N_{i,t-1}}} \leq \delta_{i,\min}^{1-1/\alpha_i}\frac{\beta_{i,T}}{1-\alpha_i}\II{\alpha_i<1}\\+ \II{\alpha_i=1}\beta_{i,T}\pa{1+\log\pa{\frac{\beta_{i,T}}{\delta_{i,\min}}}}
.\end{align*}
  \label{prop:sumcounterter}
 \end{proposition}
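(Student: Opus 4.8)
The statement to prove is Proposition~\ref{prop:sumcounterter}, a counting bound involving the function $f_i(x)=\beta_{i,T}x^{-1/\alpha_i}$. Let me sketch the strategy.

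\textbf{Overall approach.} The plan is to reduce this proposition to the already-established Proposition~\ref{prop:sumcounter}, which gives a clean bound of the form ${f_i(\Delta_{i,\min})}\Delta_{i,\min}+\int_{\Delta_{i,\min}}^{\Delta_{i,\max}}f_i(x)\,\mathrm{d}x$ for any non-increasing integrable $f_i$. The key observation is that the sum in Proposition~\ref{prop:sumcounterter} is written in terms of $f_i^{-1}(N_{i,t-1})$ rather than $\Delta_t$, but the indicator $\II{N_{i,t-1}\leq f_i(\delta_t)}$ is equivalent (since $f_i$ is strictly decreasing and invertible on $\R_+$) to $\II{f_i^{-1}(N_{i,t-1})\leq \delta_t}$, which combined with $\delta_t\neq 0$ means the summand $f_i^{-1}(N_{i,t-1})$ is bounded above by $\delta_t$. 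So first I would rewrite the summand: under the event $\{N_{i,t-1}\leq f_i(\delta_t)\}$ we have $f_i^{-1}(N_{i,t-1})\le \delta_t$, hence the sum is at most $\sum_{t}\II{i\in A_t,~N_{i,t-1}\leq f_i(\delta_t)}\,\delta_t$, which is exactly the left-hand side of Proposition~\ref{prop:sumcounter} with gaps $\delta_t$.

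\textbf{Main steps.} After applying Proposition~\ref{prop:sumcounter}, I would obtain the bound
\[
\sum_{t=1}^T \II{i\in A_t,~N_{i,t-1}\le f_i(\delta_t)}\,f_i^{-1}(N_{i,t-1})\le f_i(\delta_{i,\min})\delta_{i,\min}+\int_{\delta_{i,\min}}^{\delta_{i,\max}}f_i(x)\,\mathrm{d}x.
\]
The remaining work is purely a calculation with $f_i(x)=\beta_{i,T}x^{-1/\alpha_i}$. The first term is $\beta_{i,T}\delta_{i,\min}^{-1/\alpha_i}\delta_{i,\min}=\beta_{i,T}\delta_{i,\min}^{1-1/\alpha_i}$. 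For the integral, I would split into the two cases. When $\alpha_i<1$, the exponent $-1/\alpha_i<-1$, so $\int_{\delta_{i,\min}}^{\delta_{i,\max}}\beta_{i,T}x^{-1/\alpha_i}\,\mathrm{d}x\le \int_{\delta_{i,\min}}^{\infty}\beta_{i,T}x^{-1/\alpha_i}\,\mathrm{d}x=\beta_{i,T}\frac{\alpha_i}{1-\alpha_i}\delta_{i,\min}^{1-1/\alpha_i}$; adding the first term and bounding $1+\frac{\alpha_i}{1-\alpha_i}=\frac{1}{1-\alpha_i}$ gives $\delta_{i,\min}^{1-1/\alpha_i}\frac{\beta_{i,T}}{1-\alpha_i}$. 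When $\alpha_i=1$, we have $f_i(x)=\beta_{i,T}/x$, the first term is $\beta_{i,T}$, and $\int_{\delta_{i,\min}}^{\delta_{i,\max}}\beta_{i,T}x^{-1}\,\mathrm{d}x=\beta_{i,T}\log(\delta_{i,\max}/\delta_{i,\min})$; bounding $\delta_{i,\max}\le\beta_{i,T}$ (or rather noting the summand vanishes once $f_i(\delta_t)<1$, so only gaps with $\delta_{i,\max}\le\beta_{i,T}$ contribute) yields $\beta_{i,T}\big(1+\log(\beta_{i,T}/\delta_{i,\min})\big)$.

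\textbf{Anticipated obstacle.} The genuinely delicate point is the $\alpha_i=1$ logarithmic case, specifically justifying the replacement of $\delta_{i,\max}$ by $\beta_{i,T}$ inside the logarithm. I would argue that any round contributing to the sum must satisfy $N_{i,t-1}\le f_i(\delta_t)=\beta_{i,T}/\delta_t$ with $N_{i,t-1}\ge 1$ (or $\ge 0$, in which case the contributing gaps must still obey $\delta_t\le\beta_{i,T}$ for the counting to be nontrivial), hence effectively $\delta_t\le\beta_{i,T}$; so the effective upper limit of integration can be taken as $\beta_{i,T}\wedge\delta_{i,\max}$, giving $\log(\beta_{i,T}/\delta_{i,\min})$. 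Care is also needed that $f_i$ is indeed non-increasing and integrable on the relevant range (immediate for $x^{-1/\alpha_i}$ on $[\delta_{i,\min},\delta_{i,\max}]$ with $\delta_{i,\min}>0$), so that Proposition~\ref{prop:sumcounter} genuinely applies. Apart from this bookkeeping, the proof is a direct specialization and the integral evaluations are routine.
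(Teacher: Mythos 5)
Your reduction to Proposition~\ref{prop:sumcounter} rests on an inverted inequality, and this is a genuine gap. Since $f_i(x)=\beta_{i,T}x^{-1/\alpha_i}$ is \emph{decreasing}, its inverse $f_i^{-1}(y)=\pa{\beta_{i,T}/y}^{\alpha_i}$ is also decreasing, so applying $f_i^{-1}$ to both sides of $N_{i,t-1}\leq f_i(\delta_t)$ \emph{reverses} the inequality: the event is equivalent to $f_i^{-1}\pa{N_{i,t-1}}\geq \delta_t$, not $\leq$. Under the triggering event the summand (the bonus) dominates the gap --- this is precisely why the proposition is stated separately from Proposition~\ref{prop:sumcounter}, whose summand is the gap itself. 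Consequently your claimed inequality $\sum_t \II{i\in A_t,\,\delta_t\neq 0,\,N_{i,t-1}\leq f_i(\delta_t)}f_i^{-1}\pa{N_{i,t-1}}\leq \sum_t\II{i\in A_t,\,N_{i,t-1}\leq f_i(\delta_t)}\delta_t$ is false: take $\alpha_i=1$, $\beta_{i,T}=100$, and a single round with $N_{i,t-1}=1$ and $\delta_t=0.1$; the event holds (since $f_i(\delta_t)=1000\geq 1$), the left side is $f_i^{-1}(1)=100$, but the right side is $0.1$.

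The paper's proof works directly with the counter instead: it bounds $f_i(\delta_t)\leq f_i(\delta_{i,\min})$ inside the indicator, uses that $i\in A_t$ forces $N_{i,t-1}$ to take each integer value at most once, so the sum is at most $\sum_{s=1}^{\floor{f_i(\delta_{i,\min})}}f_i^{-1}(s)\leq f_i^{-1}(1)+\int_1^{f_i(\delta_{i,\min})}f_i^{-1}(s)\,\mathrm{d}s$, and then evaluates this integral of $f_i^{-1}$ (not of $f_i$) in the two cases. The reason your final formulas nonetheless match is an integration-by-parts identity: $f_i^{-1}(1)+\int_1^{f_i(\delta_{i,\min})}f_i^{-1}(s)\,\mathrm{d}s = \delta_{i,\min}f_i(\delta_{i,\min})+\int_{\delta_{i,\min}}^{f_i^{-1}(1)}f_i(x)\,\mathrm{d}x$, so the bonus-side and gap-side \emph{upper bounds} coincide for this $f_i$ (and your observation that the effective upper limit is $f_i^{-1}(1)=\beta_{i,T}^{\alpha_i}$ is the correct counterpart of this). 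But that identity relates the two bounds, not the two sums, and cannot substitute for the invalid pointwise domination step; repairing your argument amounts to replacing the reduction by the counting-plus-Riemann-sum argument on $f_i^{-1}$, which is exactly the paper's proof.
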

\begin{proof}
We upper bound $f_i\pa{\delta_t}$ by $f_i\pa{\delta_{i,\min}}$ directly in the event, and then simply count the number of integers in $(0,f_i\pa{\delta_{i,\min}}]$. For each such integer $s$, the regret suffered is $f_i^{-1}\pa{s}$. We then upper bound the sum by an integral (using the fact that $f_i^{-1}$ is decreasing), to get the final result.
\begin{align*}\sum_{t=1}^T{\II{i\!\in\! A_t,~\delta_t\!\neq\! 0,~N_{i,t-1}\!\leq\! f_i(\delta_t)}}{f_i^{-1}\pa{N_{i,t-1}}} 
&\leq
\sum_{t=1}^T{\II{i\!\in\! A_t,~N_{i,t-1}\!\leq\! f_i(\delta_{i,\min})}}{f_i^{-1}\pa{N_{i,t-1}}} \\&
\leq 
\sum_{s=1}^{\floor{f_i(\delta_{i,\min})}}
f_i^{-1}(s)\\&\leq f_i^{-1}(1) +
\int_1^{f_i(\delta_{i,\min})}f_i^{-1}(s)\text{d} s
\\&=
\beta_{i,T}^{\alpha_i}+
 \int_1^{\beta_{i,T}\delta_{i,\min}^{-1/\alpha_i}}\beta_{i,T}^{\alpha_i} s^{-\alpha_i}\text{d} s\\&\leq
\II{\alpha_i<1}\delta_{i,\min}^{1-1/\alpha_i}\frac{\beta_{i,T}}{1-\alpha_i} \\&+ \II{\alpha_i=1}\beta_{i,T}\pa{1+\log\pa{\frac{\beta_{i,T}}{\delta_{i,\min}}}}
.\end{align*}
\end{proof}

\begin{proposition}[Regret bound for a composed bonus] Let $K\in \N^*$.
For all $t\geq 1$,
consider the event
$$\fA_t\triangleq\set{\Delta_t\leq
\sum_{k\in [K]}B_{k,t}},$$
for some $B_{k,t}\geq 0$.
Then,  the event-filtered regret $\EE{\sum_{t=1}^T \Delta_t\II{ \fA_t}}$ is upper bounded by
$$\sum_{k\in [K]}\EE{\sum_{t\in [T]}\Delta_t\II{\Delta_t\leq KB_{k,t}}}.
$$
\label{prop:composed}
\end{proposition}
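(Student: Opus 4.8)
The plan is to reduce this probabilistic statement to an elementary deterministic (pathwise) inequality between indicator functions, and then to take expectations and reorder the finite sum over $k\in[K]$. All the real content lives at the level of a single fixed realization and a single round $t$; the expectation is pure bookkeeping.

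First I would fix a sample path and a round $t\in[T]$ and establish the pointwise bound
\[\Delta_t\,\II{\fA_t}\;\le\;\sum_{k\in[K]}\Delta_t\,\II{\Delta_t\le K B_{k,t}}.\]
The only case requiring an argument is when $\fA_t$ holds with $\Delta_t>0$. Here I would invoke a pigeonhole/averaging argument: if one had $B_{k,t}<\Delta_t/K$ for \emph{every} $k\in[K]$, then summing would give $\sum_{k\in[K]}B_{k,t}<\Delta_t$, contradicting the defining inequality $\Delta_t\le\sum_{k\in[K]}B_{k,t}$ of $\fA_t$. Hence there exists at least one index $k^\star$ with $B_{k^\star,t}\ge\Delta_t/K$, i.e.\ $\Delta_t\le K B_{k^\star,t}$, so the corresponding indicator on the right-hand side equals $1$ and the inequality holds. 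When $\Delta_t=0$ both sides vanish, and when $\fA_t$ fails the left-hand side is zero, so the bound is trivial in those cases.

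Having the pointwise inequality, I would sum over $t\in[T]$, take expectations, and use linearity together with the finiteness of $[K]$ to exchange the order of the two (finite) summations, obtaining
\[\EE{\sum_{t=1}^T \Delta_t\II{\fA_t}}\;\le\;\sum_{k\in[K]}\EE{\sum_{t\in[T]}\Delta_t\II{\Delta_t\le K B_{k,t}}},\]
which is exactly the claimed bound.

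There is no genuine obstacle here: the proposition is purely a splitting device that lets one analyze each additive component $B_{k,t}$ of a composed confidence bonus in isolation (each at the inflated scale $K B_{k,t}$, as used for $K=2$ in the concentration step of \textsc{clip cts-gaussian}). The only point worth stating carefully is that the factor $K$ is precisely the price paid for the pigeonhole step, and that the finiteness of $[K]$ is what makes the expectation exchange immediate, avoiding any integrability concern.
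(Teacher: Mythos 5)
Your proof is correct and matches the paper's argument exactly: the same pigeonhole step (under $\fA_t$, some $k$ must satisfy $\Delta_t\leq KB_{k,t}$) yields the pointwise inequality $\Delta_t\II{\fA_t}\leq \sum_{k\in [K]}\Delta_t\II{\Delta_t\leq KB_{k,t}}$, after which summing and taking expectations finishes the proof. Your write-up is just a more explicit version of the paper's two-line argument, spelling out the contradiction and the degenerate cases.
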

\begin{proof}
From $\fA_t$, there must exists one $k$ such that $\Delta_t\leq
KB_{k,t}$. So $1\leq \sum_{k\in [K]} \II{\Delta_t\leq KB_{k,t}}$, i.e., $\Delta_t\leq \sum_{k\in [K]} \Delta_t\II{\Delta_t\leq KB_{k,t}}$.
\end{proof}

\end{document}